\definecolor{darkgreen}{rgb}{0.04,0.63,0.34}
\newcommand{\Medn}{M_{\epsilon,\delta}(n)}
\newcommand{\Redn}{R_{\epsilon,\delta}(n)}
\newtheorem{Theorem}{Theorem}
\newtheorem{Theorem*}{Theorem}
\newtheorem{Claim*}[Theorem]{Claim}
\newtheorem{Corollary}[Theorem]{Corollary}
\newtheorem{CounterExample*}{$\overline{\hbox{\bf Example}}$}
\newtheorem{Example*}[Theorem]{Example}
\newtheorem{Intuition*}[Theorem]{Intuition}
\newtheorem{Joke*}[Theorem]{Joke}
\newtheorem{Lemma}[Theorem]{Lemma}
\newtheorem{Lemma*}[Theorem]{Lemma}
\newtheorem{Open problem}[Theorem]{Open problem}
\newtheorem{Question*}[Theorem]{Question}
\def \bSubexa    {\begin{subexa}}
\newcommand{\ignore}[1]{}
\def \cO     {{\cal O}}
\definecolor{light}{gray}{.75}
\def \upto  {{,}\ldots{,}}
\def \Sets#1{{\left\{#1\right\}}}
\def \ceil#1{{\lceil{#1}\rceil}}
\def \floor#1{{\lfloor{#1}\rfloor}}
\def \Paren#1{{\left({#1}\right)}}
\newcommand{\ed}{\stackrel{\mathrm{def}}{=}}
\def \half    {{\frac12}}
\def\ignore#1{}
\newcommand{\bi}{\begin{itemize}}
\newcommand{\ei}{\end{itemize}}
\def\orpro{\mathop{\mathchoice
   {\vee\kern-.49em\raise.7ex\hbox{$\cdot$}\kern.4em}
   {\vee\kern-.45em\raise.63ex\hbox{$\cdot$}\kern.2em}
   {\vee\kern-.4em\raise.3ex\hbox{$\cdot$}\kern.1em}
   {\vee\kern-.35em\raise2.2ex\hbox{$\cdot$}\kern.1em}}\limits}
\def\andpro{\mathop{\mathchoice
 {\wedge\kern-.46em\lower.69ex\hbox{$\cdot$}\kern.3em}
 {\wedge\kern-.46em\lower.58ex\hbox{$\cdot$}\kern.25em}
 {\wedge\kern-.38em\lower.5ex\hbox{$\cdot$}\kern.1em}
 {\wedge\kern-.3em\lower.5ex\hbox{$\cdot$}\kern.1em}}\limits}
\def\simge{\mathrel{%
   \rlap{\raise 0.511ex \hbox{$>$}}{\lower 0.511ex \hbox{$\sim$}}}}
\def\simle{\mathrel{
   \rlap{\raise 0.511ex \hbox{$<$}}{\lower 0.511ex \hbox{$\sim$}}}}
\newcommand{\BSR}{\textsc{Binary-Search-Ranking}}
\newcommand{\IBR}{\textsc{Interval-Binary-Search}}
\newcommand{\BR}{\textsc{Binary-Search}}
\newcommand{\BFBR}{\textbf{\textsc{Binary-Search}}}
\newcommand{\BFIBR}{\textbf{\textsc{Interval-Binary-Search}}}
\newcommand{\RankX}{\textsc{Rank-$x$}}
\newcommand{\ALOOP}[1]{\ALC@it\algorithmicloop\ #1%
  \begin{ALC@loop}}
\newcommand{\ENDALOOP}{\end{ALC@loop}\ALC@it\algorithmicendloop}
\begin{document}
\title{
Maximum Selection and Ranking under Noisy Comparisons
}
\author{
\begin{tabular}[t]{c@{\extracolsep{10em}}c}
  Moein Falahatgar & Alon Orlitsky\\
 UC San Diego & UC San Diego\\ 
\small \texttt{moein@ucsd.edu} & \small \texttt{alon@ucsd.edu}
\end{tabular}
\vspace{2ex}\\
\begin{tabular}[t]{c@{\extracolsep{6.5em}}c}
   Venkatadheeraj Pichapati & Ananda Theertha Suresh\\
  UC San Diego & Google Research\\
 \small \texttt{dheerajpv7@ucsd.edu} & \small\texttt{theertha@google.com} 
\end{tabular}
\vspace{2ex}\\  
}
\maketitle
\begin{abstract}
We consider $(\epsilon,\delta)$-PAC maximum-selection
and ranking for general probabilistic models whose
comparisons probabilities satisfy strong stochastic transitivity
and stochastic triangle inequality.
Modifying the popular knockout tournament, we propose 
a maximum-selection algorithm that uses 
$\cO\Paren{\frac{n}{\epsilon^2} \log \frac1{\delta}}$ comparisons,
a number tight up to a constant factor. 
We then derive a general framework that improves the performance
of many ranking algorithms, and combine it 
with merge sort and binary search to obtain a
ranking algorithm that uses 
$\cO\Paren{\frac{n\log n (\log \log n)^3}{\epsilon^2}}$ comparisons for
any $\delta\ge\frac1n$, a number optimal up to a $(\log \log n)^3$ factor.
\end{abstract}

\section{Introduction}
\label{sec:intro}
\subsection{Background}
Maximum selection and sorting using pairwise comparisons are
computer-science staples taught in most introductory classes and used
in many applications. In fact, sorting, also known as \emph{ranking},
has been claimed to utilize 25\% of computer cycles worldwide~\cite{book}.

In many applications, the pairwise comparisons produce only random
outcomes. For example, sports tournaments rank teams based on pairwise
matches, but match outcomes are probabilistic in nature. Patented by
Microsoft, \emph{TrueSkill}~\cite{Ralf06} is such a ranking system for Xbox gamers.
Another important application is online advertising. Prominent web
pages devote precious little space to advertisements, limiting
companies like Google, Microsoft, or Yahoo!  to present a typical user
with just a couple of ads, of which the user selects at most
one. Based on these small random comparisons, the company would like
to rank the ads according to their appeal~\cite{FilipT07, FilipMT08}.

This and related applications have brought about a resurgence of
interest in maximum selection and ranking using noisy comparisons.
Several noise models were considered, including the popular 
Plackett-Luce model~\cite{RobinL75,Ducan05}.
Yet even for such specific models, the complexity of maximum selection
was known only up to a $\log n$ factor and the complexity of
ranking was known only up to a $\log n$ factor. We consider
a broader class of models 
and propose algorithms that are optimal up to a constant factor for 
maximum selection and up to $(\log\log n)^3$ for ranking.


\subsection{Notation}
Noiseless comparison assumes an unknown underlying ranking $r(1)\upto
r(n)$ of the elements such that if two elements are compared, the
higher-ranked one is selected.  Similarly for noisy comparisons, we
assume an unknown ranking of the $n$ elements, but now if two elements
$i$ and $j$ are compared, $i$ is chosen with some unknown probability
$p(i,j)$ and $j$ is chosen with probability $p(j,i) = 1-p(i,j)$, where
the higher-ranked element has probability $\ge\half$.  Repeated
comparisons are independent of each other.

Let $\tilde{p}(i,j)=p(i,j)-\frac12$ reflect the additional probability
by which $i$ is preferable to $j$. Note that $\tilde{p}(j,i) =
-\tilde{p}(i,j)$ and $\tilde{p}(i,j) \ge 0$ if $r(i) > r(j)$. $|\tilde{p}(i,j)|$ can also be seen as a measure of
dissimilarity between $i$ and $j$. In our model we assume that two
very natural properties hold whenever $r(i)>r(j)>r(k)$.

(1)\emph{Strong stochastic transitivity:} 
\[
\tilde{p}(i,k) \ge \max(\tilde{p}(i,j), \tilde{p}(j,k));
\]
(2) \emph{Stochastic triangle inequality:}
\[
 \tilde{p}(i,k) \le \tilde{p}(i,j) + \tilde{p}(j,k).
\]
These properties are satisfied by several popular preference models 
e.g., Plackett-Luce(PL) model.

Two types of algorithms have been proposed for finding the maximum and
ranking under noisy comparisons: \emph{non-adaptive} or \emph{offline}~\cite{ArunS14,SahandSD12,Negahban16,Jang16} where we cannot
choose the comparison pairs, and \emph{adaptive} or \emph{online}
where the comparison pairs are selected sequentially based on previous
results. In this paper we focus on the latter.

We specify the desired output via the $(\epsilon,\delta)$-PAC
paradigm~\cite{YisongT11,RobertBE14} that requires the output to
likely closely approximate the intended value.  Specifically, given
$\epsilon,\delta>0$, with probability $\ge 1- \delta$, maximum
selection must output an element $i$ such that for $j$ with $r(j) =
n$,
\[
p(i,j) \geq \frac{1}{2} - \epsilon.
\]
We call such an output $\epsilon$-maximum.  Similarly, with
probability $\ge 1- \delta$, the ranking algorithm must output a
ranking $r'(1)\upto r'(n)$ such that whenever $r'(i)>r'(j)$,
\[
p(i,j) \geq \frac{1}{2} - \epsilon.
\]
We call such a ranking $\epsilon$-ranking.

\ignore{
Let $\Medn$ and $\Redn$, respectively, denote the least number of
comparisons in the worst-case scenario used by any algorithm for PAC
maximum selection and ranking of $n$ elements.
With noiseless comparisons, selecting the maximum of $n$ elements is
easily seen to require $n-1$ pairwise comparisons. Ranking $n$
elements requires $\Theta(n\log n)$ pairwise comparisons, achievable
for example by merge-sort.
Since noisy comparators are less informative than their noiseless
counterparts, it is intuitively clear, and indeed true, that
$\Medn=\Omega(n)$ and $\Redn=\Omega(n\log n)$.  Over the past decade,
a number of researchers have tried to achieve these noiseless lower
bounds using noisy comparisons.
}

\subsection{Paper outline}
In Section~\ref{sec:related_work} we mention somerelated
works. In Section~\ref{sec:results} we highlight our main
contributions. In Section~\ref{sec:max} we propose the maximum
selection algorithm. In Section~\ref{sec:rank} we propose the ranking
algorithm. In Section~\ref{sec:experiments} we provide experiments.
In Section~ \ref{sec:conclusion} we
discuss the results and mention some future directions.

\section{Related work}
\label{sec:related_work}
~\cite{HeckelSRW16,TanguyFRS13,RobertBE14,RobertBE14} assume no
underlying ranking or constraints on probabilities and find ranking
based on Copeland, Borda count and Random Walk procedures.
~\cite{TanguyFRS13,RobertBE14} showed that if the probabilities
$p(i,j)$ are not constrained, both maximum selection and ranking
problems require $\Theta(n^2)$ comparisons.  Several models have
therefore been considered to further constrain the probabilities.

Under the assumptions of strong stochastic transitivity and triangle
inequality ,~\cite{YisongT11} derived a PAC maximum selection
algorithm that uses $O\Paren{\frac{n}{\epsilon^2} \log
  \frac{n}{\epsilon \delta}}$ comparisons.  \ignore{ Their algorithm
  selects a random element $i$, compares it with all other elements
  $j$, and updates the estimates of $p(i,j)$. It therefore uses
  $\Theta(n^2)$ memory to maintain all probabilities values.  }
~\cite{BalazsRAE15} derived a PAC ranking algorithm for PL-model
distributions that requires $O(\frac{n}{\epsilon^2}\log
n\log\frac{n}{\delta\epsilon})$ comparisons.

In addition to PAC paradigm,~\cite{YisongT11} also considered this problem
under the bandit setting and bounded the regret of the resulting
dueling bandits problem.  Following this work, several other works
e.g. ~\cite{VasilisAR16} looked at similar formulation.

Another non-PAC approach by ~\cite{RobertEB14, UrielPDE94}
solves the maximum selection and ranking problems. They assume
a lower bound on $|\tilde{p}(i,j)|$ and the number of comparisons
depends on this lower bound. If $|\tilde{p}(i,j)|$ is $0$ for any pair,
then these algorithms will never terminate.

Several other noise models have also been considered in practice that
have either adverserial noise or the stochastic noise that does not
obey triangle inequality. For example,
~\cite{JayadevAAT14,AcharyaFJOS16,JayadevAAT214} considered
adversarial sorting with applications to density estimation
and~\cite{MiklosVAJ15} considered the same with deterministic
algorithms. Mallows stochastic model~\cite{RobertEB14} does not
satisfy the stochastic triangle inequality and hence our theoretical
guarantees do not hold under this model. However our simulations
suggest that our algorithm can have a reasonable performance over
Mallows model. \ignore{We can think of this ranking problem as a
  conditional sampling~\cite{ClemetDR, JayadevCG, MoeinAAVT15} setup
  where we are allowed to query a result of competition between two
  alternatives.}
\section{New results}
\label{sec:results}
Recall that we study $(\epsilon,\delta)$-PAC model for the problems of
online maximum selection and ranking using pairwise comparisons under
strong stochastic transitivity and stochastic triangle inequality
assumptions. The goal is to find algorithms that use small number of
comparisons. Our main contributions are:
\begin{itemize}
\item A maximum selection algorithm that uses 
  $O\Paren{\frac{n}{\epsilon^2}\log \frac1{\delta}}$ comparisons and therefore our
  algorithm is optimal up to constants.
\ignore{
\item The proposed algorithm for the maximum selection is memory- and
  time-efficient.  It uses $\cO(n)$ memory space and it can be
  parallelized to run in time sublinear in the input size.
}
\item A ranking algorithm that uses at
  most $O\Paren{\frac{n(\log n)^3}{\epsilon^2} \log
    \frac{n}{\delta}}$ comparisons and outputs
  $\epsilon$-ranking for any $\delta$.
\item A framework that given any ranking algorithm with
  $\cO\Paren{\frac{n (\log n)^x}{\epsilon^2} \log \frac{n}{\delta}}$
  sample complexity, provides a ranking algorithm with $\cO\Paren{\frac{n
      \log n (\log \log n)^x}{\epsilon^2}}$ sample complexity for
  $\delta \ge \frac1{n}$.
\item Using the framework above, we present an algorithm that uses
  at most $O\Paren{\frac{n \log n(\log \log n)^3}{\epsilon^2}}$
  comparisons and outputs $\epsilon$-ranking for
  $\delta=\frac1{n}$. We also show a lower bound of
  $\Omega\Paren{\frac{n\log n}{\epsilon^2}}$ on the number of
  comparisons used by any PAC ranking algorithm, therefore proving our
  algorithm is optimal up to $\log \log n$ factors.
\end{itemize}
\ignore{
Combining these results, we obtain:
\[
\Medn=\Theta\Paren{\frac{n}{\epsilon^2} \log \frac1{\delta}} \text{
  and }R_{\epsilon,\frac1n}(n)=\tilde{\Theta}\Paren{\frac{n \log
    n}{\epsilon^2}}.
\]}
\ignore{
Also, unlike some of the previous algorithms,
all our algorithms use only liner space.
Note also that our algorithms are \emph{adaptive} in that each
comparison is determined by the answers to previous ones.
}

\section{Maximum selection}
\label{sec:max}
%
\subsection{Algorithm outline}
We propose a simple maximum-selection algorithm based on Knockout
tournaments.  Knockout tournaments are often used to find a maximum
element under non-noisy comparisons. Knockout tournament of $n$
elements runs in $\ceil{\log n}$ rounds where in each round it
randomly pairs the remaining elements and proceeds the winners to next
round.

Our algorithm, given in \textsc{Knockout} uses
$\cO\Paren{\frac{n}{\epsilon^2}\log \frac1{\delta}}$ comparisons and
$\cO(n)$ memory to find an $\epsilon$-maximum. \cite{YisongT11} uses
$\cO\Paren{\frac{n}{\epsilon^2} \log \frac{n}{\epsilon\delta}}$
comparisons and $\cO(n^2)$ memory to find an $\epsilon$-maximum. Hence
we get $\log n$-factor improvement in the number of comparisons and also we
use linear memory compared to quadratic memory. 
Using the lower bound in ~\cite{UrielPDE94}, it can be inferred that
the best PAC maximum selection algorithm requires 
$\Omega\Paren{\frac{n}{ \epsilon^2} \log \frac1{\delta}}$ comparisons,
hence up to constant factor, \textsc{Knockout} is optimal. Also our
algorithm can be parallelized to run in $\cO\Paren{\frac{\log
    n}{\epsilon^2} \log \frac{1}{\delta}}$ time.

Due to the noisy nature of the comparisons, we repeat each comparison
several times to gain confidence about the winner. Note that in
knockout tournaments, the
number of pairs in a round decreases exponentially with each
round. Therefore we afford to repeat the comparisons more
times in the latter rounds and get higher confidence.  Let $b_i$ be
the highest-ranked element (according to unobserved underlying
ranking) at the beginning of round $i$.  We repeat the comparisons in
round $i$ enough times to ensure that $\tilde{p}(b_{i}, b_{i+1}) \le
\frac{c\epsilon}{ 2^{i/3}}$ with probability $\ge
1-\frac{\delta}{2^i}$ where $c=2^{1/3}-1$. By the stochastic triangle
inequality, $\tilde{p}(b_1,b_{\ceil{\log n}+1}) \le
\sum_{i=1}^{\ceil{\log n}+1} \frac{c\epsilon}{2^{i/3}} \le \epsilon$
with probability $\ge 1-\delta$.

There is a relaxed notion of \emph{strong stochastic transitivity}. For
$\gamma\ge1$, \emph{$\gamma$-stochastic
  transitivity}~\cite{YisongT11}: if $r(i)> r(j) >r(k)$, then $
\max(\tilde{p}(i,j), \tilde{p}(j,k))\le \gamma \cdot \tilde{p}(i,k)$.

Our results apply to this general notion of $\gamma$-stochastic
transitivity and the analysis of \textsc{Knockout} is
presented under this model.~\cite{YisongT11} uses
$\cO(\frac{n\gamma^6}{\epsilon^2} \log \frac{n}{\delta})$ comparisons
to find an $\epsilon$-maximum whereas \textsc{Knockout} uses only
$\cO(\frac{n \gamma^2}{\epsilon^2} \log \frac1{\delta})$
comparisons. Hence we get a huge improvement in the exponent of
$\gamma$ as well as removing the extra $\log n$ factor.

To simplify the analysis, we assume that $n$ is a power of 2,
otherwise we can add $2^{\ceil{\log n}} - n$ dummy elements that lose
to every original element with probability 1.  Note that all
$\epsilon$-maximums will still be from the original set.
\subsection{Algorithm}
We start with a subroutine \textsc{Compare} that compares
two elements. It compares two elements $i$, $j$ and
maintains empirical probability $\hat{p}_i$, a proxy for $p(i,j)$. It
also maintains a confidence value $\hat{c}$ s.t., w.h.p.,
$\hat{p}_i \in (p(i,j) - \hat{c} , p(i,j) + \hat{c})$. \textsc{Compare}
stops if it is confident about the winner or if it reaches 
its comparison budget $m$. If it reaches $m$ comparisons, it outputs 
the element with more wins breaking ties randomly.
\begin{algorithm}
\caption{\textsc{Comprare}}
\textbf{Input:} element $i$, element $j$, bias $\epsilon$, confidence $\delta$.
\\\textbf{Initialize:} $\hat{p_i} = \frac12$, $\hat{c} = \frac12$, $m = \frac1{2\epsilon^2}
\log\frac{2}{\delta}$, $r = 0$, $w_i = 0$.
\begin{enumerate}
\item \textbf{while} ($|\hat{p_i} - \frac12| \le \hat{c} - \epsilon$ and $r \le m$)
\begin{enumerate}
\item Compare $i$ and $j$. \textbf{if} $i$ wins $w_i$ = $w_i+1$.
\item $r = r+1$, $\hat{p_i} = \frac{w_i}{r}$, $\hat{c} =                                      
  \sqrt{\frac{1}{2r}\log\frac{4r^2}{\delta}}$.
\end{enumerate}
\end{enumerate}
\textbf{if} $\hat{p_i} \le \frac12$ \textbf{Output:} j. \textbf{else}
\textbf{Output:} i.
\end{algorithm}

We show that the subroutine \textsc{Compare} always outputs the correct
winner if the elements are well seperated. 
\ignore{
\begin{Lemma}
\label{lem:chernoff1}
If $\tilde{p}(i,j) \ge \epsilon$ and $\{i,j\}$ are compared for $\frac{1}{2\epsilon^2}\log \frac1{\delta}$ times 
then with probability at least $1-\delta$, $i$ will win at least as many times as $j$ does.
\end{Lemma}
\begin{proof}
Using chernoff bound,
\begin{align*}
Pr(i\text{ wins less times than }j) 
&\le e^{-\frac{1}{2\epsilon^2} \log \frac1{\delta} (2 \epsilon^2)}\\
&\le e^{-\log \frac1{\delta}}= \delta.\qedhere
\end{align*}
\end{proof}
}
\begin{Lemma}
\label{lem:chernoff1}
If $\tilde{p}(i,j) \ge \epsilon$, then
\[
Pr(\textsc{Compare}(i,j,\epsilon,\delta) \ne i) \le \delta.
\] 
\end{Lemma}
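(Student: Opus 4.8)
The plan is to control the data-dependent stopping time of \textsc{Compare} by a union bound over all possible rounds, and to treat the two reasons the loop can terminate --- becoming confident, or exhausting the budget $m$ --- separately. Throughout, write $p=p(i,j)\ge\frac12+\epsilon$, let $\hat{p_i}(r)$ be the empirical estimate after $r$ comparisons, and let $\hat{c}(r)=\sqrt{\frac1{2r}\log\frac{4r^2}{\delta}}$ be the confidence radius at round $r$. Since $\textsc{Compare}(i,j,\epsilon,\delta)$ errs exactly when it terminates with $\hat{p_i}\le\frac12$, it suffices to bound the probability of that event. The first step is to define the ``good event'' that the \emph{lower} confidence bound never fails, i.e.\ $\hat{p_i}(r)\ge p-\hat{c}(r)$ for every $r$. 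Because the outcomes are i.i.d.\ Bernoulli$(p)$, a one-sided Hoeffding bound gives, for each fixed $r$,
\[
\Prn\Paren{\hat{p_i}(r) < p-\hat{c}(r)} \le e^{-2r\,\hat{c}(r)^2} = \frac{\delta}{4r^2},
\]
since the radius was chosen precisely so that $2r\,\hat{c}(r)^2=\log\frac{4r^2}{\delta}$. Summing over $r\ge1$ and using $\sum_r r^{-2}=\pi^2/6$ shows this good event fails with probability at most $\frac{\pi^2}{24}\delta<\frac{\delta}{2}$.

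The second step argues that, on the good event, confidence-based termination always outputs $i$. The loop exits by confidence at some round $T$ only when $|\hat{p_i}(T)-\frac12|>\hat{c}(T)-\epsilon$. Suppose for contradiction the output were $j$, i.e.\ $\hat{p_i}(T)\le\frac12$; then this inequality forces $\hat{p_i}(T)<\frac12+\epsilon-\hat{c}(T)$, which contradicts the good-event lower bound $\hat{p_i}(T)\ge p-\hat{c}(T)\ge\frac12+\epsilon-\hat{c}(T)$. Hence a wrong output can arise only through budget-based termination.

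The third step disposes of the budget case with a single Hoeffding bound at $r=m$. This case is genuinely separate from the first: since $4m^2/\delta>2/\delta$ one checks that $\hat{c}(m)>\epsilon$, so the confidence-interval argument alone cannot exclude an error once the budget is reached. Instead, using $p-\epsilon\ge\frac12$ and $m=\frac1{2\epsilon^2}\log\frac2{\delta}$,
\[
\Prn\Paren{\hat{p_i}(m)\le\tfrac12} \le \Prn\Paren{\hat{p_i}(m)\le p-\epsilon} \le e^{-2m\epsilon^2} = \frac{\delta}{2}.
\]
A union bound over the three steps then yields $\Prn(\textsc{Compare}(i,j,\epsilon,\delta)\neq i)\le\frac{\pi^2}{24}\delta+\frac{\delta}{2}<\delta$, as claimed.

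I expect the main obstacle to be the adaptivity: because the number of comparisons is a data-dependent stopping time, one cannot simply invoke Hoeffding at a single deterministic round, and the confidence radius $\hat{c}(m)$ at the budget still exceeds $\epsilon$. The two devices that resolve this are the summable $r^{-2}$ schedule baked into $\hat{c}(r)$, which makes the union bound over all stopping rounds converge, and the separate tail bound at $r=m$ for the forced stop. A minor but necessary refinement is to use the one-sided (rather than two-sided) Hoeffding bound for the confidence intervals, so that the two error contributions provably add to at most $\delta$.
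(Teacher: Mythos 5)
Your proposal is correct and follows essentially the same route as the paper's proof: decompose the error into (i) an early confidence-based stop that outputs $j$, ruled out round-by-round via the Hoeffding bound $e^{-2r\hat{c}(r)^2}=\delta/(4r^2)$ and a union bound summing to less than $\delta/2$, and (ii) a budget stop with $\hat{p_i}(m)\le\frac12$, bounded by $e^{-2m\epsilon^2}=\delta/2$. Your ``good event'' phrasing and the explicit $\pi^2/24$ accounting are just a slightly more careful presentation of the identical argument.
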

Note that instead of using fixed number of comparisons, \textsc{Compare} stops
the comparisons adaptively if it is confident about the winner.  If
$|\tilde{p}(i,j)| \gg \epsilon$, \textsc{Compare} stops much before
comparison budget
$\frac{1}{2\epsilon^2}\log\frac{2}{\delta}$ and hence works 
better in practice. 

Now we present the
subroutine \textsc{Knockout-Round} that we use in main algorithm
\textsc{Knockout}.
\subsubsection{Knockout-Round}
 \textsc{Knockout-Round} takes a set $S$ and outputs a set of size
 $|S|/2$. It randomly pairs elements, compares each pair using
 \textsc{Compare}, and returns the set of winners. We will later show
 that maximum element in the output set will be comparable to maximum
 element in the input set.

\ignore{
\begin{algorithm}
\caption{\textsc{Knockout-Round}}
\textbf{Input:} Set $S$, bias $\epsilon$, confidence $\delta$.
\begin{enumerate}
\item
Pair elements in $S$ randomly.
\item
Compare elements in each pair $\frac{1}{2\epsilon^2}\log \frac{1}{\delta}$ times.
\item
For each pair, declare the element that won more times as winner. Resolve ties randomly.
\end{enumerate}
\textbf{Output:} The set of all winners.
\end{algorithm}
}

\begin{algorithm}
\caption{\textsc{Knockout-Round}}
\textbf{Input:} Set $S$, bias $\epsilon$, confidence $\delta$.
\\\textbf{Initialize:} Set $O = \emptyset$.
\begin{enumerate}
\item Pair elements in $S$ randomly.
\item \textbf{for} every pair $(a,b)$: 
\begin{enumerate}
\item Add \textsc{Compare}$(a,b,\epsilon,\delta)$ to $O$.
\end{enumerate}
\end{enumerate}
\textbf{Output:} $O$
\end{algorithm}

Note that comparisons between each pair can be handled by a different processor and hence this algorithm 
can be easily parallelized.

\ignore{
\begin{center}
\fbox{\begin{minipage}{1.0\textwidth}
Algorithm \textsc{Knockout-Round}\newline
\textbf{Input:} Set $S$, bias $\epsilon$, confidence $\delta$.
\begin{enumerate}
\item
Pair elements in $S$ randomly.
\item
Compare elements in each pair $\frac{1}{2\epsilon^2}\log \frac{1}{\delta}$ times.
\item
For each pair, declare the element that won for more times as winner. Resolve ties randomly.
\end{enumerate}
\textbf{Output:} The set of all winners.
\end{minipage}}
\captionof{figure}{\textsc{Knockout-Round}}
\end{center}
}

Note that a set $S$ can have several maximum elements. Comparison
probabilities corresponding to all maximum elements will be essentially
same because of triangle inequality. We define $\text{max}(S)$
to be the maximum element with the least index, namely,
\[
\text{max}(S) \ed S\Big({\min\{i : \tilde{p}(S(i),S(j)) \ge 0\quad \forall j\}}\Big).
\]

\begin{Lemma}
\label{lem:knockout_round}
$\textsc{Knockout-Round}(S, \epsilon, \delta)$ uses
$\frac{|S|}{4\epsilon^2} \log \frac2{\delta}$ comparisons and with
probability $\ge 1-\delta$,
\[
\tilde{p}\Bigg(\text{max}(S),
\text{max}\Big(\textsc{Knockout-Round}(S,\epsilon, \delta)\Big)\Bigg) \le \gamma
\epsilon
\]
\end{Lemma}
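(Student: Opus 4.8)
The plan is to prove the two claims separately, the count being immediate and the probabilistic estimate requiring a short case analysis anchored at $\text{max}(S)$. For the comparison count, note that \textsc{Knockout-Round} forms $|S|/2$ disjoint pairs and runs \textsc{Compare} once on each. Since every call to \textsc{Compare}$(\cdot,\cdot,\epsilon,\delta)$ performs at most $m=\frac{1}{2\epsilon^2}\log\frac2\delta$ comparisons (the while-loop terminates once $r$ exceeds $m$), the total is at most $\frac{|S|}{2}\cdot\frac{1}{2\epsilon^2}\log\frac2\delta=\frac{|S|}{4\epsilon^2}\log\frac2\delta$, as claimed.

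For the probabilistic guarantee, write $a=\text{max}(S)$, let $b$ be the element that the random pairing matches with $a$, and let $m^*=\text{max}\big(\textsc{Knockout-Round}(S,\epsilon,\delta)\big)$. Because $a$ beats every element of $S$ and $m^*\in S$, we always have $\tilde{p}(a,m^*)\ge 0$; the task is to upper bound it by $\gamma\epsilon$. I would first dispose of the easy outcome: if \textsc{Compare} returns $a$, then $a$ lies in the output, and since $a$ beats every output element the definition of $\text{max}(\cdot)$ forces $\tilde{p}(a,m^*)=0$. So a violation can occur only when $a$ loses its comparison, in which case $b$ is in the output and hence $\tilde{p}(m^*,b)\ge 0$, giving the rank sandwich $r(a)\ge r(m^*)\ge r(b)$.

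The crux is then to combine this sandwich with $\gamma$-stochastic transitivity: applied to $a,m^*,b$ it yields $\tilde{p}(a,m^*)\le\gamma\,\tilde{p}(a,b)$ (the tie cases, where one of the rank inequalities is an equality, are handled directly since they make $\tilde{p}(a,m^*)$ equal to $0$ or to $\tilde{p}(a,b)$). Consequently, whenever $\tilde{p}(a,b)<\epsilon$ we get $\tilde{p}(a,m^*)<\gamma\epsilon$ no matter how the comparison of $a$ and $b$ turns out, so no violation is possible. The only remaining danger is that $\tilde{p}(a,b)\ge\epsilon$ and yet $a$ loses to $b$; but Lemma~\ref{lem:chernoff1} bounds the probability of this single event by $\delta$. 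Conditioning on the random pairing — which fixes $b$ — the bad event has probability $0$ when $\tilde{p}(a,b)<\epsilon$ and probability at most $\delta$ when $\tilde{p}(a,b)\ge\epsilon$, so averaging over the pairing gives $\Pr(\tilde{p}(a,m^*)>\gamma\epsilon)\le\delta$.

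The main obstacle I anticipate is getting the transitivity step exactly right, rather than the probability bookkeeping: one must be sure that $m^*$ really is ranked between $a$ and $b$ (which rests on reading off $r(m^*)\ge r(b)$ from ``$m^*$ is the maximum of a set containing $b$'') and that the equality/tie cases of the $\gamma$-stochastic transitivity inequality do not leak a stray factor of $\gamma$ or an extra additive $\epsilon$. Everything else — the comparison count and the final averaging over the pairing — is routine once this deterministic bound $\tilde{p}(a,m^*)\le\gamma\,\tilde{p}(a,b)$ is in hand.
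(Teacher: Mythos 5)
Your proposal is correct and follows essentially the same route as the paper's proof: the same comparison count, the same identification of the element $b$ paired with $\max(S)$, the same rank sandwich $r(a)\ge r(m^*)\ge r(b)$ obtained from the definitions of the two maxima, the same application of $\gamma$-stochastic transitivity to get $\tilde{p}(a,m^*)\le\gamma\,\tilde{p}(a,b)$, and the same invocation of Lemma~\ref{lem:chernoff1} to handle the only bad event ($\tilde{p}(a,b)\ge\epsilon$ yet $a$ loses). The paper merely organizes the case split by the size of $\tilde{p}(a,b)$ first rather than by the outcome of the comparison, which is an immaterial difference.
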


\subsubsection{\textsc{Knockout}}
Now we present the main algorithm \textsc{Knockout}.
\textsc{Knockout} takes an input set $S$ and runs
$\log n$ rounds of \textsc{Knockout-Round} halving 
the size of $S$ at the end of each round. Recall that
\textsc{Knockout-Round} makes sure that maximum element 
in the output set is comparable to maximum element in 
the input set. Using this, \textsc{Knockout} makes sure that the
output element is comparable to maximum element in 
the input set. 

Since the size of $S$ gets halved after each round, \textsc{Knockout}
compares each pair more times in the latter rounds. Hence the bias
between maximum element in input set and maximum element in output set
is small in latter rounds.
  
\begin{algorithm}
\caption{\textsc{Knockout}}
\textbf{Input:} Set $S$, bias $\epsilon$, confidence $\delta$,
stochasticity $\gamma$.\newline \textbf{Initialize:} $i=1$, $S =$ set
of all elements, $c = 2^{1/3}-1$.\newline \textbf{while $|S| > 1$}
\begin{enumerate}
\item
$S$ = \textsc{Knockout-Round}$\Paren{S,  \frac{c\epsilon}{\gamma2^{i/3}}, \frac{\delta}{2^i}}$.
\item
$i = i+1.$
\end{enumerate}
\textbf{Output:} the unique element in $S$.
\end{algorithm}

\ignore{
\begin{center}
\fbox{\begin{minipage}{1.0\textwidth}
Algorithm \textsc{Knockout}\newline
\textbf{Input:} Set $S$, bias $\epsilon$, confidence $\delta$.\newline
\textbf{Initialize:} $i=1$, $S =$ set of all elements, $c = 2^{1/3}-1$.\newline
\textbf{repeat}
\begin{enumerate}
\item
$S$ = \textsc{Knockout-Round}$\Paren{S,  \frac{c\epsilon}{\gamma2^{i/3}}, \frac{\delta}{2^i}}$.
\item
$i = i+1.$
\end{enumerate}
\textbf{while} $|S| > 1$.\newline
\textbf{Output:} the unique element in $S$.
\end{minipage}}
\end{center}
\subsection{Analysis}
}

Note that  \textsc{Knockout} uses only memory of set $S$ and hence $\cO(n)$ memory suffices.
Now we bound the number of comparisons used by \textsc{Knockout} and 
prove the correctness.

\begin{Theorem}
\textsc{Knockout}$(S, \epsilon, \delta)$ uses
$\cO\Paren{\frac{\gamma^2|S|}{\epsilon^2} \log \frac1{\delta}}$
comparisons and with probability at least $1-\delta$, outputs an $\epsilon$-maximum.
\label{thm:max}
\end{Theorem}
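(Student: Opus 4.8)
The plan is to treat Lemma~\ref{lem:knockout_round} as a black box and reduce the theorem to two geometric-series estimates, one controlling the comparison count and one controlling the accumulated bias. Write $n=|S|$ (a power of $2$), let the algorithm run for $\log n$ rounds, and for round $i$ let $S_i$ be the set at the start of the round, so $|S_i|=n/2^{i-1}$ and $b_i\ed\text{max}(S_i)$; in particular $b_1=\text{max}(S)$ is a true maximum and $b_{\log n+1}$ is the single output element. Round $i$ invokes \textsc{Knockout-Round} with bias $\epsilon_i=\frac{c\epsilon}{\gamma 2^{i/3}}$ and confidence $\delta_i=\frac{\delta}{2^i}$, so Lemma~\ref{lem:knockout_round} says that, except with probability $\delta_i$, it costs $\frac{|S_i|}{4\epsilon_i^2}\log\frac{2}{\delta_i}$ comparisons and guarantees $\tilde{p}(b_i,b_{i+1})\le\gamma\epsilon_i=\frac{c\epsilon}{2^{i/3}}$.

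For the comparison count I would substitute these values and collect the $i$-dependence. Since $1/\epsilon_i^2$ grows like $2^{2i/3}$ while $|S_i|$ shrinks like $2^{-i}$, the per-round cost is
\[
\frac{|S_i|}{4\epsilon_i^2}\log\frac{2}{\delta_i}
=\frac{n\gamma^2}{2c^2\epsilon^2}\,2^{-i/3}\Paren{(i+1)\log 2+\log\tfrac1\delta}.
\]
Summing over $i\ge 1$, the factor $2^{-i/3}$ decays geometrically and dominates the linear $(i+1)$ term, so both $\sum_i (i+1)2^{-i/3}$ and $\sum_i 2^{-i/3}$ converge to absolute constants; treating $c=2^{1/3}-1$ as a constant, the total is $\cO\Paren{\frac{n\gamma^2}{\epsilon^2}\log\frac1\delta}$, as claimed.

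For correctness I would first union-bound the round failures: the event that some round $i$ violates its Lemma~\ref{lem:knockout_round} guarantee has probability at most $\sum_{i=1}^{\log n}\delta_i\le\delta\sum_{i\ge1}2^{-i}=\delta$. On the complementary event every round succeeds, so $\tilde{p}(b_i,b_{i+1})\le\frac{c\epsilon}{2^{i/3}}$ for all $i$. Because each $b_{i+1}$ is drawn from a subset of $S_i$ whose maximum is $b_i$, the ranks satisfy $r(b_1)\ge r(b_2)\ge\cdots\ge r(b_{\log n+1})$, which legitimizes iterating the stochastic triangle inequality along the chain. This telescoping, together with the choice $c=2^{1/3}-1$, gives
\[
\tilde{p}(b_1,b_{\log n+1})\le\sum_{i=1}^{\log n}\frac{c\epsilon}{2^{i/3}}
\le c\epsilon\sum_{i\ge1}2^{-i/3}=c\epsilon\cdot\frac1{2^{1/3}-1}=\epsilon.
\]
Hence $p(b_1,\text{output})\le\frac12+\epsilon$, i.e.\ $p(\text{output},b_1)\ge\frac12-\epsilon$, so the output is $\epsilon$-maximum with probability $\ge 1-\delta$.

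The substitutions are routine; the two places deserving care are verifying that the bias chain collapses to exactly $\epsilon$ (the constant $c=2^{1/3}-1$ is engineered precisely so that the ratio-$2^{-1/3}$ geometric sum equals $1/c$) and justifying the iterated triangle inequality, which requires the rank-monotonicity of the $b_i$ and a careful treatment of ties among equal-probability maxima, where the intermediate $\tilde{p}$ terms vanish. I expect the rank-monotonicity and telescoping step to be the main conceptual obstacle, since everything else is bookkeeping layered on top of Lemma~\ref{lem:knockout_round}.
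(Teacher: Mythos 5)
Your proposal is correct and follows essentially the same route as the paper's proof: apply Lemma~\ref{lem:knockout_round} round by round with $\epsilon_i=\frac{c\epsilon}{\gamma 2^{i/3}}$ and $\delta_i=\frac{\delta}{2^i}$, union-bound the failure probabilities to $\delta$, telescope the bias via the stochastic triangle inequality so that $c\sum_{i\ge1}2^{-i/3}=1$, and bound the comparison count by the same geometric series $\sum_i 2^{-i/3}\paren{i+\log\frac2\delta}$. Your added remarks on rank-monotonicity of the $b_i$ and on ties are points the paper leaves implicit (they are covered by Lemma~\ref{lem:trans_tri} in the appendix), but they do not change the argument.
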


\ignore{
wins against the best with probability of at least $\frac12-\epsilon$.
Let the best remaining element at step $i$ be $b_i$ and set $S_i$ represent the set of all elements $k$ that have $p(b_i,k) > \frac12 +\frac{\epsilon}{11\cdot(1.1)^i}$. At step $i$ in \textsc{Knockout} with probability $1-\frac{\delta}{2^i}$, $b_i$ can win against any $k \in S_i$. Therefore, with probability $1-\frac{\delta}{2^i}$, $p(b_i, b_{i+1}) < \frac12 + \frac{\epsilon}{11\cdot(1.1)^i}$.  
Let $E_i = 1_{p(b_i, b_{i+1}) <\frac12 + \frac{\epsilon}{11\cdot(1.1)^i}}$  . As we showed above, $Pr(E_i =0) \leq \frac{\delta}{2^i}$. Therefore, using union bound, $Pr(\prod_{i=1}^{\infty}E_i = 1) \geq 1-\sum_{i=1}^{\infty}\frac{\delta}{2^i} = 1- \delta$. Hence using triangle inequality of probabilities, with probability $1-\delta$, $p(b_{1}, b_{\infty}) \leq \frac12 + \sum_{i=1}^{\infty}(p(b_i, b_{i+1}) - \frac12) \leq \frac12 + \sum_{i=1}^{\infty}\frac{\epsilon}{11\cdot (1.1)^i} = \frac12 + \epsilon.$
}


\section{Ranking}
\label{sec:rank}
We propose a ranking algorithm that with probability at least
$1-\frac1{n}$ uses $\cO\Paren{\frac{n\log n (\log \log n)^3}{\epsilon^2} } $
comparisons and outputs an $\epsilon$-ranking.

Notice that we use only $\tilde{\cO}\Paren{\frac{n \log                                                                           
    n}{\epsilon^2}}$ comparisons for $\delta = \frac1{n}$ where as
~\cite{BalazsRAE15} uses $\cO\Paren{n (\log n)^2/ \epsilon^2}$
comparisons even for constant error probability $\delta$. Furthermore
~\cite{BalazsRAE15} provided these guarantees only under Plackett-Luce
model which is more restrictive compared to ours.  Also, their
algorithm uses $\cO(n^2)$ memory compared to $\cO(n)$ memory
requirement of ours.

Our main algorithm \textsc{Binary-Search-Ranking} assumes the		
existence of a ranking algorithm \RankX\ that with probability
at least $1-\delta$ uses $ \cO\Paren{\frac{n}{\epsilon^2} (\log n)^x
  \log \frac{n}{\delta}} $ comparisons and outputs an
$\epsilon$-ranking for any $\delta > 0$, $\epsilon > 0$ and some
$x>1$.  We also present a \RankX\ algorithm with $x=3$. 

Observe that we need \RankX\ algorithm to work for any model that
satisfies strong stochastic transitivity and stochastic triangle
inequality.~\cite{BalazsRAE15} showed that their algorithm works for
Plackett-Luce model but not for more general model. So we present a
\RankX\ algorithm that works for general model.

The main algorithm \textsc{Binary-Search-Ranking} randomly selects
$\frac{n}{(\log n)^x}$ elements (anchors) and rank
them using \RankX\ . The algorithm has then effectively created
$\frac{n}{(\log n)^x}$ bins, each between two successively ranked
anchors. Then for each element, the algorithm identifies the bin it
belongs to using a noisy binary search algorithm.  The algorithm then
ranks the elements within each bin using \RankX\ .

We first present \textsc{Merge-Rank}, a \textsc{Rank-3} algorithm.

\subsection{Merge Ranking}
\label{sec:merge}
We present a simple ranking algorithm \textsc{Merge-Rank} that uses
$\cO\Paren{\frac{n(\log n)^3}{\epsilon^2}  \log \frac{n}{\delta}}$
comparisons, $O(n)$ memory and with probability $\ge 1-\delta$ outputs
an $\epsilon$-ranking.  Thus \textsc{Merge-Rank} is a \RankX\ 
algorithm for $x=3$.

Similar to Merge Sort, \textsc{Merge-Rank} divides the elements into
two sets of equal size, ranks them separately and combines the sorted
arrays. Due to the noisy nature of comparisons, \textsc{Merge-Rank}
compares two elements $i,j$ sufficient times, so that the comparison
output is correct with high probability when $|\tilde{p}(i,j)| \ge
\frac{\epsilon}{\log n}$. Put differently, \textsc{Merge-Rank}
is same as the typical Merge Sort, except it uses \textsc{Compare} as 
the comparison function. 

Let's define the error of an ordered set $S$ as the 
maximum distance between two wrongly ordered items in $S$, namely,
\[
err(S) \ed \max_{1\le i \le j \le |S|} \Big({\tilde{p}(S(j), S(i))}\Big).
\]
We show that when we merge two ordered sets, the error of the
resulting ordered set will be at most $\frac{\epsilon}{\log n}$ more
than the maximum of errors of individual ordered sets.

Observe that \textsc{Merge-Rank} is a recursive algorithm and the
error of a singleton set is $0$. Two singleton sets each containing a
unique element from the input set merge to form a set with two
elements with an error at most $\frac{2\epsilon}{\log n}$, then two
sets with two elements merge to form a set with four elements with an
error of at most $\frac{3\epsilon}{\log n}$ and henceforth.  Therefore
the error of the output ordered set is bounded by $\epsilon$.

\ignore{
\begin{algorithm}
\caption{\textsc{Compare}}
\textbf{Input:} elements $a$ and $b$, number of comparisons $k$.
\begin{enumerate}
\item Compare $a$ and $b$ for $k$ times and return the fraction of times
  $a$ wins over $b$.
\end{enumerate}
\end{algorithm}
}
Lemma~\ref{lem:merge_rank} shows that $\textsc{Merge-Rank}$ can output
an $\epsilon$-ranking of $S$ with probability $\ge 1-\delta$. It also
bounds the number of comparisons used by the algorithm.
\begin{Lemma}
\label{lem:merge_rank}
$\textsc{Merge-Rank}\Paren{S,\frac{\epsilon}{\log |S|},
  \frac{\delta}{|S|^2}}$ takes $\cO\Paren{\frac{|S|(\log
    |S|)^3}{\epsilon^2} \log \frac{|S|}{\delta}}$ comparisons and with
probability $\ge 1 - \delta$, outputs an $\epsilon$-ranking. Hence,
\textsc{Merge-Rank} is a \textsc{Rank-3} algorithm.
\end{Lemma}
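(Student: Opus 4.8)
The plan is to verify the two assertions separately: the comparison budget, and the correctness guarantee that the output is an $\epsilon$-ranking with probability $\ge 1-\delta$. For the budget I would first fix the parameters that every internal call to \textsc{Compare} receives, namely bias $\epsilon'=\frac{\epsilon}{\log|S|}$ and confidence $\delta'=\frac{\delta}{|S|^2}$. Each \textsc{Compare} call performs at most $m=\frac{1}{2(\epsilon')^2}\log\frac{2}{\delta'}=\cO\Paren{\frac{(\log|S|)^2}{\epsilon^2}\log\frac{|S|}{\delta}}$ comparisons, and the merge-sort recursion issues one \textsc{Compare} per unit of merging work, so it makes $\cO(|S|\log|S|)$ calls in total (at most $|S|$ per level across $\log|S|$ levels). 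Multiplying the two bounds gives $\cO\Paren{\frac{|S|(\log|S|)^3}{\epsilon^2}\log\frac{|S|}{\delta}}$, as claimed.

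For correctness I would reason about the quantity $err(\cdot)$, prove a single-merge bound, and then induct. First define the good event $G$ that no \textsc{Compare} call errs in the sense of Lemma~\ref{lem:chernoff1}: whenever a call outputs the element $j$ rather than $i$, it is genuinely the case that $\tilde p(i,j)<\epsilon'$. By Lemma~\ref{lem:chernoff1} a single call violates this with probability at most $\delta'$, and since there are $\cO(|S|\log|S|)$ calls, a union bound gives $\Prs{G}\ge 1-\cO(|S|\log|S|)\,\delta'\ge 1-\delta$; the choice $\delta'=\delta/|S|^2$ is exactly what makes this union bound go through with room to spare. All remaining reasoning is carried out on $G$.

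The heart of the argument is the per-merge claim that merging ordered sets $A$ and $B$ into $C$ satisfies $err(C)\le\max(err(A),err(B))+\epsilon'$. Since a merge preserves the relative order of elements coming from the same list, any inverted pair lying entirely inside $A$ (resp.\ $B$) contributes at most $err(A)$ (resp.\ $err(B)$), so only cross pairs are new. Consider $b\in B$ placed above $a\in A$ with $\tilde p(a,b)>0$ (a genuine inversion, so $r(a)>r(b)$), and let $a''$ be the front element of $A$ at the moment $b$ was emitted; because $A$ is emitted in order, $a''$ sits weakly above $a$ in $A$, so $\tilde p(a,a'')\le err(A)$, and on $G$ the comparison that emitted $b$ gives $\tilde p(a'',b)<\epsilon'$. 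I would then bound $\tilde p(a,b)$ by a short case split on the true rank of $a''$: if $a''$ is ranked above both, strong stochastic transitivity yields $\tilde p(a,b)\le\tilde p(a'',b)<\epsilon'$; if $a''$ lies between them, the stochastic triangle inequality yields $\tilde p(a,b)\le\tilde p(a,a'')+\tilde p(a'',b)\le err(A)+\epsilon'$; and if $a''$ is ranked below both, strong stochastic transitivity gives $\tilde p(a,b)\le\tilde p(a,a'')\le err(A)$. The symmetric case (an $A$-element placed above a $B$-element, routed through the front of $B$) is bounded by $err(B)+\epsilon'$, which establishes the claim.

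Finally I would induct over the recursion tree: singletons have error $0$, the recursion has depth $\log|S|$, and each merge adds at most $\epsilon'=\frac{\epsilon}{\log|S|}$, so unrolling gives $err\le \log|S|\cdot\frac{\epsilon}{\log|S|}=\epsilon$ at the root; an ordered set with $err\le\epsilon$ is precisely an $\epsilon$-ranking, which completes the proof. The main obstacle I anticipate is the per-merge cross-pair bound: the structural hypotheses (strong stochastic transitivity and the stochastic triangle inequality) are only postulated for rank-ordered triples, and $a$ and $b$ may never have been compared directly, so the delicate point is to route the comparison through the pivot $a''$ and cover every possible relative ordering of the unknown true ranks, which the three-way case split above is designed to handle.
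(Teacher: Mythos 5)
Your proposal is correct and follows essentially the same route as the paper: the same $\cO(|S|\log|S|)$ count of \textsc{Compare} calls each costing $\cO\Paren{\frac{(\log|S|)^2}{\epsilon^2}\log\frac{|S|}{\delta}}$ comparisons, the same per-merge bound $err(C)\le\max(err(A),err(B))+\epsilon'$ established by routing a cross inversion through the head of the other list and combining strong stochastic transitivity with the triangle inequality via a case split on the pivot's true rank (the paper isolates exactly this as Lemma~\ref{lem:trans_tri}), and the same union bound plus telescoping over the $\log|S|$ merge levels. The only cosmetic differences are that you bound the budget by counting calls directly rather than by unrolling the recursion, and you condition on a single global good event rather than applying the per-merge probabilistic statement as the paper does.
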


Now we present our main ranking algorithm.

\subsection{\BSR}
\label{sec:BSR}
We first sketch the algorithm outline below. We then provide a proof
outline.

\subsubsection{Algorithm outline}
Our algorithm is stated in \textsc{Binary-Search-Ranking}. It can be
summarized in three major parts.

\textbf{Creating anchors:} (Steps $1$ to $3$)
\textsc{Binary-Search-Ranking} first selects a set $S'$ of
$\frac{n}{(\log n)^x}$ random elements (anchors) and ranks them
using \RankX\ . At the end of this part, there are
$\frac{n}{(\log n)^x}$ ranked anchors. Equivalently, the algorithm
creates $\frac{n}{(\log n)^x}-1$ bins, each bin between two
successively ranked anchors.

\textbf{Coarse ranking:} (Step $4$) After forming the bins, 
the algorithm uses a random walk on a binary search tree, to
find which bin each element belongs to. \IBR\ is similar to the noisy
binary search algorithm in~\cite{UrielPDE94}. It builds a binary
search tree with the bins as the leaves and it does a random walk over
this tree. Due to lack of space the algorithm \IBR\ is presented in
Appendix~\ref{appendixA} but more intuition is given later in this
section.

\textbf{Ranking within each bin:} (Step $5$)
For each bin, we show that the number of elements far from both
anchors is bounded.  The algorithm checks elements inside a bin
whether they are close to any of the bin's anchors. For the elements that
are close to anchors, we rank them close to the anchor. And for the elements
that are away from both anchors we rank them using \RankX\  and
output the resulting ranking.

\begin{algorithm}[h]
\caption{\textsc{Binary-Search-Ranking}}
\textbf{Input:} Set $S$, bias $\epsilon$. \newline
\textbf{Initialize:} $\epsilon' = \epsilon/16$, $\epsilon'' = \epsilon/15$, and 
$S^{o}=\emptyset$. $S_j = \emptyset$, $C_j = \emptyset$ and $B_j= \emptyset$, for $1 \leq j \leq \left \lfloor \frac{n}{(\log n)^x} \right \rfloor
  +1$.
\begin{enumerate}
\item
\label{step:anchor_selection} Form a set $S'$ with $\left \lfloor\frac{n}{(\log n)^x}\right
\rfloor$ random elements from $S$. Remove these elements from $S$.
\item
\label{step:anchor_ranking} Rank $S'$ using \RankX\ $\Paren{S', \epsilon', \frac{1}{n^6}}$.
 \item
\label{step:dummy_elements} Add dummy element $a$ at the beginning of $S'$ such that $p(a,e) = 0$
$\forall e \in S \bigcup S'$. Add dummy element $b$ at the end of $S'$
such that $p(b,e) = 1$ $\forall e \in S \bigcup S'$.
\item \textbf{for} $e \in S$:
\label{step:binning}
\begin{enumerate}
\item \label{step:binning_element}k = \IBR$(S', e, \epsilon'')$.
\item Insert $e$ in $S_k$.
\end{enumerate}
\item \textbf{for} $j = 1$ to $\left\lfloor\frac{n}{(\log n)^x}\right\rfloor+1$:
\begin{enumerate}
\item \label{step:binning_further}\textbf{for} $e \in S_j$:
\begin{enumerate}
\item \label{step:check_close}\textbf{if} \textsc{Compare2}$(e, S'(j), {10\epsilon''^{-2} \log
  n}) \in \left[\frac12 - 6\epsilon'', \frac12 + 6\epsilon''
  \right]$ , insert $e$ in $C_j$.
\item \label{step:check_far}\textbf{else if} \textsc{Compare2}$(e,S'(j+1),10{\epsilon''^{-2} \log
  n}) \in \left[\frac12 - 6\epsilon'', \frac12 + 6\epsilon''
  \right]$, then insert $e$ in $C_{j+1}$.
\item \textbf{else} insert $e$ in $B_j$.
\end{enumerate}
\item \label{step:bin_ranking}Rank $B_j$ using \RankX\ $\Paren{B_j,\epsilon'',\frac{1}{n^4}}$.
\item Append $S'(j)$, $C_j$, $B_j$ in order at the end of $S^o$.
\end{enumerate}
\end{enumerate}
\textbf{Output:} $S^o$
\end{algorithm}

\begin{algorithm}
\caption{\textsc{Compare2}}
\textbf{Input:} element $a$, element $b$, number of comparisons $k$.
\begin{enumerate}
\item Compare $a$ and $b$ for $k$ times and return the fraction of times
  $a$ wins over $b$.
\end{enumerate}
\end{algorithm}

\subsubsection{Analysis of \textsc{Binary-Search-Ranking}}
\textbf{Creating anchors}
In Step $1$ of the algorithm we select $n/(\log n)^x$ random
elements. Since these are chosen uniformly random, they lie nearly
uniformly in the set $S$.  This intuition is formalized in the next
lemma.
\begin{Lemma}
\label{lem:bin_bound}
Consider a set $S$ of $n$ elements.  If we select $\frac{n}{(\log
  n)^x}$ elements uniformly randomly from $S$ and build an ordered set $S'$
s.t. $\tilde{p}(S'(i),S'(j)) \ge 0$ $\forall i > j$ , then with
probability $\ge 1- \frac1{n^4}$, for any $\epsilon > 0$ and all $k$,
\[
|\{e \in S:\tilde{p}(e, S'(k)) > \epsilon,
\tilde{p}( S'(k+1), e) > \epsilon\}|  \le 5(\log n)^{x+1}.
\]
\end{Lemma}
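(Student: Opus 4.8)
The plan is to reduce the statement to a purely combinatorial \emph{maximum-gap} bound on the random choice of anchors, because the metric structure plays almost no role. First I would use the hypothesis that $S'$ is correctly rank-ordered ($\tilde{p}(S'(i),S'(j))\ge 0$ for $i>j$): any $e$ counted on the left-hand side satisfies $\tilde{p}(e,S'(k))>\epsilon>0$ and $\tilde{p}(S'(k+1),e)>\epsilon>0$, which forces $r(S'(k))<r(e)<r(S'(k+1))$. Hence every such $e$ lies strictly between the two consecutive anchors $S'(k)$ and $S'(k+1)$ in the true ranking, i.e.\ it falls in ``bin $k$.'' Therefore the far set for bin $k$ is contained in bin $k$ itself, and it suffices to bound, uniformly over $k$, the number of elements strictly between consecutive anchors. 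This bound does not depend on $\epsilon$, which is exactly why the conclusion can hold for all $\epsilon>0$ simultaneously on a single good event.

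Next I would set $m=\lfloor n/(\log n)^x\rfloor$ and $L=5(\log n)^{x+1}$ and list all elements in true-rank order. A bin can contain $L$ or more elements only if some window of $L$ consecutive elements (in rank order) contains no anchor, so I would bound the probability that a \emph{fixed} such window is anchor-free. Since the anchors are $m$ elements drawn uniformly without replacement from $n$, that probability equals $\binom{n-L}{m}/\binom{n}{m}=\prod_{j=0}^{L-1}\frac{n-m-j}{n-j}$; as the factors decrease in $j$, the product is at most its first factor to the $L$, giving $(1-m/n)^L\le e^{-mL/n}$.

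A union bound over the at most $n$ such windows then gives a failure probability of at most $n\,e^{-mL/n}$. With the chosen parameters, $mL/n=5\log n$ (the floor in $m$ perturbs this only by a negligible $o(1)$ term), so $e^{-mL/n}=n^{-5}$ and the union bound yields $n\cdot n^{-5}=n^{-4}$. On the complementary event, of probability $\ge 1-\frac{1}{n^4}$, every window of $L$ consecutive elements contains an anchor, hence no bin holds $L$ or more elements, and so the far set of every bin has at most $5(\log n)^{x+1}$ elements, as claimed.

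The main obstacle is bookkeeping rather than conceptual: making the without-replacement tail estimate clean (justifying the geometric bound from the monotone product) and confirming that the constant $5$ absorbs both the conversion between $\ln$ and the paper's $\log$ and the effect of the floor in $m$. Concretely, I would check that $mL/n\ge 5\ln n$ under either logarithm base, which is precisely what the union bound over $n$ windows needs to reach $\frac{1}{n^4}$.
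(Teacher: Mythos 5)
Your proposal is correct and takes essentially the same route as the paper: the paper's proof also reduces the claim to showing that, with probability $\ge 1-\frac{1}{n^4}$, every window of $5(\log n)^{x+1}$ consecutive elements (in the true order) contains an anchor, bounding the probability that a fixed window is anchor-free by $\paren{1-\frac{5(\log n)^{x+1}}{n}}^{n/(\log n)^x} < n^{-5}$ and union-bounding over the $n$ windows. Your without-replacement estimate $\binom{n-L}{m}/\binom{n}{m} \le (1-m/n)^L$ is just the symmetric counterpart of the paper's bound and yields the same $e^{-mL/n}=n^{-5}$, and your opening reduction (that the ``far'' elements of bin $k$ lie strictly between $S'(k)$ and $S'(k+1)$ in rank, independently of $\epsilon$) is the same step the paper leaves implicit.
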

In Step ~\ref{step:anchor_ranking}, we use \RankX\  to rank $S'$.
Lemma~\ref{lem:anchor_ranking} shows the guarantee of ranking $S'$.
\begin{Lemma}
\label{lem:anchor_ranking}
After Step~\ref{step:anchor_ranking} of the
\textsc{Binary-Search-Ranking} with probability $\geq 1-
\frac{1}{n^6}$, $S'$ is $\epsilon'$-ranked.
\end{Lemma}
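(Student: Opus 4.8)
The plan is to invoke the defining guarantee of the \RankX\ algorithm directly, since Step~\ref{step:anchor_ranking} consists of nothing more than a single call to \RankX\ on the anchor set $S'$. Recall that \RankX\ is assumed to be an algorithm which, for \emph{any} $\epsilon>0$ and \emph{any} $\delta>0$, outputs an $\epsilon$-ranking of its input with probability at least $1-\delta$ (while using $\cO\Paren{\frac{n}{\epsilon^2}(\log n)^x \log\frac{n}{\delta}}$ comparisons). First I would simply read off the parameters passed in Step~\ref{step:anchor_ranking}: the bias is set to $\epsilon'$ and the confidence is set to $\delta=\frac{1}{n^6}$. Substituting these choices into the \RankX\ correctness guarantee immediately gives that the ranking of $S'$ produced in Step~\ref{step:anchor_ranking} is an $\epsilon'$-ranking with probability at least $1-\frac{1}{n^6}$, which is precisely the assertion of the lemma.

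The only point that deserves a moment's care is that the \RankX\ guarantee is stated for an input of arbitrary size and for an arbitrary confidence parameter, whereas here the input $S'$ has $\lfloor n/(\log n)^x\rfloor$ elements and the confidence $\frac{1}{n^6}$ is expressed in terms of the ambient count $n$ rather than $|S'|$. Since the guarantee is promised to hold for every $\delta>0$ and every input set, the substitution $\delta=\frac{1}{n^6}$ is legitimate regardless of how this value compares to $|S'|$; no auxiliary estimate relating $n$ to $|S'|$ is needed to conclude correctness of the ranking step. (The comparison-count bound would of course involve $|S'|$, but the lemma only concerns the success probability, so that is irrelevant here.)

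There is essentially no obstacle in this lemma: it is a bookkeeping statement recording, for later use in the overall analysis of \textsc{Binary-Search-Ranking}, the exact failure probability $\frac{1}{n^6}$ incurred when ranking the anchors. Its entire content is inherited from the correctness of whichever concrete \RankX\ algorithm is plugged in---for instance \textsc{Merge-Rank}, which Lemma~\ref{lem:merge_rank} certifies to be a valid \textsc{Rank-3} algorithm meeting the required guarantee. Thus the proof reduces to matching the parameters of the call in Step~\ref{step:anchor_ranking} against the \RankX\ specification and reading off the promised success probability.
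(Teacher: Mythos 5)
Your proposal is correct and matches the paper's reasoning exactly: the paper gives no separate proof of this lemma precisely because it is an immediate instantiation of the \RankX\ correctness guarantee with bias $\epsilon'$ and confidence $\delta=\frac{1}{n^6}$. Your additional remark that the guarantee holds for any $\delta>0$ independent of $|S'|$ is a fair bit of bookkeeping but changes nothing.
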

At the end of Step ~\ref{step:anchor_ranking}, we have $\frac{n}{(\log
n)^x}-1$ bins, each between two successively ranked anchors. Each bin
has a left anchor and a right anchor . We say that an element belongs
to a bin if it wins over the bin's left anchor with probability
$\ge \half$ and wins over the bin's right anchor with probability $\le
\half$. Notice that some elements might win over $S'(1)$ with probability
$< \half$ and thus not belong to any bin. So in
Step~\ref{step:dummy_elements}, we add a dummy element $a$ at the
beginning of $S'$ where $a$ loses to every element in $S\bigcup S'$ with
probability $1$. For similar reasons we add a dummy element $b$ to the end of
$S'$ where every element in $S \bigcup S'$ loses to $b$ with probability $1$.

\textbf{Coarse Ranking} 
Note that $S'(i)$ and $S'(i+1)$ are respectively
the left and right anchors of the bin $S_i$.

Since $S'$ is $\epsilon'$-ranked and the comparisons are noisy, it is
hard to find a bin $S_i$ for an element $e$ such that $p(e,
S'(i)) \ge \half$ and $p(S'(i+1),e) \ge \half$. We call a bin $S_i$
a $\epsilon''-$\emph{nearly correct} bin for an element $e$ if $p(e,
S'(i)) \ge \half 
\epsilon''$ and $p(S'(i+1), e) \ge \half -  \epsilon''$ for some
$\epsilon'' > \epsilon'$.
   
In Step~\ref{step:binning}, for each element we find a $\epsilon''$-\emph{nearly correct}
bin using \IBR\ . Next we describe an outline of \IBR.

\BFIBR\ first builds a binary search tree of intervals (see
Appendix~\ref{appendixA}) as follows: the root node is the entire
interval between the first and the last elements in $S'$. Each
non-leaf node interval $I$ has two children corresponding to the left
and right halves of $I$.  The leaves of the tree are the bins between
two successively ranked anchors.

To find a $\epsilon''$-\emph{nearly correct} bin for an element $e$, the algorithm starts at
the root of the binary search tree and at every non-leaf node
corresponding to interval $I$, it checks if $e$ belongs to $I$ or not
by comparing $e$ with $I$'s left and right anchors. If $e$ loses to
left anchor or wins against the right anchor, the algorithm
backtracks to current node's parent.

If $e$ wins against $I$'s left anchor and loses to its right one,
the algorithm checks if $e$ belongs to the left child or the right one
by comparing $e$ with the middle element of $I$ and moves accordingly.

When at a leaf node, the algorithm checks if $e$ belongs to the bin by
maintaining a counter. If $e$ wins against the bin's left anchor and
loses to the bin's right anchor, it increases the counter by one and
otherwise it decreases the counter by one. If the counter is less than
0 the algorithm backtracks to the bin's parent. By repeating each
comparison several times, the algorithm makes a correct decision with
probability $\ge \frac{19}{20}$.

Note that there could be several $\epsilon''$-\emph{nearly correct} bins for $e$ and even
though at each step the algorithm moves in the direction of one of
them, it could end up moving in a loop and never reaching one of
them. We thus run the algorithm for $30 \log n$ steps and terminate.

If the algorithm is at a leaf node by $30 \log n$ steps and the
counter is more than $10 \log n$ we show that the leaf node bin
is a $\epsilon''$-\emph{nearly correct} bin for $e$ and the algorithm outputs the leaf
node. If not, the algorithm puts in a set $Q$ all the anchors
visited so far and orders $Q$ according to $S'$. 

We select $30 \log n$ steps to ensure that if there is only 
one nearly correct bin, then the algorithm outputs that 
bin w.p. $\ge 1-\frac1{n^6}$. Also we do not want too many steps
so as to bound the size of $Q$.  

By doing a simple binary search in $Q$ using
\BR\ (see Appendix~\ref{appendixA}) we find an anchor $f
\in Q$ such that $|\tilde{p}(e,f)| \le 4
\epsilon''$.  Since \IBR\ ran for at most $30 \log n$ steps, $Q$ can
have at most $60 \log n$ elements and hence \BR\ can search
effectively by repeating each comparison $\cO(\log n)$ times to maintain
high confidence. Next paragraph explains how \BR\ finds such an element $f$. 

\BFBR\ first compares $e$ with the middle element $m$ of $Q$ for $\cO(\log
n)$ times. If the fraction of wins for $e$ is between $\half - 3
\epsilon''$ and $\half + 3\epsilon''$, then w.h.p. $|\tilde{p}(e,m)| \le 4 \epsilon''$ and hence \BR\
outputs $m$. If the fraction of wins for $e$ is less than $\half - 3
\epsilon''$, then w.h.p. $\tilde{p}(e,m) \le  -2 \epsilon''$ and hence
it eliminates all elements to the right of $m$ in $Q$. If the fraction of wins for
$e$ is more than $\half + 3 \epsilon''$, then w.h.p. $\tilde{p}(e,m) \ge
2 \epsilon''$ and hence it eliminates all elements to the left
of $m$ in $Q$.  It continues this process until it finds an element $f$ such that
the fraction of wins for $e$ is between $\half - 3 \epsilon''$ and 
$\half + 3 \epsilon''$.

In next Lemma, we show that \IBR\ achieves to 
find a $5\epsilon''$-\emph{nearly correct} bin for every element.
\begin{Lemma}
For any element $e \in S$, Step~\ref{step:binning} of
\textsc{Binary-Search-Ranking} places $e$ in bin $S_l$ such that
$\tilde{p}( e, S'(l)) > -5 \epsilon''$ and $\tilde{p}(S'(l+1), e) > -5
\epsilon''$ with probability $\ge 1- \frac1{n^5}$.
\label{lem:right_bin}
\end{Lemma}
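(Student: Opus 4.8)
The plan is to trace the two ways that \IBR\ can return a bin for a fixed element $e$—(a) it halts at a \emph{certified} leaf whose counter exceeds $10\log n$, or (b) it gives up on certifying a single leaf and instead runs \BR\ on the set $Q$ of visited anchors—and to show that in each case the returned bin $S_l$ satisfies $\tilde p(e,S'(l))>-5\epsilon''$ and $\tilde p(S'(l+1),e)>-5\epsilon''$, with every contributing event failing with probability at most $O(n^{-6})$ so that a union bound over the $O(\log\log n)$ rounds of \BR\ and the $O(1)$ macro-events keeps the per-element failure below $1/n^5$. The building block is reliability of each repeated comparison: since every comparison inside \IBR\ and \BR\ is repeated $\Theta(\log n)$ times, Hoeffding places the empirical win-frequency of $e$ against any fixed anchor $a$ within $\epsilon''$ of $\half+\tilde p(e,a)$ except with probability $n^{-c}$ for a large constant $c$; hence each ``is $e$ above or below $a$?'' decision is correct with probability $\ge\frac{19}{20}$ whenever $|\tilde p(e,a)|\ge\epsilon''$, while if $|\tilde p(e,a)|<\epsilon''$ the decision is a don't-care since either branch stays within an $\epsilon''$-nearly-correct region. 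I would also record that $k\mapsto\tilde p(e,S'(k))$ is monotone, which follows from strong stochastic transitivity together with the $\epsilon'$-ranking of $S'$ (Lemma~\ref{lem:anchor_ranking}); this makes the left/right tests on the interval tree behave like a noisy search with a well-defined target interval.

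For the walk itself I would follow the noisy-binary-search analysis of \cite{UrielPDE94}, treating the traversal of the interval tree as a biased random walk: using the $\ge\frac{19}{20}$ per-step reliability, the net number of toward-target moves in $30\log n$ steps concentrates (Chernoff) well above the tree depth $\le\log n$, so with probability $\ge 1-n^{-6}$ the walk enters the block of $\epsilon''$-nearly-correct leaves. For case (a), a Hoeffding bound on the leaf counter finishes it: if the terminal leaf were \emph{not} $5\epsilon''$-nearly correct—say $\tilde p(e,S'(l))\le-5\epsilon''$ or symmetrically $\tilde p(S'(l+1),e)\le-5\epsilon''$—then one of the two outcomes required for an increment ($e$ beats the left anchor, $e$ loses to the right anchor) has probability at most $\half-5\epsilon''$, so the increment probability is below $\half$ and $\Pr[\text{counter}>10\log n]\le n^{-6}$; thus a certified leaf is $5\epsilon''$-nearly correct.

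For case (b) the key observation is that adjacent nearly-correct bins force a close shared anchor: if $S_j$ and $S_{j+1}$ are both $\epsilon''$-nearly correct then $\tilde p(S'(j+1),e)\ge-\epsilon''$ and $\tilde p(e,S'(j+1))\ge-\epsilon''$, i.e. $|\tilde p(e,S'(j+1))|\le\epsilon''$. Hence whenever the walk enters the nearly-correct block but fails to certify a \emph{unique} leaf, that block has $\ge 2$ leaves and the walk must have visited an anchor $4\epsilon''$-close to $e$, which then lies in $Q$ (of size $\le 60\log n$). I would then check that \BR, comparing $e$ against the median of $Q$ for $\Theta(\log n)$ rounds, correctly eliminates half of $Q$ at each of its $O(\log\log n)$ steps (using monotonicity and the $\pm2\epsilon''$/$\pm3\epsilon''$ thresholds) and returns an $f$ with $|\tilde p(e,f)|\le4\epsilon''$, with cumulative failure $\le n^{-5}$. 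Finally I convert either guarantee into the claimed bin by placing $e$ immediately after $f=S'(l)$: then $\tilde p(e,S'(l))=\tilde p(e,f)\ge-4\epsilon''>-5\epsilon''$, and for the right anchor, either $e$ ranks below $S'(l+1)$ and $\tilde p(S'(l+1),e)\ge0$, or $e$ ranks above it and strong stochastic transitivity applied to $e>S'(l+1)>f$ gives $\tilde p(e,S'(l+1))\le\tilde p(e,f)\le4\epsilon''$, so $\tilde p(S'(l+1),e)\ge-4\epsilon''>-5\epsilon''$; the symmetric placement is handled identically.

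I expect the main obstacle to be the random-walk step: defining a progress potential on the interval tree so that backtracking from wrong internal nodes and the variable time spent certifying leaves both contribute positive drift, and then tuning the Chernoff constants ($30\log n$ steps, the $10\log n$ counter threshold, and the $\frac{19}{20}$ reliability) to yield a clean $n^{-6}$ per-event failure—in particular, reconciling the single-target analysis with the multiple-nearly-correct-bins case so that a failure to certify always deposits a $4\epsilon''$-close anchor into $Q$.
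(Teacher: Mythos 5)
Your proposal is correct and follows essentially the same route as the paper: a drift/Chernoff analysis of the random walk on the interval tree (à la Feige et al.), a counter argument showing a certified leaf must be a nearly correct bin, the observation that otherwise the walk deposits an anchor close to $e$ into $Q$, and the correctness of \BR\ on $Q$ — these correspond directly to the paper's Lemmas~\ref{lem:far}, \ref{lem:interval}, \ref{lem:close} and \ref{lem:close2}. The only cosmetic difference is that you organize the case split by algorithm outcome (certified leaf vs.\ fallback to \BR) while the paper splits by whether $e$ is far from all anchors or close to some anchor, and your final conversion correctly leaves the $\epsilon'$ slack needed because $S'$ is only $\epsilon'$-ranked, which is exactly why the bound is stated with $5\epsilon''$ rather than $4\epsilon''$.
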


\textbf{Ranking within each bin}
Once we have identified the bins, we rank the elements inside each
bin. By Lemma~\ref{lem:bin_bound}, inside each bin all elements are
close to the bin's anchors except at most $10 (\log n)^{x+1}$ of them.

The algorithm finds the elements close to anchors in Step $5a$ by
comparing each element in the bin with the bin's anchors. If
an element in bin $S_j$ is close to bin's anchors $S'(j)$ or
$S'(j+1)$ , the algorithm moves it to the set $C_j$ or $C_{j+1}$ accordingly
and if it is far away from both, the algorithm  moves it to the set
$B_j$. The following two lemmas state that this separating process happens
accurately with high probability. The proofs of these results follow
from the Chernoff bound and hence omitted.
\begin{Lemma} At the end of Step~\ref{step:binning_further}, for all $j$,
  $\forall e \in C_j$, $|\tilde{p}(e, S'(j))| <
  7\epsilon''$ with probability $\ge 1- \frac1{n^3}$.
\label{lem:boundary_bin}
\end{Lemma}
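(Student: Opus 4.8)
The plan is to observe that membership in $C_j$ is decided by a single empirical comparison against the anchor $S'(j)$, and then to transfer from the empirical win-fraction to the true probability by a Chernoff/Hoeffding bound, closing with a union bound over all elements. The whole argument is a concentration estimate; the care lies in the bookkeeping rather than in the analysis.

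First I would pin down exactly how an element reaches $C_j$. Inspecting the loop in Step~\ref{step:binning_further}, an element $e$ is inserted into $C_j$ in one of two ways: either while its own bin $S_j$ is processed, via Step~\ref{step:check_close}, because $\textsc{Compare2}(e,S'(j),10(\epsilon'')^{-2}\log n)$ returned a value in $[\frac12-6\epsilon'',\frac12+6\epsilon'']$; or while the preceding bin $S_{j-1}$ is processed, via Step~\ref{step:check_far}, whose right anchor is $S'((j-1)+1)=S'(j)$, under the identical interval test. In both cases the reported fraction $\hat p$ of wins of $e$ over $S'(j)$, computed from $k=10(\epsilon'')^{-2}\log n$ independent comparisons, satisfies $\hat p\in[\frac12-6\epsilon'',\frac12+6\epsilon'']$. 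Thus the relevant anchor is always $S'(j)$, regardless of which rule fired.

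Next I would control the estimation error. Since $\textsc{Compare2}$ returns the mean of $k$ i.i.d.\ Bernoulli$(p(e,S'(j)))$ trials, Hoeffding's inequality gives
\[
\Pr\!\left(|\hat p - p(e,S'(j))|\ge \epsilon''\right)\le 2e^{-2k(\epsilon'')^2}=2e^{-20\log n}=2n^{-20}.
\]
On the complementary event, $\hat p\in[\frac12-6\epsilon'',\frac12+6\epsilon'']$ forces $p(e,S'(j))\in(\frac12-7\epsilon'',\frac12+7\epsilon'')$, i.e.\ $|\tilde p(e,S'(j))|<7\epsilon''$, which is exactly the claim. The $\epsilon''$ slack in the sample budget is precisely what widens the defining $6\epsilon''$ window into the stated $7\epsilon''$ guarantee, and the budget constant $10$ is what keeps the per-comparison failure polynomially tiny.

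Finally I would union-bound. Each of the at most $n$ non-anchor elements triggers at most two $\textsc{Compare2}$ calls in Step~$5$, so at most $2n$ estimation events matter; summing the per-event failure probability yields a total failure probability at most $2n\cdot 2n^{-20}=4n^{-19}\le n^{-3}$ for all $n\ge 2$, as required. The only genuine obstacle, as noted, is the bookkeeping: recognizing that $C_j$ is populated from two distinct loop iterations so that $S'(j)$ is the correct reference anchor in either case, and checking that the chosen constants ($6\epsilon''$ window, budget $10(\epsilon'')^{-2}\log n$) leave enough room for both the $7\epsilon''$ conclusion and the union bound.
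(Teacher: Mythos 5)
Your proof is correct and is exactly the argument the paper has in mind: the paper omits the proof of Lemma~\ref{lem:boundary_bin}, stating only that it ``follows from the Chernoff bound,'' and your write-up supplies precisely that — identifying that both insertion routes into $C_j$ test $e$ against the same anchor $S'(j)$, applying Hoeffding with the $\epsilon''$ slack to widen the $6\epsilon''$ window to $7\epsilon''$, and union-bounding over the at most $2n$ \textsc{Compare2} calls. The constants all check out ($4n^{-19}\le n^{-3}$ for $n\ge 2$), so nothing further is needed.
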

\begin{Lemma} 
At the end of Step~\ref{step:binning_further}, for all $j$, $\forall
e \in B_j$, $\min(\tilde{p}(e, S'(j)),\tilde{p}(S'(j+1), e)) >
5 \epsilon''$ with probability $\ge 1-\frac1{n^3}$.
\label{lem:middle_bin}
\end{Lemma}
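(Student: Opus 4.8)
The plan is to combine a Hoeffding concentration bound for the empirical estimates produced by \textsc{Compare2} with the binning guarantee already established in Lemma~\ref{lem:right_bin}. First I would fix an element $e$ and an index $j$ with $e \in B_j$, and write $\hat p(e,S'(j))$, $\hat p(e,S'(j+1))$ for the fractions returned by the two \textsc{Compare2} calls in Step~\ref{step:binning_further}, each using $k = 10(\epsilon'')^{-2}\log n$ comparisons. By Hoeffding's inequality, $\Pr(|\hat p(e,\cdot) - p(e,\cdot)| > \epsilon'') \le 2\exp(-2k(\epsilon'')^2) = 2n^{-20}$, so each empirical estimate is within $\epsilon''$ of the true comparison probability on an overwhelmingly likely event, which I will call the concentration event.

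Next I would describe the single deterministic implication that drives the lemma. Membership $e \in B_j$ means that neither \textsc{Compare2} output landed in $[\tfrac12 - 6\epsilon'', \tfrac12 + 6\epsilon'']$; in particular $\hat p(e, S'(j))$ lies outside this band. On the concentration event this forces $p(e, S'(j)) \notin [\tfrac12 - 5\epsilon'', \tfrac12 + 5\epsilon'']$, i.e.\ either $\tilde p(e, S'(j)) > 5\epsilon''$ or $\tilde p(e, S'(j)) < -5\epsilon''$. The second alternative is exactly what Lemma~\ref{lem:right_bin} forbids, since it guarantees $\tilde p(e, S'(j)) > -5\epsilon''$ for the bin $S_j$ into which $e$ was placed in Step~\ref{step:binning}. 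Hence $\tilde p(e, S'(j)) > 5\epsilon''$. The identical argument applied to $S'(j+1)$ gives $\tilde p(e, S'(j+1)) > 5\epsilon''$ or $< -5\epsilon''$; rewriting via $\tilde p(S'(j+1), e) = -\tilde p(e, S'(j+1))$ and invoking the second inequality of Lemma~\ref{lem:right_bin}, namely $\tilde p(S'(j+1), e) > -5\epsilon''$, eliminates the wrong sign and yields $\tilde p(S'(j+1), e) > 5\epsilon''$. Taking the minimum of the two gives the claim for this $e$.

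Finally I would assemble the probability bound by a union bound over the two failure sources. Across all elements there are at most $2n$ \textsc{Compare2} calls, contributing at most $4n^{-19}$ to the failure probability, while the event of Lemma~\ref{lem:right_bin} fails for some element with probability at most $n \cdot \frac1{n^5} = \frac1{n^4}$. Summing, the bad event has probability at most $\frac1{n^4} + 4n^{-19} \le \frac1{n^3}$ for all sufficiently large $n$. Since the concentration event is a single global event, there is no circular conditioning on the (random) composition of the sets $B_j$: on that event, whatever $B_j$ turns out to be, the argument above applies to every $e$ it contains, so the conclusion holds uniformly over all $j$ and all $e \in B_j$ at once.

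The only real subtlety I anticipate is the sign-pinning: lying outside the empirical confidence band certifies merely that $e$ is \emph{far} from the anchor, not on which side. The work is therefore not in the concentration estimate but in observing that Lemma~\ref{lem:right_bin} supplies precisely the one-sided information needed to discard the case $\tilde p(e,S'(j)) < -5\epsilon''$ (and symmetrically for the right anchor), after which everything reduces to matching the $5\epsilon''$ versus $6\epsilon''$ gaps against the $\epsilon''$ concentration width.
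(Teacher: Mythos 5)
Your proof is correct and matches the argument the paper intends: the paper omits this proof, remarking only that it ``follows from the Chernoff bound,'' and your write-up supplies exactly that concentration step together with the one genuinely necessary extra ingredient, namely invoking Lemma~\ref{lem:right_bin} to pin down the signs of $\tilde p(e,S'(j))$ and $\tilde p(S'(j+1),e)$ once concentration only certifies $|\tilde p|>5\epsilon''$. Your accounting ($\frac1{n^4}$ from applying Lemma~\ref{lem:right_bin} to all $n$ elements plus $4n^{-19}$ from at most $2n$ \textsc{Compare2} calls) correctly yields the stated $1-\frac1{n^3}$.
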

Combining Lemmas~\ref{lem:bin_bound},~\ref{lem:anchor_ranking}
and~\ref{lem:middle_bin} next lemma shows that the size of $B_j$ is bounded for
all $j$.
\begin{Lemma}
At the end of Step~\ref{step:binning_further}, $|B_j| \le 10 (\log
n)^{x+1}$ for all $j$, with probability $\ge 1-\frac3{n^3}$.
\label{lem:real_bin_bound}
\end{Lemma}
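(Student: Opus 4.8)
The plan is to bound each $|B_j|$ by combining the three cited lemmas and then union-bounding their failure events. Recall that $B_j$ collects precisely those elements that the coarse-ranking step (Step~\ref{step:binning}) routed into bin $j$ and that subsequently failed both closeness tests in Steps~\ref{step:check_close}--\ref{step:check_far}, so intuitively $B_j$ consists of elements sitting well inside bin $j$, away from both of its anchors. The strategy is therefore to first certify that every element of $B_j$ genuinely lies strictly between the two anchors of bin $j$ with a definite margin, and then to use the sampling bound of Lemma~\ref{lem:bin_bound} to argue that only few elements of $S$ can occupy such a position.

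Concretely, I would first invoke Lemma~\ref{lem:middle_bin}: with probability $\ge 1-1/n^3$, simultaneously for all $j$ and all $e\in B_j$ we have $\tilde{p}(e,S'(j))>5\epsilon''$ and $\tilde{p}(S'(j+1),e)>5\epsilon''$. Thus every element of $B_j$ beats the left anchor $S'(j)$ and loses to the right anchor $S'(j+1)$, each by margin more than $5\epsilon''$. I would then apply Lemma~\ref{lem:bin_bound} with the choice $\epsilon=5\epsilon''$: with probability $\ge 1-1/n^4$, for every index $k$ the number of elements $e\in S$ with $\tilde{p}(e,S'(k))>5\epsilon''$ and $\tilde{p}(S'(k+1),e)>5\epsilon''$ is at most $5(\log n)^{x+1}$. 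Identifying $B_j$ with a subset of the set counted by Lemma~\ref{lem:bin_bound} at $k=j$ then yields a bound on $|B_j|$.

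The one point that needs genuine care — and the main obstacle — is that Lemma~\ref{lem:bin_bound} is stated for the \emph{true} order of the sampled anchors (where $\tilde{p}(S'(i),S'(j))\ge 0$ for $i>j$), whereas \textsc{Binary-Search-Ranking} orders the anchors via \RankX\ and hence only produces an $\epsilon'$-ranking of them. This is exactly what Lemma~\ref{lem:anchor_ranking} (failure probability $1/n^6$) is for: it guarantees the algorithm's anchor order agrees with the true order up to bias $\epsilon'<\epsilon''$. Because the order is only approximate, a pair of algorithm-consecutive anchors $S'(j),S'(j+1)$ need not be truly consecutive, so an element certified to sit between them may in the true order span a slightly larger interval covering more than one true bin. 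Accounting for this mismatch is what forces the looser constant: each algorithm bin is covered by at most two true bins, so the count at most doubles from $5(\log n)^{x+1}$ to $10(\log n)^{x+1}$. Finally I would union-bound the three failure events — $1/n^4$ from Lemma~\ref{lem:bin_bound}, $1/n^6$ from Lemma~\ref{lem:anchor_ranking}, and $1/n^3$ from Lemma~\ref{lem:middle_bin} — to obtain total failure probability at most $3/n^3$, completing the proof.
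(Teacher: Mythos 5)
Your overall route is the paper's: certify via Lemma~\ref{lem:middle_bin} that every $e\in B_j$ is more than $5\epsilon''$ away from both of bin $j$'s anchors, bring in Lemma~\ref{lem:anchor_ranking} for the anchor ranking, count via Lemma~\ref{lem:bin_bound}, and union-bound the three failure events to get $3/n^3$. You also correctly isolate the one delicate point: Lemma~\ref{lem:bin_bound} counts elements between \emph{truly} consecutive anchors, while the algorithm's anchors are only $\epsilon'$-ranked, so the index $j$ in the algorithm's order need not be an index in the true order.

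Your resolution of that point is where the gap lies. The claim that ``each algorithm bin is covered by at most two true bins'' is not justified and is false in general: an $\epsilon'$-ranking only forces wrongly ordered pairs to be within $\epsilon'$ of each other, so a cluster of nearly tied anchors can be arbitrarily permuted, and many anchors may sit truly between the algorithm-consecutive pair $S'(j),S'(j+1)$; hence the algorithm bin can overlap many true bins, and the proposed doubling from $5(\log n)^{x+1}$ to $10(\log n)^{x+1}$ does not follow. The paper bridges the mismatch with Lemma~\ref{lem:trans_tri} instead: for $k<j$ the $\epsilon'$-ranking gives $\tilde{p}(S'(k),S'(j))\le\epsilon'$, which together with $\tilde{p}(S'(j),e)<-5\epsilon''$ yields $\tilde{p}(e,S'(k))>5\epsilon''-\epsilon'>4\epsilon''$, and symmetrically for indices $\ge j+1$. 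So every $e\in B_j$ is more than $4\epsilon''$ above \emph{every} anchor with algorithm index $\le j$ and more than $4\epsilon''$ below \emph{every} anchor with algorithm index $\ge j+1$; by strong stochastic transitivity these two groups of anchors cannot interleave in the true order, so all of $B_j$ lies, with margin $4\epsilon''$, between a single truly consecutive pair of anchors, and Lemma~\ref{lem:bin_bound} applies to that one pair. Without this step your application of Lemma~\ref{lem:bin_bound} ``at $k=j$'' is not licensed.
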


Since all the elements in $C_j$ are already close to an anchor, they do
not need to be ranked. By Lemma~\ref{lem:bin_bound} with probability $\ge
1-\frac3{n^3}$ the number of elements in $B_j$ is at most
$10 (\log n)^{x+1}$. Therefore we use \RankX\  to rank these
elements and output the final ranking.

Lemma~\ref{lem:bin_ranking_accuracy} shows that all $B_j$'s are
$\epsilon''$-ranked at the end of Step~\ref{step:bin_ranking}.  Proof
follows from properties of \RankX\  and union bound.
\begin{Lemma}
\label{lem:bin_ranking_accuracy}
At the end of Step~\ref{step:bin_ranking}, all $B_j$s are
$\epsilon''$-ranked with probability $\ge 1-\frac1{n^3}$.
\end{Lemma}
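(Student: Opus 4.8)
The plan is to invoke the guarantee of \RankX\ on each bin separately and then combine the per-bin failure probabilities by a union bound. First I would recall that Step~\ref{step:bin_ranking} ranks each nonempty $B_j$ by calling \RankX\ $\Paren{B_j,\epsilon'',\frac1{n^4}}$. By the defining property of a \RankX\ algorithm, each such call outputs an $\epsilon''$-ranking of $B_j$ with probability at least $1-\frac1{n^4}$. Since $B_j\subseteq S$, the comparison probabilities restricted to $B_j$ still satisfy strong stochastic transitivity and the stochastic triangle inequality, so \RankX\ is indeed applicable to each $B_j$ and its correctness guarantee can be used verbatim.

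Next I would bound the number of bins. In Step~$5$ the index $j$ runs from $1$ to $\left\lfloor\frac{n}{(\log n)^x}\right\rfloor+1$, so there are at most $\frac{n}{(\log n)^x}+1\le n$ sets $B_j$ once $n$ is large enough (recall $x>1$). This count is what ties the per-bin error $\frac1{n^4}$ to the target $\frac1{n^3}$.

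Finally I would apply the union bound over these bins: the probability that some $B_j$ fails to be $\epsilon''$-ranked is at most the sum of the per-bin failure probabilities, namely $\left(\frac{n}{(\log n)^x}+1\right)\cdot\frac1{n^4}\le n\cdot\frac1{n^4}=\frac1{n^3}$. Hence with probability $\ge 1-\frac1{n^3}$ every $B_j$ is simultaneously $\epsilon''$-ranked, which is exactly the claim.

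The argument has no genuinely hard step; it is a routine union bound whose only delicate points are (i) confirming that the bin count is at most $n$, so that the aggregated error matches the stated $\frac1{n^3}$, and (ii) observing that the two stochastic assumptions are hereditary under passing to subsets, which is what legitimizes feeding each $B_j$ into \RankX. I would expect the bookkeeping of the bin count to be the only place requiring a moment's care.
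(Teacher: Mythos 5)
Your proof is correct and is exactly the argument the paper intends: the paper states only that the lemma ``follows from properties of \RankX\ and union bound,'' and your write-up supplies precisely that — the per-bin guarantee of \RankX$\Paren{B_j,\epsilon'',\frac1{n^4}}$, the count of at most $n$ bins, and the union bound giving failure probability at most $n\cdot\frac1{n^4}=\frac1{n^3}$. No discrepancy with the paper's approach.
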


Combining the above set of results yields our main result.
\begin{Theorem}
\label{thm:ranking_upper}
Given access to $\RankX\ $, 
\textsc{Binary-Search-Ranking} uses $O\Paren{\frac{n \log n (\log \log
    n)^x}{\epsilon^2}}$ comparisons and produces an $\epsilon$-ranking
with probability $\ge 1- \frac{1}{n}$.
\end{Theorem}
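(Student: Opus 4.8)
The plan is to prove the two assertions — the $O\Paren{\frac{n\log n(\log\log n)^x}{\epsilon^2}}$ comparison bound and the $\epsilon$-ranking guarantee with probability $\ge 1-\frac1n$ — separately, treating every preceding lemma as a black box.

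For the comparison count I would simply add up the five steps. Step~\ref{step:anchor_ranking} calls \RankX\ on the $\frac{n}{(\log n)^x}$ anchors with bias $\epsilon'=\Theta(\epsilon)$ and confidence $\frac1{n^6}$; since $\log|S'|\le\log n$ and $\log\frac{|S'|}{1/n^6}=O(\log n)$, the $(\log n)^x$ factor coming from \RankX\ cancels the $\frac1{(\log n)^x}$ in $|S'|$, leaving $O\Paren{\frac{n\log n}{\epsilon^2}}$. The coarse-ranking Step~\ref{step:binning} runs \IBR\ (a random walk of $O(\log n)$ steps followed by a \BR\ of $O(\log\log n)$ rounds) on each of the $n$ elements, costing $O\Paren{\frac{n\log n\log\log n}{\epsilon^2}}$, which is within budget because $x>1$; Step $5a$ likewise costs $O\Paren{\frac{n\log n}{\epsilon^2}}$. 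The dominant term is Step~\ref{step:bin_ranking}, where each $B_j$ is ranked by \RankX. The crucial observation is that $|B_j|\le 10(\log n)^{x+1}$ (Lemma~\ref{lem:real_bin_bound}), so $\log|B_j|=O(\log\log n)$ and $\log\frac{|B_j|}{1/n^4}=O(\log n)$, whence ranking $B_j$ costs $O\Paren{\frac{|B_j|}{\epsilon^2}(\log\log n)^x\log n}$; summing and using $\sum_j|B_j|\le n$ gives $O\Paren{\frac{n\log n(\log\log n)^x}{\epsilon^2}}$. Adding the five contributions yields the claimed total.

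For correctness I would first take a union bound over the failure events of Lemmas~\ref{lem:bin_bound},~\ref{lem:anchor_ranking},~\ref{lem:right_bin} (the last applied to each of the $\le n$ elements, contributing $n\cdot\frac1{n^5}=\frac1{n^4}$),~\ref{lem:boundary_bin},~\ref{lem:middle_bin},~\ref{lem:real_bin_bound} and~\ref{lem:bin_ranking_accuracy}. Each failure probability is $O\Paren{\frac1{n^3}}$ or smaller, so the total is $O\Paren{\frac1{n^3}}\le\frac1n$, and it suffices to show that on the complementary good event $S^o$ is $\epsilon$-ranked. Equivalently, for every pair the algorithm orders as ``$u$ above $v$,'' I must show $\tilde p(v,u)\le\epsilon$. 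I would split into cases according to where $u$ and $v$ land: both anchors (bias $\le\epsilon'$ by Lemma~\ref{lem:anchor_ranking}); both inside one bin-segment (two elements of the same $C_j$ are each within $7\epsilon''$ of $S'(j)$, hence within $14\epsilon''$ of each other; $B_j$ is $\epsilon''$-ranked by Lemma~\ref{lem:bin_ranking_accuracy}; a $C_j$/$B_j$ pair is fine because $B_j$ elements sit $>5\epsilon''$ above $S'(j)$ by Lemma~\ref{lem:middle_bin} while $C_j$ elements are within $7\epsilon''$ of it by Lemma~\ref{lem:boundary_bin}); and the general cross-segment case, where I chain $v\to(\text{right anchor of }v)\to\cdots\to(\text{left anchor of }u)\to u$, bounding the two element-to-anchor legs by $5\epsilon''$ or $7\epsilon''$ and the anchor-to-anchor leg by $\epsilon'$.

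The step I expect to be the real work is this last chaining, because the stochastic triangle inequality and strong stochastic transitivity are stated only for a known rank order $r(i)>r(j)>r(k)$, whereas the one-sided bin-placement bounds (e.g.\ $v$ lies below its right anchor by at most $5\epsilon''$, $u$ lies above its left anchor by at most $5\epsilon''$) do not pin down the true order of $v,u$ and the anchors. I would handle this by assuming $\tilde p(v,u)>0$ (otherwise there is nothing to prove) and case-splitting on the ranks of $u$ and $v$ relative to the two bracketing anchors; in each case SST collapses the chain and the triangle inequality bounds what remains, the worst case accumulating to roughly $14\epsilon''+\epsilon'$. Finally I would note that the constants $\epsilon'=\epsilon/16$ and $\epsilon''=\epsilon/15$ are chosen exactly so that $14\epsilon''+\epsilon'<\epsilon$, closing the argument. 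A small auxiliary fact that streamlines several cases is that $|\tilde p(\cdot,\cdot)|$ itself satisfies the triangle inequality on any three elements, a direct consequence of SST together with the stochastic triangle inequality.
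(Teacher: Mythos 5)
Your proposal is correct and follows essentially the same route as the paper: the paper also splits the claim into a complexity bound (summing the five steps, with Step~\ref{step:bin_ranking} controlled by $|B_j|\le 10(\log n)^{x+1}$ so that $(\log|B_j|)^x=O((\log\log n)^x)$) and an accuracy bound (union bound over the failure events of Lemmas~\ref{lem:anchor_ranking},~\ref{lem:boundary_bin},~\ref{lem:middle_bin},~\ref{lem:bin_ranking_accuracy}, then chaining $\tilde p(e,f)\le 7\epsilon''+\epsilon'+7\epsilon''<15\epsilon''=\epsilon$ through the anchors). The ``auxiliary fact'' you invoke to chain biases without knowing the true order is exactly the paper's Lemma~\ref{lem:trans_tri}, so even the step you flagged as the real work is handled the same way.
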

Using \textsc{Merge-Rank}
as a \RankX\  algorithm with $x=3$ leads to the following corollary. 
\begin{Corollary}
\label{cor}
\textsc{Binary-Search-Ranking} uses $O\Paren{\frac{n \log n (\log \log n)^3}
{\epsilon^2}}$ comparisons and produces an $\epsilon$-ranking
with probability $\ge 1- \frac{1}{n}$.
\end{Corollary}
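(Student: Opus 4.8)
The plan is to obtain the Corollary as a one-step specialization of Theorem~\ref{thm:ranking_upper}, feeding that theorem a concrete \RankX\ subroutine with $x=3$. Theorem~\ref{thm:ranking_upper} is stated abstractly: given \emph{any} algorithm meeting the \RankX\ specification for some $x>1$ (one that for every $\epsilon,\delta>0$ returns an $\epsilon$-ranking with probability $\ge 1-\delta$ using $\cO\Paren{\frac{n}{\epsilon^2}(\log n)^x \log\frac{n}{\delta}}$ comparisons), \BSR\ attains $\cO\Paren{\frac{n\log n(\log\log n)^x}{\epsilon^2}}$ comparisons and an $\epsilon$-ranking with probability $\ge 1-\frac1n$. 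So all that remains is to exhibit a valid \RankX\ algorithm and read off its $x$.

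First I would invoke Lemma~\ref{lem:merge_rank}, which certifies precisely this: run with bias $\frac{\epsilon}{\log|S|}$ and confidence $\frac{\delta}{|S|^2}$, \textsc{Merge-Rank} uses $\cO\Paren{\frac{|S|(\log|S|)^3}{\epsilon^2}\log\frac{|S|}{\delta}}$ comparisons and returns an $\epsilon$-ranking with probability $\ge 1-\delta$, so it is a \textsc{Rank-3} algorithm and matches the \RankX\ specification with $x=3$. Substituting $x=3$ into the guarantee of Theorem~\ref{thm:ranking_upper} then yields a comparison count of $\cO\Paren{\frac{n\log n(\log\log n)^3}{\epsilon^2}}$ together with an $\epsilon$-ranking with probability $\ge 1-\frac1n$, which is exactly the stated claim.

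Because both Theorem~\ref{thm:ranking_upper} and Lemma~\ref{lem:merge_rank} are available, there is no genuine obstacle for the Corollary itself; it is a black-box instantiation. The only point worth checking is that the internal parameter reshaping in Lemma~\ref{lem:merge_rank} (its use of $\frac{\epsilon}{\log|S|}$ and $\frac{\delta}{|S|^2}$) is already absorbed into the \textsc{Rank-3} certification, so that \BSR\ may treat \textsc{Merge-Rank} as a plug-in \RankX\ subroutine with $x=3$ without further bookkeeping. The substantive effort lives not in the Corollary but in Theorem~\ref{thm:ranking_upper}, where the improvement from $(\log n)^3$ down to $(\log\log n)^3$ comes from applying \RankX\ only to the $\frac{n}{(\log n)^3}$ anchors and to the bins of size $\cO((\log n)^4)$, so that each $\log(\cdot)$ inside the subroutine's cost collapses to a $\log\log n$.
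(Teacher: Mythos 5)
Your proposal is correct and is essentially identical to the paper's own treatment: the paper derives the Corollary by citing Lemma~\ref{lem:merge_rank} to certify \textsc{Merge-Rank} as a \textsc{Rank-3} algorithm and then instantiating Theorem~\ref{thm:ranking_upper} with $x=3$. Your observation that the parameter reshaping inside Lemma~\ref{lem:merge_rank} is already absorbed into the \textsc{Rank-3} certification is the only bookkeeping point worth noting, and you handle it correctly.
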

Using \textsc{PALPAC-AMPRR}~\cite{BalazsRAE15} as a 
\RankX\ algorithm with $x=1$ leads to the following 
corollary over PL model.
\begin{Corollary}
\label{cor:2}
Over PL model, \textsc{Binary-Search-Ranking} uses $O\Paren{\frac{n \log n \log \log n}                                                      
{\epsilon^2}}$ comparisons and produces an $\epsilon$-ranking
with probability $\ge 1- \frac{1}{n}$.
\end{Corollary}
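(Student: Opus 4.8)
The plan is to obtain the corollary as a direct instantiation of Theorem~\ref{thm:ranking_upper}, taking the \RankX\ black box to be the Plackett--Luce ranking routine \textsc{PALPAC-AMPRR} of~\cite{BalazsRAE15} with parameter $x=1$. Concretely, I would run \textsc{Binary-Search-Ranking} verbatim, wiring every internal call to \RankX\ (Step~\ref{step:anchor_ranking} on the anchor set $S'$ and Step~\ref{step:bin_ranking} on each residual bin $B_j$) into a call of \textsc{PALPAC-AMPRR}. Two things then have to be checked before Theorem~\ref{thm:ranking_upper} can be applied off the shelf: first, that the surrounding framework (anchor selection, coarse ranking via \IBR, and the close/far binning of Steps~$4$--$5$) is valid under the Plackett--Luce model; and second, that \textsc{PALPAC-AMPRR} genuinely meets the \RankX\ interface at $x=1$.

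For the first point, I would note that the correctness of every component of \textsc{Binary-Search-Ranking} other than the \RankX\ calls rests only on strong stochastic transitivity and the stochastic triangle inequality: Lemmas~\ref{lem:bin_bound}, \ref{lem:right_bin}, \ref{lem:boundary_bin}, \ref{lem:middle_bin} and \ref{lem:real_bin_bound} are all proved from these two structural properties together with Chernoff bounds, never from any finer feature of the comparison model. Since the Plackett--Luce model satisfies both properties (as recorded in Section~\ref{sec:intro}), all of these lemmas hold verbatim under PL, and the bin-size bound $|B_j|\le 10(\log n)^{x+1}$ in particular survives. The only model-specific ingredient is the guarantee of \textsc{PALPAC-AMPRR} itself, whose correctness~\cite{BalazsRAE15} is established precisely for PL; restricting the corollary to the PL model is exactly what makes this subroutine a legitimate choice.

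For the second point, I would verify that \textsc{PALPAC-AMPRR} is a \RankX\ algorithm with $x=1$: it returns an $\epsilon$-ranking with probability $\ge 1-\delta$ using $\cO\Paren{\frac{n}{\epsilon^2}\log n\,\log\frac{n}{\delta\epsilon}}$ comparisons, which matches the template $\cO\Paren{\frac{n}{\epsilon^2}(\log n)^x\log\frac{n}{\delta}}$ at $x=1$ up to the harmless extra $\log\frac1\epsilon$ inside the confidence factor. I would dispatch this discrepancy by observing that in both internal invocations the subroutine is called with a confidence parameter $\delta$ of the form $\mathrm{poly}(1/n)$ (namely $1/n^6$ and $1/n^4$) and with bias a constant multiple of $\epsilon$, so that $\log\frac{n}{\delta\epsilon}=\cO\Paren{\log\frac{n}{\delta}+\log\frac1\epsilon}$ is absorbed into the $\log n$ factors already present in the accounting behind Theorem~\ref{thm:ranking_upper}.

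With both checks in place, I would substitute $x=1$ into the conclusion of Theorem~\ref{thm:ranking_upper}: the comparison count $\cO\Paren{\frac{n\log n(\log\log n)^x}{\epsilon^2}}$ becomes $\cO\Paren{\frac{n\log n\log\log n}{\epsilon^2}}$, and the success probability $\ge 1-\frac1n$ is inherited unchanged. I expect the main obstacle to be the bookkeeping of the second check rather than the application itself: one must confirm that the framework's analysis, nominally stated for some $x>1$, degrades gracefully to $x=1$---that the anchor count $n/\log n$, the tree depth $\cO(\log n)$ driving the coarse search, and the bin size $\cO((\log n)^2)$ (so that $\log|B_j|=\cO(\log\log n)$) all still yield the claimed total when summed using $\sum_j|B_j|\le n$, which is the single place where the $(\log\log n)^x$ factor actually enters.
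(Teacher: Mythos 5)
Your proposal is correct and matches the paper's own (essentially one-line) justification: the corollary is obtained exactly by instantiating Theorem~\ref{thm:ranking_upper} with \textsc{PALPAC-AMPRR} as the \RankX\ subroutine at $x=1$, with validity under PL resting on the fact that PL satisfies strong stochastic transitivity and the triangle inequality. Your additional checks (the extra $\log\frac1\epsilon$ in the \textsc{PALPAC-AMPRR} bound, absorbed since both internal calls use $\delta=\mathrm{poly}(1/n)$, and the graceful degradation of the framework's accounting from $x>1$ to $x=1$) are more careful than anything the paper writes down, and resolve looseness the paper silently glosses over.
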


It is well known that to rank a set of $n$ values under the noiseless
setting, $\Omega(n \log n)$ comparisons are necessary. We show that
under the noisy model, $\Omega\Paren{\frac{n}{\epsilon^2} \log
  \frac{n}{\delta}}$ samples are necessary to output an
$\epsilon$-ranking and hence our algorithm is near-optimal.

\begin{Theorem}
There exists a noisy model that satisfies strong stochastic
transitivity and stochastic triangle inequality such that to output an
$\epsilon$-ranking with probability $\geq 1-\delta$,
$\Omega\Paren{\frac{n}{\epsilon^2} \log \frac{n}{\delta}}$ comparisons
are necessary.
\label{thm:ranking_lower}
\end{Theorem}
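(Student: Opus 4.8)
The plan is to exhibit a single admissible hard model and reduce $\epsilon$-ranking on it to the task of simultaneously identifying the orientations of many nearly unbiased, independent coins, then lower bound the number of flips.

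\textbf{Construction.} Assume $\epsilon\le \frac14$ (otherwise the bound is vacuous up to constants). Partition the $n$ elements into $m=n/2$ disjoint pairs, and give the two elements of the $t$-th pair the consecutive ranks $2t-1,2t$. Inside a pair set $\tilde{p}(\text{better},\text{worse})=2\epsilon$, so the better element wins with probability $\half+2\epsilon$; across pairs make comparisons deterministic, every element of a higher-ranked pair beating every element of a lower-ranked pair with probability $1$ (so $\tilde{p}=\half$). Because the two elements of a pair are consecutive in rank, no element lies strictly between them, and a short case analysis of strong stochastic transitivity and of the triangle inequality (the only nontrivial constraint being $2\epsilon\le\half$) shows the model is admissible. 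The orientation of each pair is unknown to the algorithm.

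\textbf{Reduction.} First, any $\epsilon$-ranking must orient every pair correctly: if a pair with internal bias $2\epsilon$ were output in the wrong order, its higher-placed-but-weaker element $x$ would satisfy $p(x,\cdot)=\half-2\epsilon<\half-\epsilon$, violating the $\epsilon$-ranking condition. Second, only direct comparisons between the two elements of a pair carry any information about that pair's orientation, since comparing an element against anything outside its pair returns a fixed deterministic bit independent of the orientation. Hence the instance decouples into $m=n/2$ independent sub-problems, and producing an $\epsilon$-ranking with probability $\ge 1-\delta$ is exactly the task of determining, for each pair $t$, the sign of a coin that is $\mathrm{Ber}(\half+2\epsilon)$ or $\mathrm{Ber}(\half-2\epsilon)$, with all $m$ signs correct simultaneously with probability $\ge 1-\delta$.

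\textbf{Per-coin bound and combination.} Put the uniform prior on the $2^m$ orientation patterns; a worst-case bound follows from this average-case one. Let $N_t$ be the (adaptively chosen, hence random) number of comparisons spent on pair $t$, and let $\beta_t$ be the probability of misorienting it. For a single pair, a change-of-measure / KL-decomposition argument for stopping times (Wald's identity, equivalently the standard bandit lower-bound lemma), together with $\mathrm{KL}(\mathrm{Ber}(\half+2\epsilon)\,\|\,\mathrm{Ber}(\half-2\epsilon))=\Theta(\epsilon^2)$ and the binary relative-entropy bound $d(1-\beta,\beta)=\Omega(\log\frac1\beta)$, yields
\[
\EE[N_t]\ \ge\ \frac{c}{\epsilon^2}\,\log\frac{1}{\beta_t}.
\]
The independence of the $m$ observation streams is what should let one conclude that the failures share a single budget, $\sum_t\beta_t=O(\delta)$, rather than merely $\beta_t\le\delta$ for each $t$. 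Summing the per-pair bound and applying convexity of $-\log$ (Jensen) to this constraint over $m=n/2$ terms gives $\sum_t\log\frac1{\beta_t}=\Omega\!\left(m\log\frac{m}{\delta}\right)$, whence
\[
\sum_{t}\EE[N_t]\ =\ \Omega\!\left(\frac{n}{\epsilon^2}\log\frac{n}{\delta}\right),
\]
which is the claimed bound.

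\textbf{Main obstacle.} The easy half, $\Omega\!\left(\frac{n}{\epsilon^2}\log\frac1\delta\right)$, already follows from $\beta_t\le\delta$ (valid for every pair, since misorienting pair $t$ fails the ranking) applied to each of the $\Theta(n)$ pairs. The delicate point, and the true source of the extra $\log n$ factor, is justifying $\sum_t\beta_t=O(\delta)$ in the presence of adaptive, interleaved sampling: the allocation $N_t$ may depend on the outcomes of other pairs and could in principle correlate the error events, so one cannot simply multiply per-pair success probabilities. The crux is to combine the independence of the sub-instances' observations, which makes the orientation posterior factorize across pairs, with the stopping-time change-of-measure bound so that the per-pair estimates add correctly despite adaptivity; this is where I would concentrate the effort in a full proof.
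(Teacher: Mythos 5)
Your construction and reduction are essentially the paper's: the paper's proof sketch likewise builds a model of $n/2$ consecutive pairs each with internal bias $2\epsilon$, argues that an $\epsilon$-ranking must orient every pair correctly, and lower-bounds the cost of simultaneously identifying $n/2$ independent $2\epsilon$-biased coins. The only difference is the final information-theoretic tool (you use change-of-measure/Wald with a per-pair error budget where the paper invokes Fano's inequality), and both treatments leave that step at sketch level, so your proposal matches the paper's argument.
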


\section{Experiments}
\label{sec:experiments}
We compare the performance of our algorithms with that of others over
simulated data. Similar to~\cite{YisongT11}, we consider the
stochastic model where $p(i,j) = 0.6\ \forall i < j$. Note that this
model satisfies both strong stochastic transitivity and triangle
inequality. We find $0.05$-maximum with error probability $\delta =
0.1$. Observe that $i=1$ is the only $0.05$-maximum. We compare the
sample complexity of \textsc{Knockout} with that of
\textbf{BTM-PAC}~\cite{YisongT11}, \textbf{MallowsMPI}
~\cite{RobertEB14}, and
\textbf{AR}~\cite{HeckelSRW16}. \textbf{BTM-PAC} is an $(\epsilon,
\delta)$-PAC algorithm for the same model considered in this
paper. \textbf{MallowsMPI} finds a Condorcet winner which exists under
our general model. \textbf{AR} finds the maximum according to
\emph{Borda} scores.  We also tried \textbf{PLPAC}~\cite{BalazsRAE15},
developed originally for PL model but the algorithm could not meet
guarantees of $\delta=0.1$ under this model and hence omitted.  Note
that in all the experiments the reported numbers are averaged over 100
runs.
\begin{figure}[H]
\centering
\includegraphics[scale=0.4]{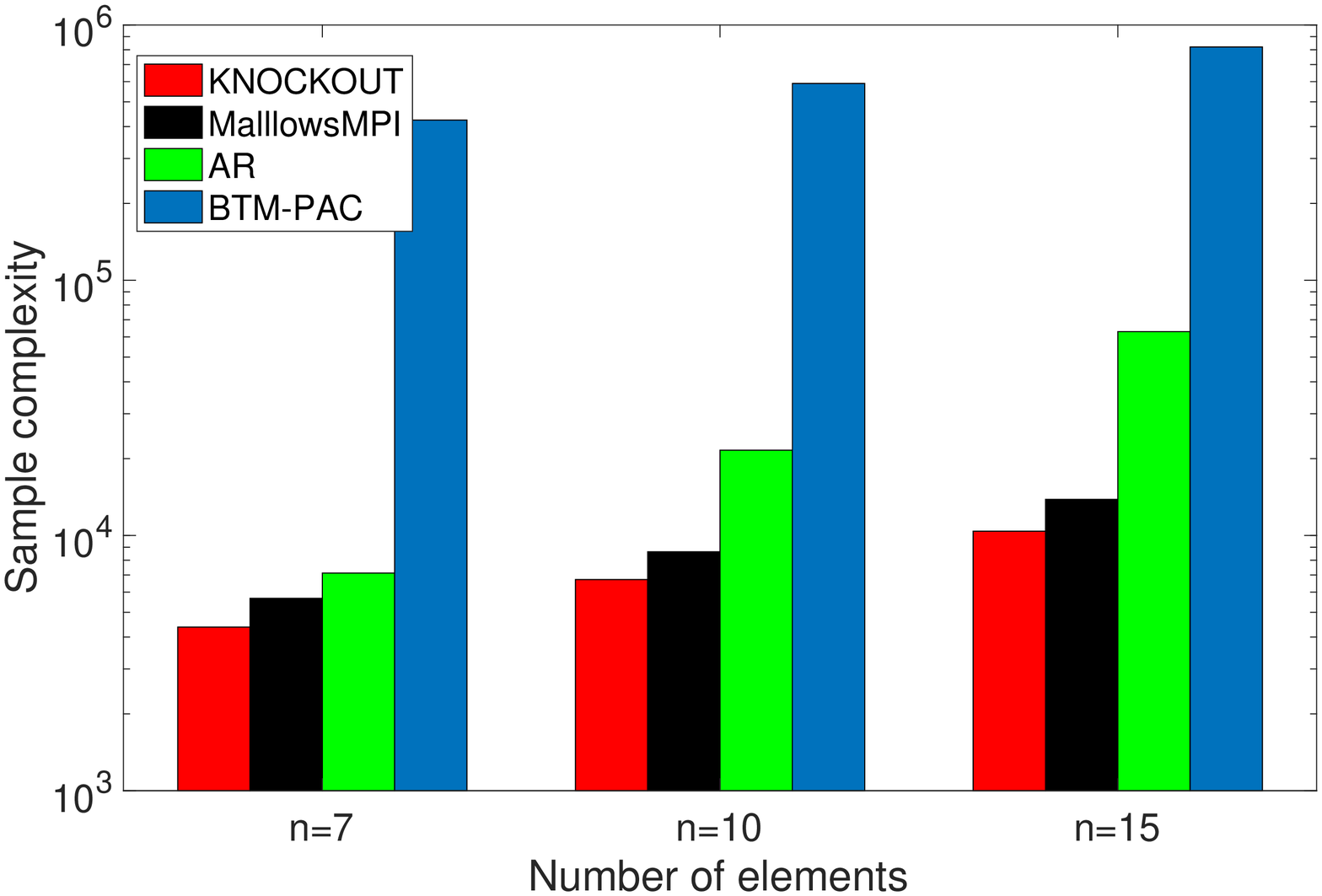}
\caption{Comparison of sample complexity for several models, and different input sizes
, with $\epsilon = 0.05$ and $\delta = 0.1$}
\label{fig:1}
\end{figure}

In Figure~\ref{fig:1}, we compare the sample complexity of algorithms
when there are 7, 10 and 15 elements. Our algorithm outperforms all
the others. \textbf{BTM-PAC} performs much worse in comparison to
others because of high constants in the algorithm. Further \textbf{BTM-PAC}
allows comparing an element with itself since the main objective in 
~\cite{YisongT11} is to reduce the regret. We include more comparisons 
with \textbf{BTM-PAC} in Appendix~\ref{sec:app_exp}. We exclude \textbf{BTM-PAC} for further experiments with higher
number of elements.
\begin{figure}[H]
\centering
\includegraphics[scale=0.4]{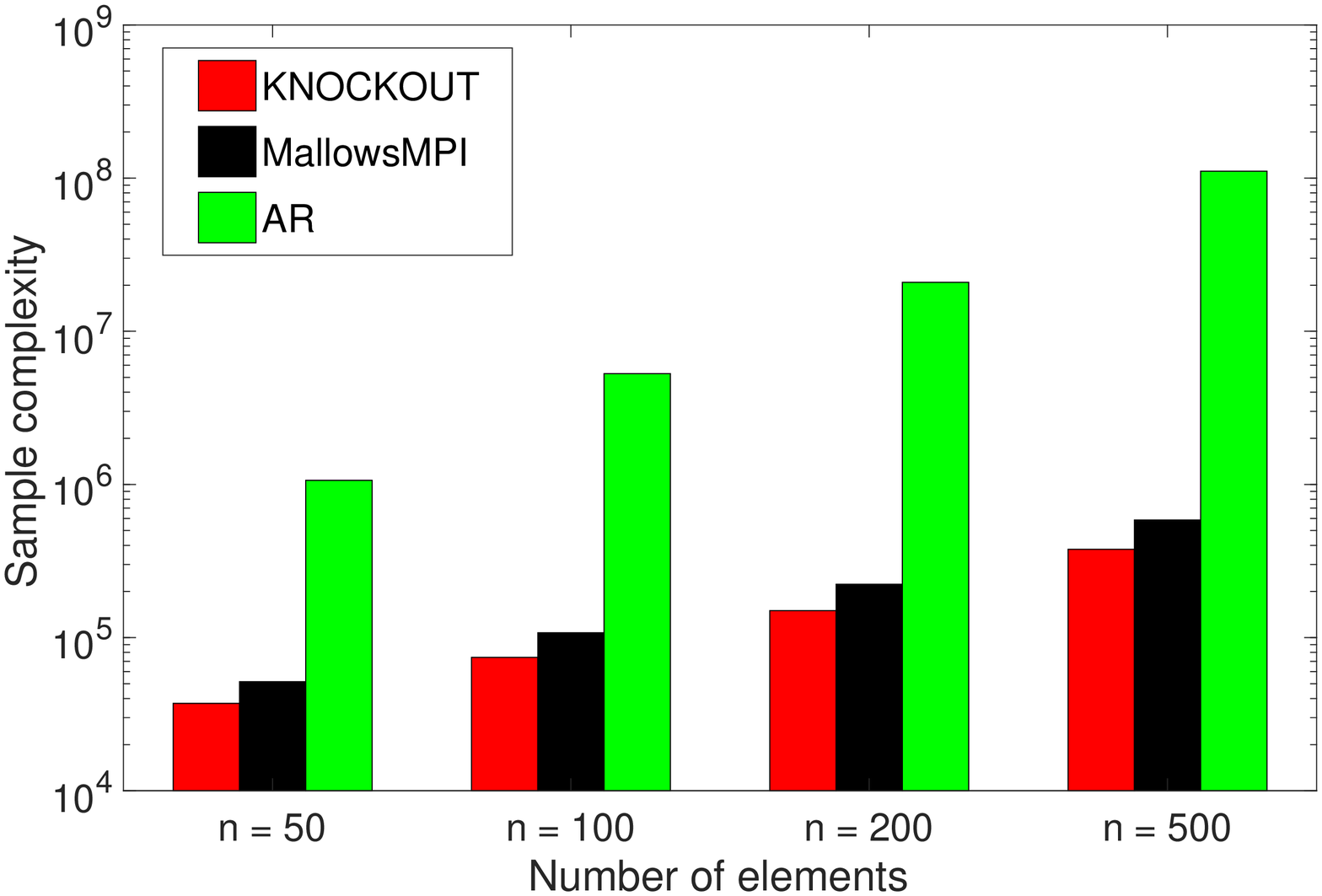}
\caption{Comparison of sample complexity for large input size, $\epsilon=0.05$, and 
$\delta=0.1$}
\label{fig:2}
\end{figure}

In Figure~\ref{fig:2}, we compare the algorithms when there are 50,
100, 200 and 500 elements.  Our algorithm outperforms others for
higher number of elements too. Performance of \textbf{AR} gets worse
as the number of elements increases since Borda scores of the elements
get closer to each other and hence \textbf{AR} takes more comparisons
to eliminate an element. Notice that number of comparisons is in
logarithmic scale and hence the performance of \textbf{MallowsMPI}
appears to be close to that of ours.

As noted in~\cite{BalazsRAE15}, sample complexity of \textbf{MallowsMPI} gets
worse as $\tilde{p}(i,j)$ gets close to $0$. To show the pronounced
effect, we use the stochastic model $p(1,j) = 0.6\ \forall j>1$,
$p(i,j) = 0.5 +\tilde{p}\ \forall j > i, i>1$ where $\tilde{p} < 0.1$, 
and the number of elements is 15.  Here too we find $0.05$-maximum with $\delta=0.1$. Note
that $i=1$ is the only $0.05$-maximum in this stochastic model.
\begin{figure}[H]
\centering
\includegraphics[scale=0.4]{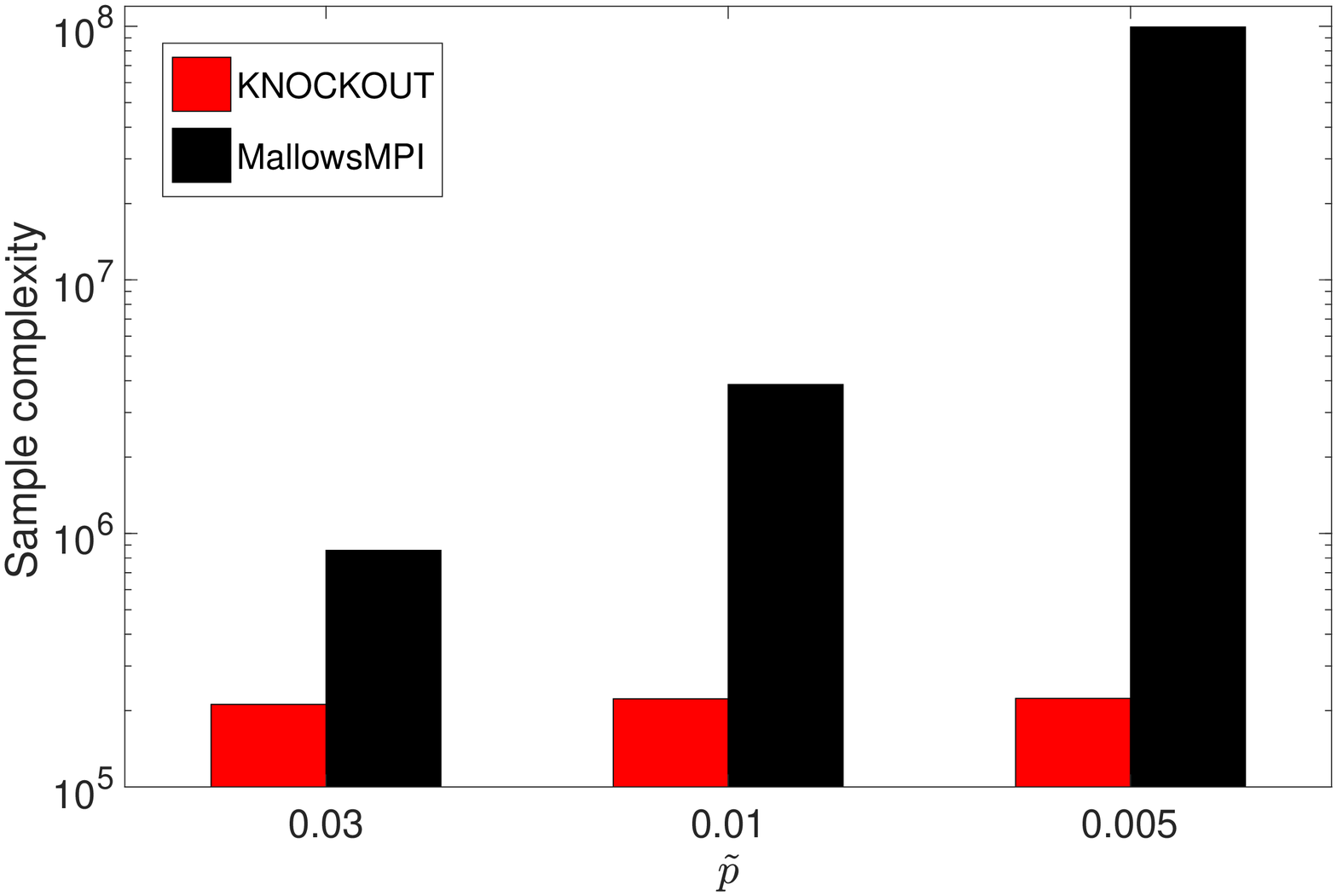}
\caption{Sample complexity comparison of \textsc{Knockout} with MallowsMPI
for different values of $\tilde{p}$, with $\epsilon=0.05$ and $\delta = 0.1$}
\label{fig:3}
\end{figure}
In Figure~\ref{fig:3}, we compare the algorithms for different values
of $\tilde{p}$: 0.01, 0.005 and 0.001. As discussed above, the
performance of \textbf{MallowsMPI} gets much worse whereas our
algorithm's performance stays unchanged. The reason is that
\textbf{MallowsMPI} finds the Condorcet winner using successive
elimination technique and as $\tilde{p}$ gets closer to 0,
\textbf{MallowsMPI} takes more comparisons for each elimination. Our
algorithm tries to find an alternative which defeats Condorcet winner
with probability $\ge 0.5-0.05$ and hence for alternatives that are
very close to each other, our algorithm declares either one of them as
winner after comparing them for certain number of times.

\begin{figure}[H]
\centering
\includegraphics[scale=0.4]{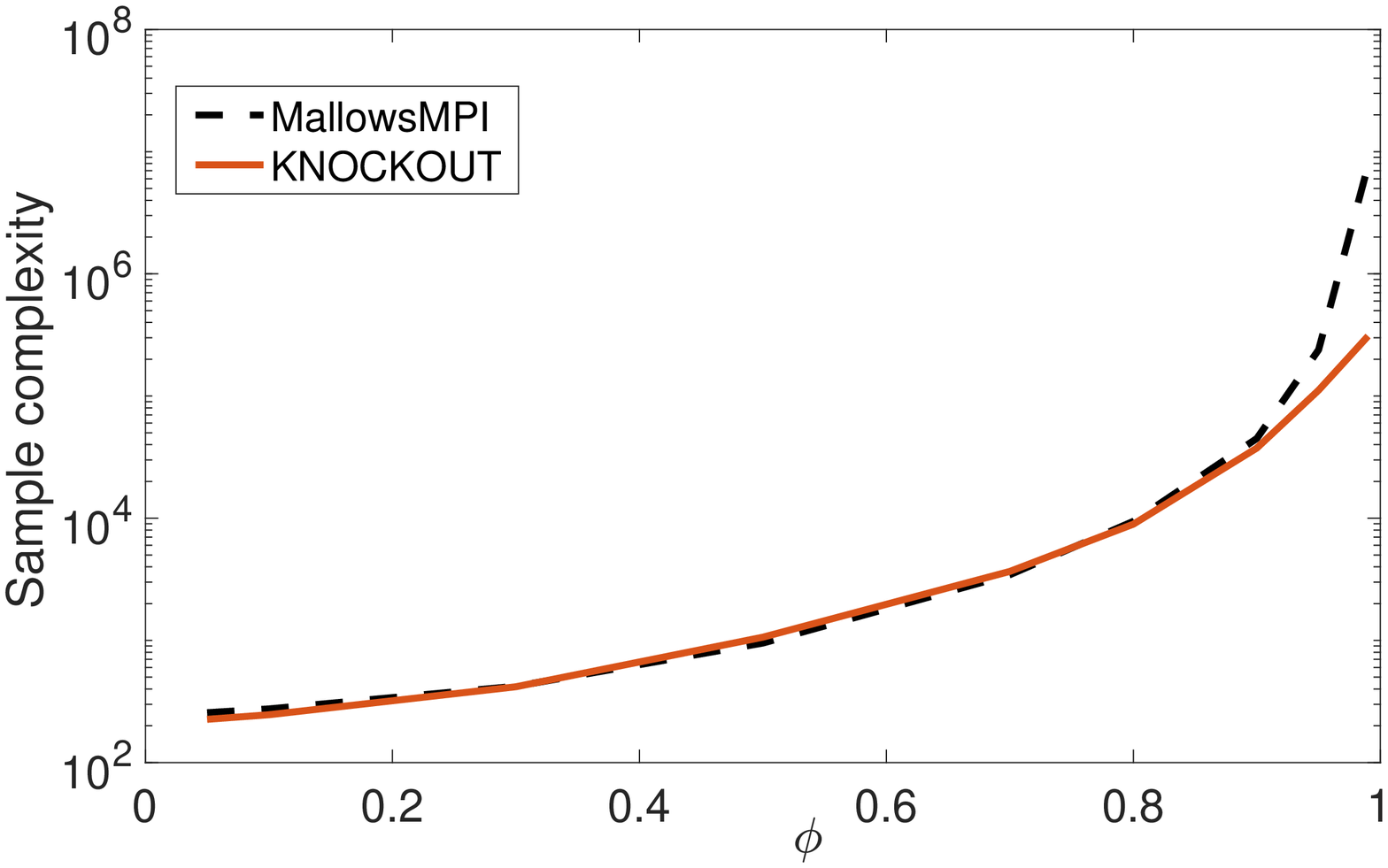}
\caption{Sample complexity of \textsc{Knockout} and \textbf{MallowsMPI} under Mallows model for various values of $\phi$}
\label{fig:6}
\end{figure}
Next we evaluate \textsc{Knockout} on Mallows model which does not satisfy 
triangle inequality. Mallows is a parametric model which is specified by 
single parameter $\phi$. As in~\cite{RobertEB14}, we consider $n=10$ elements 
and various values for $\phi$: 0.03, 0.1, 0.3, 0.5, 0.7, 0.8, 0.9, 0.95 and 
0.99. Here again we seek to find $0.05$-maximum with $\delta = 0.05$. 
As we can see in Figure~\ref{fig:6}, sample complexity of \textsc{Knockout} and \textbf{MallowsMPI} 
is essentially same under small values of $\phi$ but 
\textsc{Knockout} outperforms \textbf{MallowsMPI} as $\phi$ gets close to $1$ since 
comparison probabilities grow closer to $1$. Surprisingly, for all values of 
$\phi$ except for 0.99, \textsc{Knockout} returned Condorcet winner in all runs.
For $\phi = 0.99$, \textsc{Knockout} returned second best element in 10 runs out of 100.
Note that $\tilde{p}(1,2) = 0.0025$ and hence \textsc{Knockout} still outputed a 
$0.05$-maximum. Even though we could not show theoretical guarantees of \textsc{Knockout}
under Mallows model, our simulations suggest that it can perform well even under this model.   

More experiments are provided in Appendix~\ref{sec:app_exp}.

\section{Conclusion}
\label{sec:conclusion}
We studied maximum selection and ranking using noisy comparisons for
the broad model where the comparison probabilities satisfy strong
stochastic transitivity and the triangle inequality.  For maximum
selection, we presented a simple algorithm with linear, hence optimal,
sample complexity.  For ranking we presented a framework that improves
the performance of many ranking algorithms and applied it to merge
ranking to derive a near-optimal ranking algorithm.

We conducted several experiments and showed that our
algorithms perform well not only in theory, but also in practice.
Furthermore, they out-performed all existing algorithms. 

The maximum-selection experiments suggest that our algorithm
performs well even without the triangle-inequality assumption.
It would be of interest to extend our theoretical guarantees to
this case. 
For ranking, it would be interesting to close the
$(\log\log n)^3$ ratio between the upper- and lower- complexity bounds. 

\newpage
\bibliographystyle{plainnat}
\bibliography{main}

\begin{thebibliography}{26}
\providecommand{\natexlab}[1]{#1}
\providecommand{\url}[1]{\texttt{#1}}
\expandafter\ifx\csname urlstyle\endcsname\relax
  \providecommand{\doi}[1]{doi: #1}\else
  \providecommand{\doi}{doi: \begingroup \urlstyle{rm}\Url}\fi

\bibitem[Acharya et~al.(2014{\natexlab{a}})Acharya, Jafarpour, Orlitsky, and
  Suresh]{JayadevAAT14}
Acharya, Jayadev, Jafarpour, Ashkan, Orlitsky, Alon, and Suresh,
  Ananda~Theertha.
\newblock Sorting with adversarial comparators and application to density
  estimation.
\newblock In \emph{ISIT}, pp.\  1682--1686. IEEE, 2014{\natexlab{a}}.

\bibitem[Acharya et~al.(2014{\natexlab{b}})Acharya, Jafarpour, Orlitsky, and
  Suresh]{JayadevAAT214}
Acharya, Jayadev, Jafarpour, Ashkan, Orlitsky, Alon, and Suresh,
  Ananda~Theertha.
\newblock Near-optimal-sample estimators for spherical gaussian mixtures.
\newblock \emph{NIPS}, 2014{\natexlab{b}}.

\bibitem[Acharya et~al.(2016)Acharya, Falahatgar, Jafarpour, Orlitsky, and
  Suresh]{AcharyaFJOS16}
Acharya, Jayadev, Falahatgar, Moein, Jafarpour, Ashkan, Orlitsky, Alon, and
  Suresh, Ananda~Theertha.
\newblock Maximum selection and sorting with adversarial comparators and an
  application to density estimation.
\newblock \emph{arXiv preprint arXiv:1606.02786}, 2016.

\bibitem[Ajtai et~al.(2015)Ajtai, Feldman, Hassidim, and Nelson]{MiklosVAJ15}
Ajtai, Mikl{\'o}s, Feldman, Vitaly, Hassidim, Avinatan, and Nelson, Jelani.
\newblock Sorting and selection with imprecise comparisons.
\newblock \emph{ACM Transactions on Algorithms (TALG)}, 12\penalty0
  (2):\penalty0 19, 2015.

\bibitem[Braverman \& Mossel(2008)Braverman and Mossel]{MarkE08}
Braverman, Mark and Mossel, Elchanan.
\newblock Noisy sorting without resampling.
\newblock In \emph{Proceedings of the nineteenth annual ACM-SIAM SODA}, pp.\
  268--276. Society for Industrial and Applied Mathematics, 2008.

\bibitem[Braverman \& Mossel(2009)Braverman and Mossel]{MarkE09}
Braverman, Mark and Mossel, Elchanan.
\newblock Sorting from noisy information.
\newblock \emph{arXiv preprint arXiv:0910.1191}, 2009.

\bibitem[Busa-Fekete et~al.(2013)Busa-Fekete, Szorenyi, Cheng, Weng, and
  H{\"u}llermeier]{RobertBWPE13}
Busa-Fekete, R{\'o}bert, Szorenyi, Balazs, Cheng, Weiwei, Weng, Paul, and
  H{\"u}llermeier, Eyke.
\newblock Top-k selection based on adaptive sampling of noisy preferences.
\newblock In \emph{Proc. of The ICML}, pp.\  1094--1102, 2013.

\bibitem[Busa-Fekete et~al.(2014{\natexlab{a}})Busa-Fekete, H{\"u}llermeier,
  and Sz{\"o}r{\'e}nyi]{RobertEB14}
Busa-Fekete, R{\'o}bert, H{\"u}llermeier, Eyke, and Sz{\"o}r{\'e}nyi,
  Bal{\'a}zs.
\newblock Preference-based rank elicitation using statistical models: The case
  of mallows.
\newblock In \emph{Proc. of the ICML}, pp.\  1071--1079, 2014{\natexlab{a}}.

\bibitem[Busa-Fekete et~al.(2014{\natexlab{b}})Busa-Fekete, Sz{\"o}r{\'e}nyi,
  and H{\"u}llermeier]{RobertBE14}
Busa-Fekete, R{\'o}bert, Sz{\"o}r{\'e}nyi, Bal{\'a}zs, and H{\"u}llermeier,
  Eyke.
\newblock Pac rank elicitation through adaptive sampling of stochastic pairwise
  preferences.
\newblock In \emph{AAAI}, 2014{\natexlab{b}}.

\bibitem[Feige et~al.(1994)Feige, Raghavan, Peleg, and Upfal]{UrielPDE94}
Feige, Uriel, Raghavan, Prabhakar, Peleg, David, and Upfal, Eli.
\newblock Computing with noisy information.
\newblock \emph{SIAM Journal on Computing}, 23\penalty0 (5):\penalty0
  1001--1018, 1994.

\bibitem[Heckel et~al.(2016)Heckel, Shah, Ramchandran, and
  Wainwright]{HeckelSRW16}
Heckel, Reinhard, Shah, Nihar~B, Ramchandran, Kannan, and Wainwright, Martin~J.
\newblock Active ranking from pairwise comparisons and when parametric
  assumptions don't help.
\newblock \emph{arXiv preprint arXiv:1606.08842}, 2016.

\bibitem[Herbrich et~al.(2006)Herbrich, Minka, and Graepel]{Ralf06}
Herbrich, Ralf, Minka, Tom, and Graepel, Thore.
\newblock Trueskill™: a bayesian skill rating system.
\newblock In \emph{Proceedings of the 19th International Conference on Neural
  Information Processing Systems}, pp.\  569--576. MIT Press, 2006.

\bibitem[Jang et~al.(2016)Jang, Kim, Suh, and Oh]{Jang16}
Jang, Minje, Kim, Sunghyun, Suh, Changho, and Oh, Sewoong.
\newblock Top-$k$ ranking from pairwise comparisons: When spectral ranking is
  optimal.
\newblock \emph{arXiv preprint arXiv:1603.04153}, 2016.

\bibitem[Luce(2005)]{Ducan05}
Luce, R~Duncan.
\newblock \emph{Individual choice behavior: A theoretical analysis}.
\newblock Courier Corporation, 2005.

\bibitem[Mukherjee(2011)]{book}
Mukherjee, Sudipta.
\newblock \emph{Data structures using C : 1000 problems and solutions}.
\newblock 2011.

\bibitem[Negahban et~al.(2012)Negahban, Oh, and Shah]{SahandSD12}
Negahban, Sahand, Oh, Sewoong, and Shah, Devavrat.
\newblock Iterative ranking from pair-wise comparisons.
\newblock In \emph{NIPS}, pp.\  2474--2482, 2012.

\bibitem[Negahban et~al.(2016)Negahban, Oh, and Shah]{Negahban16}
Negahban, Sahand, Oh, Sewoong, and Shah, Devavrat.
\newblock Rank centrality: Ranking from pairwise comparisons.
\newblock \emph{Operations Research}, 2016.

\bibitem[Plackett(1975)]{RobinL75}
Plackett, Robin~L.
\newblock The analysis of permutations.
\newblock \emph{Applied Statistics}, pp.\  193--202, 1975.

\bibitem[Radlinski \& Joachims(2007)Radlinski and Joachims]{FilipT07}
Radlinski, Filip and Joachims, Thorsten.
\newblock Active exploration for learning rankings from clickthrough data.
\newblock In \emph{Proceedings of the 13th ACM SIGKDD}, pp.\  570--579. ACM,
  2007.

\bibitem[Radlinski et~al.(2008)Radlinski, Kurup, and Joachims]{FilipMT08}
Radlinski, Filip, Kurup, Madhu, and Joachims, Thorsten.
\newblock How does clickthrough data reflect retrieval quality?
\newblock In \emph{Proceedings of the 17th ACM conference on Information and
  knowledge management}, pp.\  43--52. ACM, 2008.

\bibitem[Rajkumar \& Agarwal(2014)Rajkumar and Agarwal]{ArunS14}
Rajkumar, Arun and Agarwal, Shivani.
\newblock A statistical convergence perspective of algorithms for rank
  aggregation from pairwise data.
\newblock In \emph{Proc. of the ICML}, pp.\  118--126, 2014.

\bibitem[Syrgkanis et~al.(2016)Syrgkanis, Krishnamurthy, and
  Schapire]{VasilisAR16}
Syrgkanis, Vasilis, Krishnamurthy, Akshay, and Schapire, Robert~E.
\newblock Efficient algorithms for adversarial contextual learning.
\newblock \emph{arXiv preprint arXiv:1602.02454}, 2016.

\bibitem[Sz{\"o}r{\'e}nyi et~al.(2015)Sz{\"o}r{\'e}nyi, Busa-Fekete, Paul, and
  H{\"u}llermeier]{BalazsRAE15}
Sz{\"o}r{\'e}nyi, Bal{\'a}zs, Busa-Fekete, R{\'o}bert, Paul, Adil, and
  H{\"u}llermeier, Eyke.
\newblock Online rank elicitation for plackett-luce: A dueling bandits
  approach.
\newblock In \emph{NIPS}, pp.\  604--612, 2015.

\bibitem[Urvoy et~al.(2013)Urvoy, Clerot, F{\'e}raud, and Naamane]{TanguyFRS13}
Urvoy, Tanguy, Clerot, Fabrice, F{\'e}raud, Raphael, and Naamane, Sami.
\newblock Generic exploration and k-armed voting bandits.
\newblock In \emph{Proc. of the ICML}, pp.\  91--99, 2013.

\bibitem[Yue \& Joachims(2011)Yue and Joachims]{YisongT11}
Yue, Yisong and Joachims, Thorsten.
\newblock Beat the mean bandit.
\newblock In \emph{Proc. of the ICML}, pp.\  241--248, 2011.

\bibitem[Yue et~al.(2012)Yue, Broder, Kleinberg, and Joachims]{YisongJRT12}
Yue, Yisong, Broder, Josef, Kleinberg, Robert, and Joachims, Thorsten.
\newblock The k-armed dueling bandits problem.
\newblock \emph{Journal of Computer and System Sciences}, 78\penalty0
  (5):\penalty0 1538--1556, 2012.

\end{thebibliography}
\newpage
\appendix
\section{Merge Ranking}
\label{app:merge}
We first introduce a subroutine that is used by \textsc{Merge-Rank}.
It merges two ordered sets in the presence of noisy comparisons.
\subsection{\textsc{Merge}}
\textsc{Merge} takes two ordered sets $S_1$ and $S_2$ and outputs an
ordered set $Q$ by merging them.  \textsc{Merge} starts by comparing
the first elements in each set $S_1$ and $S_2$ and places the loser in
the first position of $Q$. It compares the two elements sufficient
times to make sure that output is near-accurate. Then it compares the
winner and the element right to loser in the corresponding set.  It
continues this process until we run out of one of the sets and then
adds the remaining elements to the end of $Q$ and outputs $Q$.
\begin{algorithm}
\caption{\textsc{Merge}}
\textbf{Input:} Sets $S_1, S_2$, bias $\epsilon$, confidence $\delta$.
\\\textbf{Initialize:} $i=1$, $j=1$ and $O = \emptyset$.
\begin{enumerate}
\item \textbf{while} $i \le |S_1|$ and $j \le |S_2|$.
\begin{enumerate}
\item \textbf{if} $S_1(i) = \textsc{Compare}\Paren{S_1(i),
S_2(j), \epsilon, \delta}
$, then append $S_1(i)$ at the end of $O$ and $i=i+1$.
\item \textbf{else} append $S_2(j)$ at the end of $O$ and $j=j+1$.
\end{enumerate}
\item \textbf{if} $i \le |S_1|$, then append $S_1(i : |S_1|)$ at the end of $O$.
\item \textbf{if} $j \le |S_2|$, then append $S_2(j:|S_2|)$ at the end of $O$.
\end{enumerate}
\textbf{Output:} $O$.
\end{algorithm}

We show that when we merge two ordered sets using \textsc{Merge},
the error of resulting ordered set is not high compared to the maximum
of errors of individual ordered sets.
\begin{Lemma}
\label{lem:error_add_bound}
With probability $\ge 1- (|S_1|+ |S_2|)\delta$, error of
$\textsc{Merge}\Paren{S_1,S_2, \epsilon, \delta}$ is at most
$\epsilon$ more than the maximum of errors of $S_1$ and $S_2$. Namely,
with probability $\ge 1-(|S_1| + |S_2|)\delta$,
\[
err\Paren{\textsc{Merge}\Paren{S_1, S_2, \epsilon, \delta}}
\le \max\Paren{err(S_1), err(S_2)} + \epsilon.
\]
\end{Lemma}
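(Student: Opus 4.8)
The plan is to condition on a high-probability ``good'' event and then argue deterministically about the ordering error. First I would control the randomness. Each iteration of the while loop in \textsc{Merge} performs exactly one call to \textsc{Compare} and advances one of the two pointers, so the whole merge uses at most $|S_1|+|S_2|-1$ comparisons (the cleanup steps $2$ and $3$ use none). Let $G$ be the event that every call \textsc{Compare}$(a,b,\epsilon,\delta)$ made on a \emph{well-separated} pair, i.e.\ one with $|\tilde{p}(a,b)|\ge\epsilon$, returns the higher-ranked element. By Lemma~\ref{lem:chernoff1} (applied to whichever of $a,b$ is the higher-ranked, for which $\tilde{p}\ge\epsilon$) each such call errs with probability at most $\delta$, so a union bound over the at most $|S_1|+|S_2|$ comparisons gives $\Prs{G}\ge 1-(|S_1|+|S_2|)\delta$. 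I would also record the structural fact that \textsc{Merge} only increments its pointers, so it preserves the internal order of each of $S_1$ and $S_2$ inside the output $O$.

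Next I would reduce the claim to a statement about pairs. Writing $m=\max\Paren{err(S_1),err(S_2)}$, it suffices to show that on $G$ every pair of output positions $p\le q$ satisfies $\tilde{p}\Paren{O(q),O(p)}\le m+\epsilon$, since this is exactly $err(O)\le m+\epsilon$. If $O(p)$ and $O(q)$ originate from the same input set, then because that set's internal order is preserved they appear in $O$ in their original relative order, so this difference is bounded by the error of that set, hence by $m\le m+\epsilon$. The whole content of the lemma therefore lies in the cross-set case.

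So suppose $O(p)$ comes from one set and $O(q)$ from the other; by symmetry say $O(p)=S_1(a)$ and $O(q)=S_2(b)$. Since $O(p)$ is appended before $O(q)$ while $S_2$ is not yet exhausted (otherwise $S_2(b)$ would already be placed), $S_1(a)$ must have been appended by winning a comparison against $S_2(j)$ for the $S_2$-pointer value $j$ at that moment, and pointer monotonicity gives $j\le b$. On $G$ this comparison forces $\tilde{p}\Paren{S_2(j),S_1(a)}\le\epsilon$: if the pair was separated the higher element won, so $S_1(a)$ is higher and the quantity is $\le-\epsilon$; if not separated it is $<\epsilon$ by definition. Together with $j\le b$ and $err(S_2)\le m$, which yields $\tilde{p}\Paren{S_2(b),S_2(j)}\le m$, I then have $\tilde{p}(u,v)\le m$ and $\tilde{p}(v,w)\le\epsilon$ for $u=S_2(b)$, $v=S_2(j)$, $w=S_1(a)$, and I want $\tilde{p}(u,w)\le m+\epsilon$.

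The final step is a short case analysis on the true ranks of $u,v,w$, and this is where strong stochastic transitivity and the triangle inequality do the work; I expect this to be the main obstacle, since the clean additive bound $\tilde p(u,w)\le\tilde p(u,v)+\tilde p(v,w)$ only holds under one particular rank order. If $r(u)\le r(w)$ then $\tilde{p}(u,w)\le0$ and we are done. Otherwise $r(u)>r(w)$, and I split on the position of $v$: when $r(u)>r(v)>r(w)$ the triangle inequality gives $\tilde{p}(u,w)\le\tilde{p}(u,v)+\tilde{p}(v,w)\le m+\epsilon$; when $r(v)\ge r(u)>r(w)$ strong stochastic transitivity applied to $v,u,w$ gives $\tilde{p}(u,w)\le\tilde{p}(v,w)\le\epsilon$; and when $r(u)>r(w)\ge r(v)$ strong stochastic transitivity applied to $u,w,v$ gives $\tilde{p}(u,w)\le\tilde{p}(u,v)\le m$ (rank ties being absorbed into the adjacent boundary case). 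In every case $\tilde{p}(u,w)\le m+\epsilon$, which establishes $err(O)\le m+\epsilon$ on $G$ and, combined with the probability bound on $G$, completes the proof.
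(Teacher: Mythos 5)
Your proof is correct and follows essentially the same route as the paper's: condition on every \textsc{Compare} outcome being $\epsilon$-accurate (union bound over the at most $|S_1|+|S_2|$ comparisons, as in Lemma~\ref{lem:chernoff}), reduce to the cross-set pair traced back to the comparison that placed the earlier element, and chain the $\epsilon$ slack with the input set's error. Your closing case analysis on the ranks of $u,v,w$ is just an inlined re-derivation of the paper's Lemma~\ref{lem:trans_tri}, so the only differences are organizational.
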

\subsection{\textsc{Merge-Rank}}
Now we present the algorithm \textsc{Merge-Rank}. \textsc{Merge-Rank}
partitions the input set $S$ into two sets $S_1$ and $S_2$ each of
size $|S|/2$.  It then orders $S_1$ and $S_2$ separately using
\textsc{Merge-Rank} and combines the ordered sets using
\textsc{Merge}. Notice that \textsc{Merge-Rank} is
a recursive algorithm. The singleton sets each containing an unique
element in $S$ are merged first. Two singleton sets are merged to form
a set with two elements, then the sets with two elements are merged to
form a set with four elements and henceforth.  By
Lemma~\ref{lem:error_add_bound}, each merge with bound parameter
$\epsilon'$ adds at most $\epsilon'$ to the error. Since error of
singleton sets is $0$ and each element takes part in $\log n$ merges,
the error of the output set is at most $\epsilon'\log n$.  Hence with
bound parameter $\epsilon/ \log n$, the error of the output set is
less than $\epsilon$.
\begin{algorithm}
\caption{\textsc{Merge-Rank}}
\textbf{Input:} Set $S$, bias $\epsilon$, confidence $\delta$.
\begin{enumerate}
\item $  S_1 = \textsc{Merge-Rank}\Paren{S(1:\floor{|S|/2}),  \epsilon, \delta}$.
\item $S_2 = \textsc{Merge-Rank}\Paren{S(\floor{|S|/2}+1:|S|),  \epsilon, \delta}$.
\end{enumerate}
\textbf{Output:} $\textsc{Merge}\Paren{S_1, S_2,  \epsilon, \delta}$.
\end{algorithm}
\vspace{12ex}

\section{Algorithms for Ranking}
\label{appendixA}
\vspace{10ex}
\begin{algorithm}[H]
\caption{\textsc{Interval-Binary-Search}}
\textbf{Input:} Ordered array $S$, search element $e$, bias $\epsilon$
\begin{enumerate}
\item $T$ = \textsc{Build-Binary-Search-Tree}$(|S|)$.
\item Initialize set $Q = \emptyset$, node $\alpha = \text{root}(T)$, and count $c =                                                          
  0$.
\item \textbf{repeat} for $30 \log n$ times
\begin{enumerate}
\item \textbf{if} $\alpha_2 - \alpha_1 > 1$,
\begin{enumerate}
\item Add $\alpha_1$, $\alpha_2$ and $\left\lceil\frac{\alpha_1+\alpha_2}2\right\rceil$ to $Q$.
\item \textbf{if} \textsc{Compare}$(S(\alpha_1),e, \frac{10}{\epsilon^{2}}) >
  1/2$ or \textsc{Compare}$(e, S(\alpha_2), \frac{10}{\epsilon^{2}}) >
  1/2$ then go back to the parent, $\alpha = \text{parent}(\alpha).$
\item
\textbf{else}
\begin{itemize}
\item \textbf{if}
  \textsc{Compare}$(S(\left\lceil\frac{\alpha_1+\alpha_2}2\right\rceil),e,
  \frac{10}{\epsilon^{2}}) > 1/2$ go to the left child,$\alpha = \text{left}(\alpha).$
\item
\textbf{else} go to the right child, $\alpha = \text{right}(\alpha).$
\end{itemize}
\end{enumerate}
\item \textbf{else}
\begin{enumerate}
\item \textbf{if} \textsc{Compare}$(e, S(\alpha_1), \frac{10}{\epsilon^{2}}) > 1/2$
  and \textsc{Compare}$(S(\alpha_2),a,\frac{10}{\epsilon^{2}}) > 1/2$,
\[
c = c+1.
\]
\item \textbf{else}
\begin{enumerate} 
\item \textbf{if} $c = 0$, $\alpha = \text{parent}(\alpha)$. 
\item \textbf{else}  $c = c-1$.
\end{enumerate}
\end{enumerate}
\end{enumerate}
\item
\begin{enumerate}
\item \textbf{if} $c > 10\log n$, \textbf{Output:} $\alpha_1$.
\label{output}
\item \textbf{else} \textbf{Output:} \BR$(S, Q, e, 2\epsilon)$.
\end{enumerate}
\end{enumerate}
\end{algorithm}
\vspace{25ex}

\begin{algorithm}[H]
\caption{\textsc{Build-Binary                                                                                                     
      -Search-Tree}} \textbf{Input:} size $n$.
\newline // Recall that
    each node $m$ in the tree is an interval between left end $m_1$ and
    right end $m_2$.
\begin{enumerate}
\item Initialize set $T' =\emptyset$.
\item Initialize the tree $T$ with the root node $(1,n)$.
\begin{align*}
m = (1,n) \qquad &\text{where }m_1 = 1 \text{ and } m_2 = n,\\
\text{root}(T) &= m
\end{align*}
\item Add $m$ to $T'$.
\item \textbf{while} $T'$ is not empty
\begin{enumerate}
\item Consider a node $i$ in $T'$.
\item \textbf{if} $i_2 - i_1 > 1$,
create a left child and right child to $i$ and set their parents as $i$.
\begin{align*}
\alpha = \Paren{i_1, \left\lceil\frac{i_1 + i_2}2\right\rceil},
\qquad&\beta = \Paren{\left\lceil{\frac{i_1 + i_2}2}\right\rceil, i_2}, \\
\text{left}(i) = \alpha, \qquad &\text{right}(i) = \beta,\\
\text{parent}(\alpha) = i,\qquad  & \text{parent}(\beta) = i.
\end{align*}
and add nodes $\alpha$ and $\beta$ to $T'$.
\item Remove node $i$ from $T'$.
\end{enumerate}
\end{enumerate}
\textbf{Output:} $T$.
\end{algorithm}

\begin{algorithm}[H]
\caption{\BR} \textbf{Input:} Ordered array $S$, ordered array $Q$,
search item $e$, bias $\epsilon$.\\
\textbf{Initialize:} $l =1$, $h = |Q|$.
\begin{enumerate}
\item \textbf{while} $h-l >0$
\begin{enumerate}
\item $t =$ \textsc{Comapre}$\Paren{e,
  S(Q(\left\lceil\frac{l+h}2\right\rceil), {\frac{10\log
      n}{\epsilon^2}}}$.
\item \textbf{if} $t \in \left[\frac12 -3\epsilon,\frac12 + 3\epsilon \right]$,
  then \textbf{Output:} $Q(\left\lceil\frac{l+h}2\right\rceil)$.
\item \textbf{else if} $t < \frac12 -3\epsilon$, then move to the right.
\[
l = \left\lceil\frac{l+h}2\right\rceil.
\]
\item \textbf{else} move to the left.
\[
h =  \left\lceil\frac{l+h}2\right\rceil .
\]
\end{enumerate}
\end{enumerate}
\textbf{Output:} $Q(h)$.
\end{algorithm}

\section{Some tools for proving lemmas}
\label{sec:tools}
We first prove an auxilliary result that we use in the future analysis.

\begin{Lemma}
Let $W=\textsc{Compare}(i,j,\epsilon,\delta)$ and $L$ be the other element. Then with
probability $\ge 1-\delta$,
\[
p({W,L}) \geq \frac{1}{2} - \epsilon.
\]
\label{lem:chernoff}
\end{Lemma}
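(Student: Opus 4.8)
The plan is to prove the bound by a short case analysis on how well separated $i$ and $j$ are, pushing the only nontrivial case onto the already-established Lemma~\ref{lem:chernoff1}. Write $W$ for the winner returned by $\textsc{Compare}(i,j,\epsilon,\delta)$ and $L$ for the loser. I would first observe that \textsc{Compare} is effectively symmetric in its first two arguments: if we instead tracked $j$'s empirical win frequency it would equal $1-\hat p_i$, and both the stopping test $|\hat p_i-\frac12|\le\hat c-\epsilon$ and the final decision rule are invariant under swapping $i\leftrightarrow j$. Hence Lemma~\ref{lem:chernoff1} holds verbatim with the roles of $i$ and $j$ exchanged, and I may argue according to which element is genuinely stronger.

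First I would dispose of the ``nearly tied'' case $|\tilde p(i,j)|<\epsilon$. Here $p(i,j)>\frac12-\epsilon$ and simultaneously $p(j,i)=1-p(i,j)>\frac12-\epsilon$, so whichever of the two elements is output as $W$, the loser $L$ satisfies $p(W,L)>\frac12-\epsilon$. Thus in this regime the conclusion holds deterministically, hence with probability $1\ge 1-\delta$, and no concentration argument is needed.

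The remaining case is that one element is clearly stronger, say $\tilde p(i,j)\ge\epsilon$ (the case $\tilde p(j,i)\ge\epsilon$ being identical by the symmetry noted above). Here I would apply Lemma~\ref{lem:chernoff1} directly: it gives $\Pr(W\ne i)\le\delta$, so with probability $\ge 1-\delta$ we have $W=i$ and $L=j$, whence $p(W,L)=p(i,j)\ge\frac12+\epsilon\ge\frac12-\epsilon$. Taking the union of the two cases yields $p(W,L)\ge\frac12-\epsilon$ with probability $\ge 1-\delta$ in every situation.

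I do not anticipate any analytic obstacle: the entire difficulty is already contained in Lemma~\ref{lem:chernoff1}, and this statement is simply the observation that the element \textsc{Compare} declares the winner is never much worse than the one it discards---automatic when the pair is within $\epsilon$ of a tie, and a consequence of the correctness of \textsc{Compare} when the pair is $\epsilon$-separated. The only point requiring care is the symmetry reduction in the first paragraph, which is what lets a lemma stated for the first argument cover the case where the second argument is the stronger element.
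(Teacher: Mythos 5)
Your proof is correct and follows essentially the same two-case argument as the paper: the near-tie case $|\tilde p(i,j)|<\epsilon$ holds deterministically, and the separated case reduces to Lemma~\ref{lem:chernoff1} via the same without-loss-of-generality step (the paper simply asserts the WLOG where you justify it by the symmetry of \textsc{Compare}). No gaps.
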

\begin{proof}
Note that if $|\tilde{p}(i,j)| < \epsilon$, then $p(i,j) > \frac12
-\epsilon$ and $p(j,i) > \frac12 - \epsilon$. Hence, $p(W,L) \geq
\frac12-\epsilon$.

If $|\tilde{p}(i,j)| \geq \epsilon$, without loss of generality,
assume that $i$ is a better element i.e., $\tilde{p}(i,j) \geq
\epsilon$. By Lemma \ref{lem:chernoff1}, with probability atleast
$1-\delta$, $W=i$. Hence
\[
Pr\Paren{p(W,L) \geq \frac12-\epsilon} = Pr(W=i) \geq 1-\delta.\qedhere
\]
\end{proof}

We now prove a Lemma that follows from strong stochastic transitivity and
stochastic triangle inequality that we will use in future analysis.
\begin{Lemma}
\label{lem:trans_tri}
If $\tilde{p}(i,j) \le \epsilon_1$, $\tilde{p}(j,k) \le \epsilon_2$,
then $\tilde{p}(i,k) \le \epsilon_1+ \epsilon_2$.
\end{Lemma}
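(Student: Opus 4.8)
The plan is to split on the rank of the ``middle'' element and to invoke, in each case, exactly one of the two structural hypotheses. Recall the sign convention $\tilde{p}(a,b) = -\tilde{p}(b,a)$, together with the fact that $\tilde{p}(a,b) \ge 0$ iff $r(a) > r(b)$, and note that $\epsilon_1,\epsilon_2 \ge 0$ since they play the role of bias parameters. First I would dispose of the trivial regime: if $\tilde{p}(i,k) \le 0$, the claim holds immediately because $\epsilon_1 + \epsilon_2 \ge 0$. So it remains to treat $\tilde{p}(i,k) > 0$, which by the sign convention forces $r(i) > r(k)$.

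With $r(i) > r(k)$ fixed, the element $j$ must occupy one of exactly three positions in the underlying total order, and I would handle them separately. If $r(i) > r(j) > r(k)$, the stochastic triangle inequality applies verbatim to the triple $(i,j,k)$ and gives $\tilde{p}(i,k) \le \tilde{p}(i,j) + \tilde{p}(j,k) \le \epsilon_1 + \epsilon_2$. If instead $r(j) > r(i) > r(k)$, I would apply strong stochastic transitivity to the triple $(j,i,k)$, obtaining $\tilde{p}(j,k) \ge \tilde{p}(i,k)$, so that $\tilde{p}(i,k) \le \tilde{p}(j,k) \le \epsilon_2 \le \epsilon_1 + \epsilon_2$. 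Symmetrically, if $r(i) > r(k) > r(j)$, strong stochastic transitivity on $(i,k,j)$ yields $\tilde{p}(i,j) \ge \tilde{p}(i,k)$, whence $\tilde{p}(i,k) \le \tilde{p}(i,j) \le \epsilon_1 \le \epsilon_1 + \epsilon_2$. These three positions, together with the trivial regime above, exhaust all cases.

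The main obstacle is bookkeeping rather than depth: the two structural assumptions are stated only for the canonical ordering $r(\cdot) > r(\cdot) > r(\cdot)$, whereas the lemma asserts a bound with no ordering hypothesis on $i,j,k$. The work lies in matching each configuration of $j$ to the correct tool---the stochastic triangle inequality when $j$ lies \emph{between} $i$ and $k$ in rank, and strong stochastic transitivity (in the form that the comparison of the two extremes dominates a comparison involving the interior element) when $j$ lies \emph{outside} the interval spanned by $i$ and $k$. I would also flag the mild point that the nonnegativity of $\epsilon_1$ and $\epsilon_2$ is genuinely used: a short example with a negative bias shows the inequality can fail otherwise, so this is precisely where the reading of the $\epsilon_i$ as (nonnegative) bias parameters enters.
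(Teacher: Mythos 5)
Your proof is correct and follows essentially the same route as the paper's: a finite case analysis that invokes the stochastic triangle inequality when $j$ lies between $i$ and $k$ in rank and strong stochastic transitivity when it lies outside. The only difference is bookkeeping---the paper indexes its four cases by the signs of $\tilde{p}(i,j)$ and $\tilde{p}(j,k)$ while you index by the position of $j$ after disposing of the case $\tilde{p}(i,k)\le 0$---and your remark that nonnegativity of $\epsilon_1,\epsilon_2$ is genuinely used is accurate (the paper uses it implicitly in the same places).
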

\begin{proof}
We will divide the proof into four cases based on whether
$\tilde{p}(i,j) >0$ and $\tilde{p}(j,k) > 0$.

If $\tilde{p}(i,j) \le 0$ and $\tilde{p}(j,k) \le 0$, then by
strong stochastic transitivity, $\tilde{p}(i,k) \le 0 \le \epsilon_1+
\epsilon_2$.

If $0 < \tilde{p}(i,j) \le \epsilon_1$ and $0 < \tilde{p}(j,k) \le
\epsilon_2$, then by stochastic traingle inequality, $\tilde{p}(i,k)
\le \epsilon_1 + \epsilon_2$.

If $\tilde{p}(i,j) <0$ and $0 < \tilde{p}(j,k) \le \epsilon_2$, then
by strong stochastic transitivity, $\tilde{p}(i,k) \le \epsilon_2 \le
\epsilon_1+ \epsilon_2$.

If $0 < \tilde{p}(i,j) \le \epsilon_1$ and $\tilde{p}(j,k) < 0$, then
by strong stochastic transitivity, $\tilde{p}(i,k) \le \epsilon_1 \le
\epsilon_1+ \epsilon_2$.
\end{proof}

\section{Proofs of Section~\ref{sec:max}}

\subsection*{Proof of Lemma ~\ref{lem:chernoff1}}
\begin{proof}
Let $\hat{p}_i^r$ and $\hat{c}^r$ denote $\hat{p}_i$ and $\hat{c}$
respectively after $r$ number of comparisons.  Output of
$\textsc{Compare}(i,j,\epsilon,\delta)$ will not be $i$ only if
$\hat{p}_i^r < \frac12 + \epsilon - \hat{c}^r$ for any
$r<m=\frac{1}{2\epsilon^2}\log\frac{2}{\delta}$ or if $\hat{p_i} < 
\frac12$ for $r = m$. We will show that the probability of each of these
events happening is bounded by $\frac{\delta}{2}$. Hence by union bound, Lemma
follows.

After $r$ comparisons, by Chernoff bound,
\[
Pr(\hat{p}_i^r < \frac12 + \epsilon - \hat{c}^r) \le e^{-2r(\hat{c}^r)^2} =
e^{-\log \frac{4r^2}{\delta}} = \frac{\delta}{4r^2}.
\]
Using union bound,
\[
Pr(\exists r \text{ s.t. }\hat{p}_i^r \le \frac12+\epsilon - \hat{c}^r) \le \frac{\delta}{2}
\]
After $m=\frac1{2\epsilon^2}\log\frac{2}{\delta}$ rounds, by Chernoff bound,
\[
Pr(\hat{p}_i^m < \frac12) \le e^{-2m\epsilon^2} = \frac{\delta}{2}.\qedhere
\]
\end{proof}

\subsection*{Proof of Lemma~\ref{lem:knockout_round}}
\begin{proof}
Each of the $\frac{|S|}2$ pairs is compared  at most $\frac{1}{2\epsilon^2}\log                                                            
\frac2{\delta}$ times, hence the total comparisons is
$\le\frac{|S|}{4\epsilon^2}\log \frac2{\delta}$. Let $k^* =                                                                          
\text{max}(\textsc{Knockout-Round}(S, \epsilon, \delta))$ and $s^* =                                                              
\text{max}(S)$. Let $a$ be the element paired with $s^*$. There are
two cases: $\tilde{p}(s^*,a) \ge \epsilon$ and $\tilde{p}(s^*,a) <                                                                
\epsilon$.

If $\tilde{p}(s^*,a) \ge \epsilon$, by Lemma \ref{lem:chernoff1} with
probability $\ge1-\delta$, $s^*$ will win and hence by definitions of
$s^*$ and $k^*$, $\tilde{p}(s^*, k^*) = 0 \le \gamma \epsilon$.
Alternatively, if $\tilde{p}(s^*,a) < \epsilon$, let $\text{winner}(i,j)$ denote
the winner between $i$ and $j$ when compared for
$\frac{1}{2\epsilon^2} \log \frac1{\delta}$ times. Then,
\[
r(a) \stackrel{(a)}\le r(\text{winner}(s^*,a)) \stackrel{(b)}\le r(k^*)
\stackrel{(c)}\le r(s^*)
\]
where (a) follows from $r(a) \le r(s^*)$, (b) and (c) follow from the
definitions of $s^*$ and $k^*$ respectively.  From strong stochastic
tranisitivity on $a$, $k^*$ and $s^*$, $\tilde{p}(s^*, k^*) \le \gamma
\tilde{p}(s^*, a) \le \gamma \epsilon$.
\end{proof}

\subsection*{Proof of Theorem~\ref{thm:max}}
\begin{proof}
We first show that with probability $\ge 1 - \delta$, the output of
\textsc{Knockout} is an $\epsilon$-maximum. Let $\epsilon_i =                                                                     
c\epsilon/2^{i/3}$ and $\delta_i = \delta/2^i$. Note that
$\epsilon_i$ and $\delta_i$ are bias and confidence values
used in round $i$.
 Let $b_i$ be a maximum element in the set $S$ before round $i$. Then by Lemma \ref{lem:knockout_round},
with probability $\ge1-\delta_i$,
\begin{align}
\tilde{p}(b_i, b_{i+1}) \leq \frac{c \epsilon}{2^{i/3}}.
\label{eq: knockout_step}
\end{align}

\ignore{
. Let $a_{i}$ be the element that $b_i$
be paired with in round $i$. If $b_i$ wins over $a_i$, then $p(b_i,                                                               
b_{i+1}) \geq \frac{1}{2}$. If $b_i$ loses to $a_i$, then by
Lemma~\ref{lem:chernoff}, relaxed stochastic transitivity, and the
definition of $b_{i+1}$, with probability $\ge1-\delta_i$,
\[
d(b_i, b_{i+1}) \leq \gamma d(b_i, a_{i}) \leq \gamma\epsilon_i.
\]
}

By union bound, the probability that Equation~\ref{eq: knockout_step}  does not hold for some round $1\le i\le \log |S|$ is
\[
 \le \sum_{i=1}^{\log |S|} \delta_i = \sum_{i=1}^{\log |S|} \frac{\delta}{2^i} \le \delta.
\]

With probability $\ge 1- \delta$, Equation~\ref{eq: knockout_step} holds for all $i$ and by stochastic triangle inequality,
\[
\tilde{p}(b_1, b_{ \log{|S|} + 1 }) \leq \sum_{i=1}^{{\log |S|}}
\tilde{p}(b_i, b_{i+1}) \leq \sum^\infty_{i=1}
\frac{c\epsilon}{2^{i/3}} = \epsilon.
\]
We now bound the number of comparisons.  Let $n_i =                                                                               
\frac{|S|}{2^{i-1}}$ be the number of elements in the set at the
beginning of round $i$.  The number of comparisons at round $i$ is
\[
\le \frac{n_i}{2} \cdot \frac{\gamma^22^{2i/3}}{2c^2\epsilon^2} \cdot \log \frac{2^{i+1}}{\delta}.
\]
Hence the number of comparisons in all rounds is
\begin{align*}
\sum^{ \log |S|}_{i=1}\frac{|S|}{2^{i}} \cdot
\frac{\gamma^22^{2i/3}}{2c^2\epsilon^2} \cdot \log \frac{2^{i+1}}{\delta}
& \leq \frac{|S|\gamma^2}{ 2c^2\epsilon^2} \sum^{\infty}_{i=1}
\frac{1}{2^{i/3}} \Paren{i+ \log \frac2{\delta}} \\ & =
\frac{|S|\gamma^2}{ 2c^2\epsilon^2} \left(\frac{2^{1/3}}{c^2} +
\frac{1}{c} \log \frac2{\delta} \right) \\
& =  \cO\Paren{\frac{|S|\gamma^2}{\epsilon^2}\log\frac{1}{\delta}}.\qedhere
\end{align*}
\end{proof}

\section{Proofs of Section~\ref{sec:merge}}
\subsection*{Proof of Lemma~\ref{lem:error_add_bound}}
\begin{proof}
Let $Q = \textsc{Merge}\Paren{S_1,S_2, \epsilon, \delta}$.  We will
show that for every $k$, w.p. $\ge 1- \delta$, $\tilde{p}(Q(k), Q(l))
\le \max(err(S_1), err(S_2)) + \epsilon$ $\forall l > k$.  Note that
if this property is true for every element then $err(Q) \le
\max(err(S_1), err(S_2)) + \epsilon$. Since there are $|S_1|+ |S_2|$
elements in the final merged set, the Lemma follows by union bound.

If $S_1(i)$ and $S_2(j)$ are compared in \textsc{Merge} algorithm,
without loss of generality, assume that $S_1(i)$ loses i.e., $S_1(i)$
appears before $S_2(j)$ in $T$.  The elements that appear to the right
of $S_1(i)$ in $Q$ belong to set $Q_{\ge S_1(i)} = \{S_1(k): k> i\}
\bigcup \{S_2(k) : k \ge j\}$. We will show that w.p. $\ge 1- \delta$,
$\forall e \in Q_{\ge S_1(i)}$, $\tilde{p}(S_1(i),e) \le
\max\Paren{err(S_1), err(S_2)} + \epsilon $.

By definition of error of an ordered set,
\begin{align}
 \tilde{p}(S_1(i), S_1(k)) &\le err(S_1) \quad \forall k > i\label{eq:set1}\\
 \tilde{p}(S_2(j), S_2(k)) &\le err(S_2) \quad \forall k \ge j.\label{eq:set2}
\end{align}
By Lemma~\ref{lem:chernoff}, w.p. $\ge 1- \delta$,
\begin{align}
\tilde{p}(S_1(i),S_2(j)) \le \epsilon. \label{eq:comp}
\end{align}
Hence by Equations~\ref{eq:set2},~\ref{eq:comp} and Lemma
~\ref{lem:trans_tri}, w.p. $\ge 1- \delta$, $\tilde{p}(S_1(i), S_2(k))
\le \epsilon + err(S_2)$ $\forall k \ge j$.
\end{proof}

\subsection*{Proof of Lemma~\ref{lem:merge_rank}}

\begin{proof}
We first bound the total comparisons. Let $C(Q,\epsilon',\delta')$ be
the number of comparisons that the \textsc{Merge-Rank} uses on a set
$Q$. Since \textsc{Merge-Rank} is a recursive algorithm, \ignore{
  Number of comparisons used by \textsc{Merge-Rank} depends only on
  the number of items in the set, bound $\epsilon$ and error $\delta$.
}
\begin{align*}
C(Q,\epsilon',\delta') \le&
C(Q[1:\floor{|Q|/2}],\epsilon',\delta')\\ &+
C(Q[\floor{|Q|/2}:|Q|],\epsilon',\delta') + {\frac{|Q|}{2\epsilon'^2}
  \log \frac{2}{\delta'}}.
\end{align*}

From this one can obtain that $C(S, \epsilon',\delta') =
\cO\Paren{\frac{|S| \log |S|}{\epsilon'^2} \log
  \frac{1}{\delta'}}$. Hence,
\[
C\Paren{|S|, \frac{\epsilon}{ \log |S|}, \frac{\delta}{|S|^2}} =
\cO\Paren{\frac{|S| \log^3 |S|}{\epsilon^2} \log
  \frac{|S|^2}{\delta}}.
\]

Now we bound the error. By Lemma \ref{lem:error_add_bound}, with
probability $\ge 1- |Q|\delta$,
\begin{align}
\label{eq:merge_bound}
&err(\textsc{Merge-Rank}(Q,\epsilon',\delta')) \le \nonumber\\
&\max\{err\Paren{\textsc{Merge-Rank}\Paren{Q[1:\floor{|Q|/2}], \epsilon', \delta'}},\nonumber \\
&err\Paren{\textsc{Merge-Rank}\Paren{T[\floor{|Q|/2}+1:|Q|], \epsilon', \delta'}}\}
+ \epsilon'.
\end{align}
We can bound the total times \textsc{Merge} is called in a single
instance of $\textsc{Merge-Rank}(S, \epsilon',
\delta')$. \textsc{Merge} combines the singleton sets and forms the
sets with two elements, it combines the sets with two elements and
forms the sets with four elements and henceforth.  Hence the total
times \textsc{Merge} is called is $ \sum_{i=1}^{\log |S|}
\frac{|S|}{2^i} \le |S|.  $ Therefore, the probability that
Equation~\ref{eq:merge_bound} holds every time when two ordered sets
are merged in \textsc{Merge-Rank}$(S, \epsilon', \delta')$ is $ \le
|S|\cdot |S|\delta' = |S|^2\delta'.  $

If Equation~\ref{eq:merge_bound} holds every time \textsc{Merge} is
called, then error of \textsc{Merge-Rank}$(S, \epsilon', \delta')$ is
at most $\sum_{i=1}^{\log |S|} \epsilon' \le \epsilon' \log |S|$. This
is because $err(S)$ is 0 if $S$ has only one element. And a singleton
set participates in $\log n$ merges before becoming the final output
set.

Therefore, w.p. $\ge 1- |S|^2\delta'$,
\[
err(\textsc{Merge-Rank}(S, \epsilon', \delta')) \le \log |S|\epsilon'.
\]
Hence with probability $\ge1-\delta$,
\[
err\Paren{\textsc{Merge-Rank}\Paren{S, \frac{\epsilon}{ \log
|S|}, \frac{\delta}{|S|^2}}} \le \epsilon.\qedhere
\]
\end{proof}

\section{Proofs for Section~\ref{sec:BSR}}
\subsection*{Proof of Lemma~\ref{lem:bin_bound}}
\begin{proof}
Let set $S$ be ordered s.t. $\tilde{p}(S(i),S(j)) \ge 0$ $\forall i >
j$.  Let $S_k''= S(k:k+5 (\log n)^{x+1}-1)$ \ignore{ \{e\in S:
  \tilde{p}(e, S'(k)) > \epsilon , \tilde{p}(S'(k+1), e) < \epsilon\}.
\]
} The probability that none of the elements in $S_k''$ is selected for
a given $k$ is
\[
 \le \Paren{1-\frac{5(\log n)^{x+1}}{n}}^{n/(\log n)^x} < \frac1{n^5}.
\]
Therefore by union bound, the probability that none of the elements in
$S_k''$ is selected for any $k$ is
\[
\le n\cdot \frac1{n^5} =\frac1{n^4}. \qedhere
\]
\end{proof}

\subsection*{Proof of Lemma~\ref{lem:right_bin}}
We prove Lemma~\ref{lem:right_bin} by dividing it into further smaller
lemmas.

We divide all elements into $S$ into two sets based on distance from
anchors. First set contains all elements that are far away from all
anchors and the second set contains all elements which are close to
atleast one of the anchors.  \IBR\ acts differently on both sets.

We first show that for elements in the first set, \IBR\ places them in
between the right anchors by using just the random walk subroutine.

For elements in the second set, \IBR\ might fail to find the right
anchors just by using the random walk subroutine.  But we show that
\IBR\ visits a close anchor during random walk and \BR\ finds a close
anchor from the set of visited anchors using simple binary search.

We first prove Lemma~\ref{lem:right_bin} for the elements of first
set.
\begin{Lemma}
For $\epsilon'' > \epsilon'$, consider an $\epsilon'$-ranked $S'$.  If
an element $e$ is such that $|\tilde{p}(e,S'(j))| > \epsilon''$
$\forall j$, then with probability $\ge 1-\frac1{n^6}$ step
~\ref{output} of \IBR$(S',e,\epsilon'')$ outputs the index $y$ such
that $\tilde{p}(e, S'(y)) > \epsilon''$ and $\tilde{p}(S'(y+1), e) >
\epsilon''$.
\label{lem:far}
\end{Lemma}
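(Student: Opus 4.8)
The plan is to first pin down a unique target leaf for the walk, then argue that every comparison the walk makes is reliable, and finally control the resulting noisy walk by a drift-plus-concentration argument. First I would use the far condition to extract a unique correct bin with a clean monotone structure. Order $S'$ so that $\tilde{p}(S'(i),S'(j))\ge 0$ for $i>j$; by Lemma~\ref{lem:anchor_ranking} it is $\epsilon'$-ranked, so $\tilde{p}(S'(i),S'(j))\ge -\epsilon'$ whenever $i>j$. Since $|\tilde{p}(e,S'(j))|>\epsilon''$ for every $j$, each anchor is either strongly beaten by $e$ (i.e. $\tilde{p}(e,S'(j))>\epsilon''$) or strongly beats $e$. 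I claim the beaten set is a prefix $\{1,\dots,y\}$: if $j'<j$ while $e$ strongly beats $S'(j)$ but $S'(j')$ strongly beats $e$, then $r(S'(j'))>r(e)>r(S'(j))$, and strong stochastic transitivity gives $\tilde{p}(S'(j'),S'(j))\ge\max(\tilde{p}(S'(j'),e),\tilde{p}(e,S'(j)))>\epsilon''>\epsilon'$, contradicting the $\epsilon'$-ranking bound $\tilde{p}(S'(j'),S'(j))\le\epsilon'$. Hence there is a unique $y$ with $\tilde{p}(e,S'(y))>\epsilon''$ and $\tilde{p}(S'(y+1),e)>\epsilon''$, which is exactly the index the lemma wants, and this $y$ fixes a unique root-to-leaf path in the search tree.

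Second I would establish comparison reliability and set up the walk. Because $|\tilde{p}(e,S'(j))|>\epsilon''$ at every anchor, each of the at most three comparisons made at a node — against the left anchor, the right anchor, and the midpoint — uses $10/(\epsilon'')^2$ repetitions and, by a Chernoff bound as in Lemma~\ref{lem:chernoff1}, returns the true winner except with probability $\le e^{-20}$. Call a step \emph{good} if all its comparisons are correct; then each step is good with probability $\ge 1-q$ for a small constant $q\le 3e^{-20}$, independently of the walk's current position. In a good step the walk behaves as in the noiseless case: at a proper ancestor of the target leaf $y$ both boundary tests pass and the midpoint test sends it one level down the correct path; at an off-path node one boundary test fails and it backtracks to the parent; at leaf $y$ the counter increments; at a wrong leaf it decrements or backtracks. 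I would encode this with a potential $\Phi$ equal to the number of path-levels still separating the current node from $y$, counting each off-path level twice, so that a good navigation step lowers $\Phi$ by at least $1$ while any bad step raises it by at most $2$, and track the counter separately once at the leaf.

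Finally I would convert this drift into the $1-1/n^6$ guarantee. Stochastically dominating the bad steps by i.i.d.\ $\mathrm{Bernoulli}(q)$ trials, a Chernoff bound shows that among the $30\log n$ steps the number $B$ of bad steps satisfies $\Pr[B\ge\log n]\le(30qe)^{\log n}\le n^{-6}$, the tiny constant $q\le 3e^{-20}$ being exactly what makes $30qe<\tfrac1{64}$ and hence the estimate work. On the complementary event there are at least $29\log n$ good steps while the initial potential is $\Phi_0\le\lceil\log n\rceil$; since each bad step costs at most a bounded number of recovery steps and at most one counter decrement, a direct accounting gives a final counter at least $29\log n-\log n-3\log n>10\log n$, and once the counter exceeds the number of remaining bad steps the walk can never leave $y$ again. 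Thus at termination the walk sits at the correct leaf with counter above $10\log n$, so Step~\ref{output} outputs $\alpha_1=y$ with the stated properties. The hard part is precisely this last step: making the joint navigation-and-counter bookkeeping rigorous and choosing the constants (the budget $30\log n$, the threshold $10\log n$, and the $10/(\epsilon'')^2$ repetition count) so that the single Chernoff estimate on $B$ really yields the $1/n^6$ failure probability over exactly $30\log n$ steps.
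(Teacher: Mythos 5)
Your proposal is correct and follows essentially the same route as the paper's proof: establish via transitivity that the far condition forces a unique target leaf $y$, show each of the $30\log n$ steps of the walk moves toward that leaf (or increments its counter) except with small probability, and conclude by a concentration bound over the steps that the counter exceeds $10\log n$. The only differences are cosmetic bookkeeping — the paper bounds the per-step success probability by $19/20$ and applies a KL-divergence Chernoff bound to the count of correct moves, while you bound the number of bad steps by $\log n$ via a binomial tail and run an explicit potential-function accounting; both yield the $1/n^6$ failure probability.
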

\begin{proof}
We first show that there is a unique $y$ s.t.  $\tilde{p}(e, S'(y)) >
\epsilon''$ and $\tilde{p}(S'(y+1), e) > \epsilon''$.

Let $i$ be the largest index such that $\tilde{p}(e, S'(i)) >
\epsilon''$.  By Lemma~\ref{lem:trans_tri}, $\tilde{p}(e, S'(j)) >
\epsilon''- \epsilon' > 0 \quad \forall j < i$.  Hence by the
assumption on $e$, $\tilde{p}(e, S'(j)) > \epsilon'' \quad \forall j <
i$.  Let $k$ be the smallest index such that $\tilde{p}(S'(k), e) >
\epsilon''$.  By a similar argument as previously, we can show that
$\tilde{p}(S'(j),e) > \epsilon'' \quad \forall j > k$.

Hence by the above arguments and the fact that $|\tilde{p}(e, S'(j))| >
\epsilon'' \quad \forall j$, there exists only one $y$ such that
$\tilde{p}(e, S'(y)) > \epsilon''$ and $\tilde{p}(S'(y+1), e) >
\epsilon''$.

Thus in the tree $T$, there is only one leaf node $w$ such that
$\tilde{p}(e, S'(w_1)) > \epsilon''$ and $\tilde{p}(S'(w_2), e) >
\epsilon''$.

Consider some node $m$ which is not an ancestor of $w$. Then
either $\tilde{p}(S'(m_1), e) >
\epsilon''$ or $\tilde{p}(S'(m_2), e) < - \epsilon''$. Since we compare
$e$ with $S'(m_l)$ and $S'(m_h)$ $\frac{10}{\epsilon''^2}$ times, we
move to the parent of $m$ with probability atleast $\frac{19}{20}$.

Consider some node $m$ which is an ancestor of $w$. Then
$\tilde{p}(S'(m_l), e) < - \epsilon''$ , $\tilde{p}(S'(m_h), e) >
\epsilon''$, and
$|\tilde{p}(S'(\left\lceil\frac{m_l+m_h}2\right\rceil), e)| >
\epsilon''$. Therefore we move in direction of $q$ with probability
atleast $\frac{19}{20}$.

Therefore if we are not at $q$, then we move towards $q$ with
probability atleast $\frac{19}{20}$ and if we are at $q$ then the
count $c$ increases with probability atleast $\frac{19}{20}$.

Since we start at most $ \log n$ away from $q$ if we move towards $e$
for $21 \log n$ then the algorithm will output $y$. The probability
that we will have less than $21 \log n$ right comparisons is $\le
e^{-30 \log n D(\frac{21}{30}||\frac{19}{20})} \le e^{-30 \log n
  D(\frac{21}{30}||\frac{19}{20})} \le \frac1{n^6}$.
\end{proof}

To prove Lemma~\ref{lem:right_bin} for the elements of the second set,
we first show that the random walk subroutine of algorithm
\IBR\ placing an element in wrong bin is highly unlikely.
\begin{Lemma}
For $\epsilon'' > \epsilon'$, consider an $\epsilon'$-ranked set
$S'$. Now consider an element $e$ and $y$ such that either
$\tilde{p}(S'(y), e) > \epsilon''$ or $\tilde{p}(S'(y+1), e) < -
\epsilon''$, then step ~\ref{output} of \IBR$(S', e, \epsilon'')$ will
not output $y$ with probability $\ge 1-\frac1{n^7}$.
\label{lem:interval}
\end{Lemma}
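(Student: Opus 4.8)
The plan is to pin down exactly what it means for step~\ref{output} to output $y$, and then show that event forces an improbably large number of ``wrong-direction'' majority comparisons at the single leaf $(y,y+1)$. Step~\ref{output} outputs $y$ only if, after the $30\log n$ iterations, the walk is sitting at the leaf node $\alpha=(y,y+1)$ with counter $c>10\log n$. The structural observation I would establish first is that the counter $c$ is modified \emph{only} at leaf nodes, and the walk leaves a leaf (moves to its parent) precisely when the increment test fails while $c=0$; hence while $c>0$ the walk is pinned at one leaf. Consequently, reaching $c>10\log n$ at $(y,y+1)$ requires the increment test at $(y,y+1)$ to succeed more than $10\log n$ times (net increments are at most total increments, and all of them occur at this one leaf).

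Next I would bound the probability of a single successful increment at $(y,y+1)$. An increment there requires both \textsc{Compare}$(e,S'(y),\tfrac{10}{\epsilon''^2})>1/2$ and \textsc{Compare}$(S'(y+1),e,\tfrac{10}{\epsilon''^2})>1/2$, i.e.\ a majority among $m=\tfrac{10}{\epsilon''^2}$ fresh comparisons in each call. Under the first hypothesis, $\tilde{p}(S'(y),e)>\epsilon''$ gives $p(e,S'(y))<\tfrac12-\epsilon''$, so ``$e$ wins the majority against $S'(y)$'' is a wrong-direction majority; under the second, $\tilde{p}(S'(y+1),e)<-\epsilon''$ gives $p(S'(y+1),e)<\tfrac12-\epsilon''$, so ``$S'(y+1)$ wins the majority against $e$'' is wrong. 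In either case one of the two required comparisons is a majority in the wrong direction for an element biased by at least $\epsilon''$, and Hoeffding's inequality over $m=\tfrac{10}{\epsilon''^2}$ samples bounds its probability by $e^{-2m\epsilon''^2}=e^{-20}$. Thus each increment at $(y,y+1)$ succeeds with probability at most $e^{-20}$.

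Finally, since every \textsc{Compare} call draws independent samples, the number of successful increments at $(y,y+1)$ across the at most $30\log n$ iterations is stochastically dominated by $\mathrm{Bin}(30\log n,\,e^{-20})$. A standard Chernoff tail bound then yields $\Pr\!\left(\mathrm{Bin}(30\log n,e^{-20})\ge 10\log n\right)\le (3e^{-19})^{10\log n}$, which is far smaller than $n^{-7}$, completing the bound. I expect the main obstacle to be the structural argument of the first paragraph: one must argue carefully that the counter can only grow while the walk is trapped at a single leaf, so that outputting $y$ genuinely demands $\ge 10\log n$ wrong increments at that specific leaf rather than increments spread over several leaves. Once this pinning is justified, the per-increment Hoeffding bound and the binomial tail estimate are routine.
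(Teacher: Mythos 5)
Your proposal is correct and follows essentially the same route as the paper: outputting $y$ forces the counter at the single leaf $(y,y+1)$ to exceed $10\log n$, each increment there succeeds with small probability because one of the two required majorities goes against a bias of at least $\epsilon''$, and a concentration bound on at most $30\log n$ trials finishes the job (the paper uses a $1/20$ per-step bound and a KL-divergence tail over the sojourn length, whereas you use the sharper Hoeffding bound $e^{-20}$ and a binomial tail; both give far less than $n^{-7}$). Your explicit justification that the counter can only accumulate while the walk is pinned at one leaf, so the required increments must all occur at $(y,y+1)$, is a point the paper's proof leaves implicit, and it is worth having spelled out.
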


\begin{proof}
 Recall that step ~\ref{output} of \IBR\ outputs $y$ if we are at the
 leaf node $(y, y+1)$ and the count $c$ is atleast $10 \log n$.
 
 Since either $\tilde{p}(S'(y), e) > \epsilon''$ or
 $\tilde{p}(S'(y+1), e) < - \epsilon''$, when we are at leaf node
 $(y,y+1)$, the count decreases with probability atleast
 $\frac{19}{20}$. Hence the probability that \IBR\ is at (y,y+1) and
 the count is greater than $10 \log n$ is at most $ \sum_{i=10\log
   n}^{30 \log n} e^{-i\cdot D(\frac{i - 10\log n}{2i} ||
   \frac{19}{20})} < 10 \log n e^{-10\log n D(\frac13 ||
   \frac{19}{20})} \le \frac1{n^7}$.
\end{proof}

We now show that for an element of the second set, the random walk
subroutine either places it in correct bin or visits a close anchor.

\begin{Lemma}
 For $\epsilon'' > \epsilon'$, consider an $\epsilon'$-ranked set
 $S'$.  Now consider an element $e$ that is close to an element in
 $S'$ i.e., $\exists g : | \tilde{p}(S'(g), e) | < \epsilon''$. Step
 ~\ref{output} of \IBR$(S',e,\epsilon'')$ will either output the right
 index $y$ such that $\tilde{p}(S'(y),e) < \epsilon''$ and
 $\tilde{p}(S'(y+1),e) > -\epsilon''$ or \IBR\ visits $S'(h)$
 such that $ |\tilde{p}(S'(h), e) | < 2\epsilon''$ with
 probability$\ge 1- \frac1{n^6}$.
\label{lem:close}
\end{Lemma}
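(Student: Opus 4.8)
The plan is to argue by a clean dichotomy on whether \IBR\ ever compares $e$ against a \emph{close} anchor, after first pinning down the one structural fact I need about the position of $e$ inside $S'$. Writing $f(j)=\tilde{p}(S'(j),e)$, I would first show $f$ is approximately non-decreasing: for $j'>j$, since $S'$ is $\epsilon'$-ranked we have $\tilde{p}(S'(j),S'(j'))\le\epsilon'$, so applying Lemma~\ref{lem:trans_tri} to the triple $(S'(j),S'(j'),e)$ gives $f(j)\le\epsilon'+f(j')$, i.e. $f(j')\ge f(j)-\epsilon'$. Since the dummy anchors make $f=-\frac12$ at the left end and $f=+\frac12$ at the right end, the root interval $(\alpha_1,\alpha_2)$ satisfies the \emph{bracketing condition} $f(\alpha_1)\le-2\epsilon''$ and $f(\alpha_2)\ge2\epsilon''$, and a bracketing leaf exists.

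Second, I would record that every anchor \IBR\ ever compares has already entered $Q$. At a non-leaf node the algorithm explicitly adds $\alpha_1$, $\alpha_2$ and the midpoint to $Q$ before comparing, and to sit at a leaf $(j,j+1)$ the walk must first have been at its parent, whose endpoint/midpoint set contains both $j$ and $j+1$; hence the leaf's endpoints are in $Q$ too. This lets me phrase the dichotomy purely through $Q$: \emph{either} some compared anchor $S'(h)$ has $|f(h)|<2\epsilon''$ — then $S'(h)\in Q$ and the second conclusion holds outright — \emph{or} every compared anchor has $|f(h)|\ge2\epsilon''$, so that each Compare call (run $10/(\epsilon'')^2$ times, bias $\ge 2\epsilon''>\epsilon''$) returns the true-bias direction with probability $\ge\frac{19}{20}$.

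On this second branch the descent preserves the bracketing invariant: at a node with $f(\alpha_1)\le-2\epsilon''$, $f(\alpha_2)\ge2\epsilon''$, the midpoint $\mu$ is far, so a correct comparison sends the walk to the left child $(\alpha_1,\mu)$ when $f(\mu)\ge2\epsilon''$ and to the right child $(\mu,\alpha_2)$ when $f(\mu)\le-2\epsilon''$; in both cases the child still brackets $e$. Thus the walk descends to a leaf $(y,y+1)$ with $f(y)\le-2\epsilon''<\epsilon''$ and $f(y+1)\ge2\epsilon''>-\epsilon''$, which is exactly the output guarantee the lemma asks for, and the counter reliably increases only at such a bracketing leaf. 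This is precisely the configuration of Lemma~\ref{lem:far}, so I would reuse its biased-random-walk estimate: starting at depth at most $\log n$ and making a favourable move (descend toward the bracketing leaf, or increment the counter there) with probability $\ge\frac{19}{20}$ per step, having at least $21\log n$ favourable steps out of $30\log n$ fails only with probability $e^{-30\log n\,D(21/30\,\|\,19/20)}\le 1/n^6$; on that event the counter exceeds $10\log n$ and Step~\ref{output} outputs the correct $y$.

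The main obstacle is gluing the two branches into a single $1-1/n^6$ bound without circular conditioning, and the clean point that makes it work is that the far-branch descent can fail to be trustworthy \emph{only} by comparing a close anchor — which immediately lands in $Q$ and delivers the second conclusion anyway. Concretely, I would couple the real run to the idealized run in which every bias-$\ge2\epsilon''$ comparison is correct: the two agree until either a close anchor is compared (second conclusion) or the idealized walk reaches its bracketing leaf and outputs $y$ (first conclusion), and the only way neither conclusion holds is that the walk compares only far anchors yet fewer than $21\log n$ of its $30\log n$ far-comparisons are correct, an event of probability $\le 1/n^6$ by the above large-deviation bound. The one delicate constant is the $2\epsilon''$ threshold defining ``close'': it is exactly what leaves an $\epsilon''$ reliability margin after absorbing the $\epsilon'$ slack coming from $\epsilon'$-ranking (here $\epsilon''>\epsilon'$), so that on the far branch the bracketing invariant is genuinely maintained.
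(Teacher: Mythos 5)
Your proposal follows essentially the same route as the paper's proof: the approximate monotonicity of $f(j)=\tilde{p}(S'(j),e)$ via Lemma~\ref{lem:trans_tri}, a dichotomy on whether the walk ever compares an anchor within $2\epsilon''$ of $e$, and on the all-far branch a $\frac{19}{20}$-biased drift plus the same large-deviation estimate over $30\log n$ steps. Your observation that every compared anchor (including leaf endpoints, via the parent) already sits in $Q$ is a detail the paper glosses over and is genuinely needed for the downstream use of \BR.

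One step of your write-up, however, describes a configuration that cannot occur, and it is worth understanding why. On the all-far branch you conclude that the walk descends to a leaf $(y,y+1)$ with $f(y)\le-2\epsilon''$ and $f(y+1)\ge2\epsilon''$ and outputs there. Under the lemma's hypothesis there is a $g$ with $|f(g)|<\epsilon''$, and by your own monotonicity estimate $g$ must lie strictly inside any interval bracketed with $2\epsilon''$ margins; since a leaf has no interior anchor, no such leaf exists. So your ``first conclusion'' branch is empty: if only far anchors are compared and the comparisons are correct, the shrinking bracket forces the midpoint to become a close anchor before a leaf is reached, which is exactly how the paper argues --- it shows directly that the walk visits a $2\epsilon''$-close anchor with probability $\ge 1-\frac{1}{n^7}$, and handles the ``does not output a wrong interval'' half separately via Lemma~\ref{lem:interval}. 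Your final union bound still covers the true failure event (the vacuous branch only enlarges the union), so the proof is not broken, but the first disjunct of the lemma is not established by your bracketing-leaf scenario; it is a safety clause whose content really comes from Lemma~\ref{lem:interval}, which your coupling does not reproduce and which the paper invokes again when assembling Lemma~\ref{lem:right_bin}.
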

\begin{proof}
 By Lemma~\ref{lem:interval}, step ~\ref{output} of \IBR\ does not
 output a wrong interval with probability $1- \frac{1}{n^{7}}$. Hence
 we just need to show that w.h.p., $e$ visits a close anchor.

Let $i$ be the largest index such that $\tilde{p}(e, S'(i)) >
2\epsilon''$. Then $\forall k < i$, by Lemma~\ref{lem:trans_tri},
$\tilde{p}(e, S(k)) > 2\epsilon'' - \epsilon' > \epsilon''$ .

Let $j$ be the smallest index such that $\tilde{p}(S'(j), e) >
2\epsilon''$. Then $\forall k > j$, by Lemma~\ref{lem:trans_tri},
$\tilde{p}(S'(k),e) > \epsilon''$ .

Therefore for $u<v$ such that $\min(|\tilde{p}(S'(u),e)|,
|\tilde{p}(S'(v),e)| ) \ge 2\epsilon''$ only one of three sets $\{x:
x< u\}$,$\{x: u<x<v\}$ and $\{x: x>v\}$ contains an index $z$ such
that $|\tilde{p}(S'(z), e)| < \epsilon''$.

Let a node $\alpha$ be s.t.  for some $c \in \{\alpha_1, \alpha_2,
\ceil{\frac{\alpha_1+\alpha_2}2}\}$, $|\tilde{p}(S'(c), e)| \le 2\epsilon''$. If 
\IBR\ reaches such a node $\alpha$ then we are done.

So assume that \IBR\ is at a node $\beta$ s.t. $\forall c \in
\{\beta_1, \beta_2, \ceil{\frac{\beta_1+\beta_2}2}\}$,
$|\tilde{p}(S'(c), e)| > 2\epsilon''$.  Note that only one of three
sets $\{x:x < \beta_1 \text{ or } x > \beta_2\}$, $\{x: \beta_1 < x <
\ceil{\frac{\beta_1+\beta_2}2}\}$ and $\{x:
\ceil{\frac{\beta_1+\beta_2}2} < x < \beta_2\}$ contains an index $z$
such that $|\tilde{p}(S'(z), e)| < \epsilon''$ and \IBR\ moves towards
that set with probability $\frac{19}{20}$. Hence the probability that
we never visit an anchor that is less than $2\epsilon''$ away is at
most $ e^{-30\log n D(\frac12 || \frac{19}{20})} \le \frac1{n^{7}}$.
\end{proof}

We now complete the proof by showing that for an element $e$ from the
second set, if $Q$ contains an index $y$ of an anchor that is close to
$e$, \BR\ will output one such index.

\begin{Lemma}
 For $\epsilon'' > \epsilon'$, consider ordered sets $S',Q$
 s.t. $p(S'(Q(i)), S'(Q(j))) > \frac12-\epsilon'$ $\forall i > j$. For
 an element $e$ s.t., $\exists
 g : |\tilde{p}(S'(Q(g)),a)| < 2\epsilon''$,  \BR$(S',Q,a,\epsilon'')$
 will return $y$ such that $|\tilde{p}(S'(Q(y)),a)| < 4\epsilon''$
 with probability $\ge 1- \frac1{n^6}$.
\label{lem:close2}
\end{Lemma}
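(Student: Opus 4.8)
The plan is to condition on a clean event under which every call to \textsc{Compare2} inside \BR\ returns an accurate estimate, and then argue that under accurate estimates the binary search deterministically collapses onto a close anchor. First I would set up the clean event. Each iteration compares $e$ with a single middle anchor $\frac{10\log n}{\epsilon''^2}$ times, so by the Chernoff bound the returned fraction $t$ satisfies $|t - p(e,S'(Q(\mu)))| < \epsilon''$ except with probability $2e^{-20\log n}\le 2/n^{20}$. Since $|Q|\le 60\log n$, the search runs for only $O(\log\log n)$ iterations, so a union bound over all iterations keeps every estimate within $\epsilon''$ of the truth with probability $\ge 1-\frac1{n^6}$. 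From here on I would work on this event, where the remaining analysis is purely deterministic.

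Next I would record the monotonicity of the profile $f(m)\ed\tilde{p}(e,S'(Q(m)))$. Because $Q$ is ordered so that $\tilde{p}(S'(Q(i)),S'(Q(j)))>-\epsilon'$ for all $i>j$, Lemma~\ref{lem:trans_tri} applied to $e$, $S'(Q(j))$, $S'(Q(i))$ yields $f(i)\le f(j)+\epsilon'$ for every $i>j$; that is, $f$ is decreasing in the index up to a single global slack of $\epsilon'$. Using the hypothesis that some $g$ has $|f(g)|<2\epsilon''$, I would define the band $[y_*,y^*]$ with $y^*=\max\{m:f(m)>-2\epsilon''\}$ and $y_*=\min\{m:f(m)<2\epsilon''\}$. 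The witness $g$ lies in both sets, so $y_*\le g\le y^*$ and the band is nonempty; moreover the monotonicity together with $\epsilon'<\epsilon''$ forces $f(m)>-3\epsilon''$ for all $m\le y^*$ and $f(m)<3\epsilon''$ for all $m\ge y_*$, hence $|f(m)|<3\epsilon''$ throughout the band.

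The core of the argument is the invariant that the live interval $[l,h]$ always meets the band, i.e. $l\le y^*$ and $h\ge y_*$. I would check this is preserved by each elimination. At the middle index $\mu$, an accurate estimate with $t-\frac12<-3\epsilon''$ certifies $f(\mu)<-2\epsilon''$; the algorithm then discards the higher-indexed half, and the new endpoint still satisfies $\mu\ge y_*$ because $f(\mu)<2\epsilon''$, while $l\le y^*$ is untouched. The symmetric branch with $t-\frac12>3\epsilon''$ certifies $f(\mu)>2\epsilon''$, so $\mu\le y^*$ and discarding the lower-indexed half preserves $l\le y^*$ while $h\ge y_*$ is untouched. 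Whenever the in-band branch fires ($|t-\frac12|\le 3\epsilon''$) the accuracy bound immediately gives $|f(\mu)|\le|t-\tfrac12|+\epsilon''<4\epsilon''$, a valid output. Otherwise each non-terminating iteration removes a constant fraction of the indices, so after $O(\log|Q|)$ steps the interval collapses to a single index, which by the invariant lies in $[y_*,y^*]$ and hence has $|f|<3\epsilon''<4\epsilon''$; the returned $Q(h)$ is the desired $y$. (Here I read the two branch directions consistently with the text's description, namely that detecting an anchor that is too high eliminates the higher-indexed half, and vice versa.)

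I expect the invariant step to be the only real obstacle. Tracking the single witness $g$ fails, because the $\epsilon'$ sorting slack can place $g$ itself just outside the half that \BR\ retains; replacing $g$ by the whole band $[y_*,y^*]$ and verifying that every elimination removes only indices provably on one side of the band is what makes the bookkeeping go through. The gap between the $2\epsilon''$ hypothesis and the $4\epsilon''$ conclusion is precisely the buffer that absorbs the $\epsilon''$ estimation error together with the $\epsilon'$ ranking slack, so I would keep careful track of how these three scales ($\epsilon'$, $\epsilon''$, and the $3\epsilon''$ decision thresholds) line up.
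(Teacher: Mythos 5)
Your proof is correct and follows essentially the same route as the paper's: both hinge on a case analysis at the current middle anchor with thresholds at $2\epsilon''$ and $4\epsilon''$, so that an in-band estimate certifies $|\tilde{p}(S'(Q(\mu)),e)|<4\epsilon''$ while an out-of-band estimate, combined with the $\epsilon'$ near-monotonicity of $m\mapsto\tilde{p}(e,S'(Q(m)))$, certifies that every anchor on the discarded side is provably far from $e$. Your band invariant $[y_*,y^*]$ simply makes rigorous what the paper states informally as the search ``moving in the correct direction,'' and you correctly read the two elimination branches in the direction consistent with the analysis rather than with the swapped labels in the \BR\ pseudocode.
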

\begin{proof}

At any stage of \BR, there are three possibilities that can happen .
Consider the case when we are comparing $e$ with $S'(Q(i))$.

1. $|\tilde{p}(S'(Q(i)),e) < 2\epsilon''|$. Probability that the
fraction of wins for $e$ is not between $\frac12 - 3\epsilon''$ and 
$\frac12 + 3\epsilon''$ is less than $e^{-\frac{10\log
    n}{\epsilon''^2}\epsilon''^2} \le \frac{1}{n^{10}}$. Hence
\BR\ outputs $Q(i)$.

2. $\tilde{p}(S'(Q(i)),e)>2\epsilon''$. Probability that the fraction
of wins for $e$ is more than $\frac12$ is less than $e^{-\frac{10\log
    n}{\epsilon''^2}\epsilon''^2} \le \frac1{n^10}$.  So
\BR\ will not move right. Also notice that
$\tilde{p}(S'(Q(j)),e) > 2\epsilon'' -\epsilon' > \epsilon''$ $\forall
j > i$.

3. $\tilde{p}(S'(Q(i)), e) > 4\epsilon''$. Probability that the
fraction of wins for $e$ is more than $\frac12 - 3\epsilon''$ is less
than $e^{-\frac{10\log n}{\epsilon''^2}\epsilon''^2} \le
\frac1{n^{10}}$.  Hence \BR\ will move
left. Also notice that $\tilde{p}(S'(Q(j)), e) > 4\epsilon'' -
\epsilon' > \epsilon''$ $\forall j > i$.

We can show similar results for $\tilde{p}(S'(Q(i)), e) <
-2\epsilon''$ and $\tilde{p}(S'(Q(i)), e) < -4\epsilon''$.  Hence if
$|\tilde{p}(S'(Q(i)),e)| < 2 \epsilon''$ then \BR\ outputs $Q(i)$, and
if $2 \epsilon'' <|\tilde{p}(S'(Q(i)),e)| < 4 \epsilon''$ then either
\BR\ outputs $Q(i)$ or moves in the correct direction and if
$|\tilde{p}(S'(Q(i)),e)| > 4 \epsilon''$, then \BR\ moves in the correct
direction.
\end{proof}

\begin{Lemma}
\label{lem:ibr_complexity}
\IBR$(S, e, \epsilon)$ terminates in $\cO(\frac{\log n \log \log
  n}{\epsilon^2})$ comparisons for any set $S$ of size $O(n)$.
\end{Lemma}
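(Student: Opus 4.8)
The plan is to account for the comparisons made by \IBR in its two phases separately---the random-walk phase (the \textbf{repeat} loop) and the final \BR call---and then add them. Since the lemma concerns only running time and not correctness, the probabilistic analysis plays no role; all that matters is the budget passed to each \textsc{Compare} invocation together with how many times \textsc{Compare} is called.

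First I would bound the random-walk phase. The \textbf{repeat} loop runs at most $30 \log n$ times. Within a single iteration, whether we sit at an internal node (where $e$ is compared against $S'(\alpha_1)$, against $S'(\alpha_2)$, and against the midpoint) or at a leaf (where $e$ is compared against the two endpoints), only a constant number of \textsc{Compare} calls are issued, and each such call uses a budget of $\frac{10}{\epsilon^2}$ comparisons. Hence one iteration costs $\cO\Paren{\frac{1}{\epsilon^2}}$ comparisons, and the entire random-walk phase costs $\cO\Paren{\frac{\log n}{\epsilon^2}}$.

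Next I would bound the set $Q$ and then the cost of \BR. Each loop iteration adds at most three indices to $Q$, so after $30 \log n$ iterations $|Q| = \cO(\log n)$. The call \BR$(S,Q,e,2\epsilon)$ performs an ordinary binary search over $Q$, halving its search interval at every step, so it terminates in $\cO(\log |Q|) = \cO(\log \log n)$ iterations. Crucially, each iteration of \BR invokes \textsc{Compare} with the larger budget $\frac{10 \log n}{\epsilon^2}$ (the extra $\log n$ factor supplying the confidence needed to search a length-$\cO(\log n)$ array reliably), so each iteration costs $\cO\Paren{\frac{\log n}{\epsilon^2}}$ and \BR costs $\cO\Paren{\frac{\log n \log \log n}{\epsilon^2}}$ in total.

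Finally I would combine the estimates: adding the random-walk cost $\cO\Paren{\frac{\log n}{\epsilon^2}}$ to the \BR cost $\cO\Paren{\frac{\log n \log \log n}{\epsilon^2}}$, the latter dominates and yields the claimed bound $\cO\Paren{\frac{\log n \log \log n}{\epsilon^2}}$. There is no substantive obstacle here; the only point requiring a moment's care is recognizing that the $\log n$ already baked into \BR's per-comparison budget, multiplied by the $\log \log n$ binary-search depth, is precisely what produces the $\log n \log \log n$ numerator, while the random-walk phase is strictly cheaper and is absorbed into it.
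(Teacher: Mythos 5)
Your proposal is correct and follows essentially the same route as the paper's own proof: bound the $30\log n$ random-walk iterations at $\cO(1/\epsilon^2)$ comparisons each, bound $|Q|$ by $\cO(\log n)$, and charge \BR{} its $\cO(\log\log n)$ binary-search steps at $\cO(\log n/\epsilon^2)$ apiece. Nothing further is needed.
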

\begin{proof}
Step 3 of \IBR\ runs for 30 $\log n$ iterations. In each iteration,
\IBR\ compares $e$ with at most 3 anchors and repeats each comparison for 
$10/\epsilon^2$. So total comparisons in step $3$ is $\cO(\log n/ \epsilon^2)$.
The size of $Q$ is upper bounded by $90 \log n$ and \BR\ does a simple
binary search over $Q$ by repeating each comparison $10\log n/ \epsilon^2$.
Hence total comparisons used by \BR\ is $\cO(\log n \log \log n /\epsilon^2)$  
\end{proof}

Combining Lemmas
~\ref{lem:anchor_ranking},~\ref{lem:interval},~\ref{lem:close},~\ref{lem:close2},~\ref{lem:ibr_complexity}
yields the result.

\subsection*{Proof of Lemma~\ref{lem:real_bin_bound}}

\begin{proof}
Combining Lemmas ~\ref{lem:anchor_ranking},~\ref{lem:middle_bin} and
using union bound, at the end of step ~\ref{step:binning_further}
,w.p. $\ge 1-\frac2{n^3}$, $S'$ is $\epsilon'$-ranked and $\forall j,
e \in B_j$, $\min(\tilde{p}(e, S'(j)),\tilde{p}(S'(j+1), e)) > 5
\epsilon''$. Hence by Lemma~\ref{lem:trans_tri}, $\forall j,k<j, e \in
B_j$, $\tilde{p}(e, S'(k)) > 5\epsilon'' - \epsilon' > 4 \epsilon''$.
Similarly, $\forall j,k>j, e \in B_j$, $\tilde{p}( S'(k),e) >
5\epsilon'' - \epsilon' > 4 \epsilon''$.

If $|B_j| > 0$, then $\tilde{p}(e, S'(k)) > 4 \epsilon''$ for $e \in
B_j, k \le j$, $\tilde{p}(S'(l), e) > 4 \epsilon''$ for $e \in B_j, l
\ge j$. Hence by strong stochastic transitivity, $\tilde{p}(S'(l),
S'(k)) > 4 \epsilon''$ for $l \ge j \ge k$.  Therefore there exists
$k,l$ s.t. $\tilde{p}(S'(l),f) > 0$ $\forall f \in \{S'(y):y \le j\}$,
$\tilde{p}(S'(k), S'(l)) > 0$ and $\tilde{p}(f, S'(k)) > 0$ $\forall f
\in \{S'(y): y \ge j\}$. Now by Lemma ~\ref{lem:bin_bound}, w.p. $\ge
1- \frac1{n^3}$, size of such set $B_j$ is less than $10 (\log
n)^{x+1}$.

Lemma follows by union bound.
\end{proof}

\subsection*{Proof of Theorem~\ref{thm:ranking_upper}}
We first bound the running time of \textsc{Binary-Search-Ranking}
algorithm.
\begin{Theorem}
\label{thm:bsr_complexity}
\textsc{Binary-Search-Ranking} terminates after $O(\frac{n (\log \log
  n)^x}{\epsilon^2} \log n )$ comparisons with probability $\ge
1-\frac1{n^2}$.
\end{Theorem}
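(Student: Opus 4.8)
The plan is to decompose the total comparison count of \textsc{Binary-Search-Ranking} into the contributions of its four substantive operations---ranking the anchors in Step~\ref{step:anchor_ranking}, the coarse binning in Step~\ref{step:binning}, the closeness checks in Step~\ref{step:binning_further}, and the within-bin ranking in Step~\ref{step:bin_ranking}---to bound each separately, and then to combine the associated failure probabilities by a union bound.

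First I would dispatch the cheap terms. For the anchor ranking, $|S'| = \lfloor n/(\log n)^x\rfloor$, so substituting $m=|S'|$, bias $\epsilon'$ and confidence $1/n^6$ into the assumed \RankX\ bound $\cO\Paren{\frac{m}{\epsilon^2}(\log m)^x\log\frac{m}{\delta}}$ and using $\log m = \cO(\log n)$ together with $\log(m n^6)=\cO(\log n)$ gives $\cO\Paren{\frac{n\log n}{\epsilon^2}}$ comparisons, holding with probability $\ge 1-1/n^6$. For the coarse binning, Lemma~\ref{lem:ibr_complexity} bounds a single \IBR\ call by $\cO\Paren{\frac{\log n\log\log n}{\epsilon^2}}$ deterministically, and there are at most $n$ such calls, for a total of $\cO\Paren{\frac{n\log n\log\log n}{\epsilon^2}}$. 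The closeness checks compare each element with at most two anchors, each $\cO(\epsilon^{-2}\log n)$ times, contributing $\cO\Paren{\frac{n\log n}{\epsilon^2}}$.

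The crux is Step~\ref{step:bin_ranking}, where \RankX\ is run on each $B_j$ with confidence $1/n^4$. The naive estimate---multiplying the per-bin size bound $|B_j|\le 10(\log n)^{x+1}$ from Lemma~\ref{lem:real_bin_bound} by the $n/(\log n)^x$ bins---yields $\sum_j|B_j|=\cO(n\log n)$ and hence a spurious extra $\log n$ factor that overshoots the target. The key observation I would exploit is that the two size bounds play different roles: the \emph{per-bin} bound $|B_j|\le 10(\log n)^{x+1}$ is used only inside the polylogarithmic factor of \RankX, where it forces $\log|B_j|=\cO(\log\log n)$ and $\log\frac{|B_j|}{\delta}=\cO(\log n)$, so that the $j$-th call costs $\cO\Paren{\frac{|B_j|}{\epsilon^2}(\log\log n)^x\log n}$; meanwhile the linear factor $\sum_j|B_j|$ is controlled instead by the \emph{partition} property that every element of $S$ lands in at most one $B_j$, giving $\sum_j|B_j|\le n$. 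Summing then yields $\cO\Paren{\frac{n\log n(\log\log n)^x}{\epsilon^2}}$, which dominates the other three terms and gives the claimed bound.

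Finally I would assemble the probability. The within-bin size bound of Lemma~\ref{lem:real_bin_bound} fails with probability $\le 3/n^3$; the \RankX\ complexity guarantee fails with probability $\le 1/n^6$ for the anchors and, after a union bound over the $n/(\log n)^x$ bins, with probability $\le \frac{n}{(\log n)^x}\cdot\frac{1}{n^4}\le 1/n^3$ for the bins. Summing these with the deterministic \IBR\ and closeness-check costs, all stated comparison bounds hold simultaneously with probability $\ge 1-1/n^2$ for $n$ large. The only genuinely delicate point is the bookkeeping in the crux paragraph---keeping the per-bin size bound confined to the polylog factor while invoking the partition bound for the sum---since conflating the two is exactly what produces the extra $\log n$.
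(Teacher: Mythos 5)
Your proposal is correct and follows essentially the same route as the paper: the same step-by-step decomposition, the same invocation of Lemma~\ref{lem:ibr_complexity} and Lemma~\ref{lem:real_bin_bound}, and the same key bookkeeping in Step~\ref{step:bin_ranking} (the per-bin bound $|B_j|\le 10(\log n)^{x+1}$ feeding only the $(\log\log n)^x$ factor while $\sum_j|B_j|\le n$ controls the linear term, exactly as in the paper's displayed sum). The only cosmetic difference is that you cite the \IBR\ cost as deterministic via Lemma~\ref{lem:ibr_complexity} directly, which is slightly cleaner than the paper's attribution to Lemma~\ref{lem:right_bin}.
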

\begin{proof}
Step \ref{step:anchor_ranking} \RankX\ $(S',\epsilon',
\frac{1}{n^6})$ terminates after $\cO(\frac{n}{\epsilon^2} \log n)$
comparisons with probability $\ge 1- \frac1{n^6}$.

By Lemma ~\ref{lem:right_bin}, for each element $e$, the step
\ref{step:binning_element} \IBR$(S', e, \epsilon'')$ terminates after
$\cO(\frac{\log n \log \log n}{\epsilon^2} \log \log n)$
comparisons. Hence step \ref{step:binning} takes at most $\cO( \frac{n
  \log n \log \log n}{\epsilon^2})$ comparisons.

Comparing each element with the anchors in steps
\ref{step:binning_further} takes at most $O(\frac{\log
  n}{\epsilon^2})$ comparisons.

With probability $\ge 1-\frac1{n^4}$ step \ref{step:bin_ranking}
\textsc{Sort-x}$(B_i, \ \epsilon'', \frac1{n^4})$ terminates after
$\cO(|B_i|\frac{(\log |B_i|)^x}{\epsilon^2} \log n)$ comparisons. By
Lemma ~\ref{lem:real_bin_bound}, $|B_i| \le 10 (\log n)^{x+1}$ for all
$i$ w.p. $\ge 1 - \frac3{n^3}$. Hence, w.p. $\ge 1- \frac3{n^3}$,
total comparisons to rank all $B_i$s is at most $ \sum_i \cO(|B_i|
\frac{(\log |B_i|)^x}{\epsilon^2} \log n) \le \sum_i
\cO(\frac{|B_i|\ log n(\log (10 \log n)^{x+1})^x}{\epsilon^2}) =
\cO(\frac{n \log n (\log \log n)^x}{\epsilon^2}).  $

Therefore, by summing comparisons over all steps, with probability
$\ge 1-\frac1{n^2}$ total comparisons is at most $O\Paren{\frac{n \log
    n (\log \log n)^x}{\epsilon^2} }$.
\end{proof}

Now we show that \textsc{Binary-Search-Ranking} outputs an
$\epsilon$-ranking with high probability.
\begin{Theorem}
\label{thm:bsr_accuracy}
\textsc{Binary-Search-Ranking} produces an $\epsilon$-ranking with
probability at least $1 - \frac1{n^2}$.
\end{Theorem}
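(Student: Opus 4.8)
The plan is to assemble the stage-by-stage guarantees already proved for \textsc{Binary-Search-Ranking} by one union bound, condition on all of them, and then argue \emph{deterministically} that the concatenated output $S^o = S'(1), C_1, B_1, S'(2), C_2, B_2, \dots$ is an $\epsilon$-ranking. First I would list the failure events and their probabilities: $S'$ is not $\epsilon'$-ranked (Lemma~\ref{lem:anchor_ranking}, $\le 1/n^6$); some element is placed outside its $5\epsilon''$-nearly-correct bin (Lemma~\ref{lem:right_bin}, $\le 1/n^5$ per element, hence $\le 1/n^4$ after a union bound over the $\le n$ elements); the classification into the $C_j$ and $B_j$ is inaccurate (Lemmas~\ref{lem:boundary_bin} and~\ref{lem:middle_bin}, each $\le 1/n^3$); some bin is too large (Lemma~\ref{lem:real_bin_bound}, $\le 3/n^3$); and some $B_j$ is not $\epsilon''$-ranked (Lemma~\ref{lem:bin_ranking_accuracy}, $\le 1/n^3$). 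Their sum is $O(1/n^3) \le 1/n^2$ for all but the smallest $n$, which already gives the claimed confidence; everything that follows is deterministic.

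Conditioned on these events, I would prove that for any two elements $u,v$ with $u$ appearing before $v$ in $S^o$ one has $\tilde{p}(u,v)\le\epsilon$, which is exactly the $\epsilon$-ranking condition (since $S^o$ runs from lowest to highest, $v$ is ranked above $u$ and we need $p(v,u)\ge\tfrac12-\epsilon$). The organizing device is the \emph{anchor skeleton}: because $S'$ is $\epsilon'$-ranked, $\tilde{p}(S'(j),S'(k))\le\epsilon'$ whenever $j<k$. For each element I would record which block $j$ it lies in and establish two one-sided bounds against the anchors. On the upper side, every $u$ in block $j$ satisfies $\tilde{p}(u,S'(k))\le 7\epsilon''+\epsilon'$ for every $k>j$: trivially for $u=S'(j)$ via the skeleton bound, within $7\epsilon''$ of $S'(j)$ for $u\in C_j$ by Lemma~\ref{lem:boundary_bin} (then pushed to $S'(k)$ by Lemma~\ref{lem:trans_tri}), and strictly negative for $u\in B_j$ by Lemma~\ref{lem:middle_bin}. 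On the lower side, every $v$ in block $k$ satisfies $\tilde{p}(S'(k),v)\le 7\epsilon''$, again by the same three-way split.

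The case analysis then separates same-block from cross-block pairs. Within a block, the worst case is two elements of $C_j$, where $\tilde{p}(u,v)\le\tilde{p}(u,S'(j))+\tilde{p}(S'(j),v)\le 14\epsilon''$ through the common anchor $S'(j)$ via Lemma~\ref{lem:trans_tri}; the $C_j$-before-$B_j$ and intra-$B_j$ cases are strictly smaller. For $u$ in block $j$ and $v$ in block $k$ with $j<k$, I would chain $\tilde{p}(u,v)\le\tilde{p}(u,S'(k))+\tilde{p}(S'(k),v)$ and apply the two one-sided bounds above to get $\tilde{p}(u,v)\le(7\epsilon''+\epsilon')+7\epsilon''=14\epsilon''+\epsilon'$. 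With the calibration $\epsilon''=\epsilon/15$ and $\epsilon'=\epsilon/16$ this equals $\epsilon\bigl(\tfrac{14}{15}+\tfrac1{16}\bigr)=\tfrac{239}{240}\epsilon<\epsilon$, so every pair is within tolerance and $S^o$ is $\epsilon$-ranked.

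The main obstacle is bookkeeping rather than any single hard inequality: one must check that the various slack constants (the $6\epsilon''$ thresholds inside \textsc{Compare2}, the $7\epsilon''$ and $5\epsilon''$ guarantees of Lemmas~\ref{lem:boundary_bin} and~\ref{lem:middle_bin}, and the $\epsilon'$ of the skeleton) compose along every chain to strictly less than $\epsilon$, and in particular that the extremal case is the cross-block pair realizing $14\epsilon''+\epsilon'$ — which is precisely why $\epsilon'=\epsilon/16,\ \epsilon''=\epsilon/15$ were chosen. A secondary subtlety is that each $C_j$ is fed from two adjacent bins (elements of $S_j$ judged close to $S'(j)$, and elements of $S_{j-1}$ judged close to their right anchor $S'(j)$), so the closeness bound $|\tilde{p}(e,S'(j))|<7\epsilon''$ must be read as holding uniformly for all of $C_j$ regardless of provenance; once this is noted, the chaining through the anchor skeleton is uniform across all cases.
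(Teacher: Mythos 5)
Your proposal is correct and follows essentially the same route as the paper's proof: a union bound over the failure events of Lemmas~\ref{lem:anchor_ranking}, \ref{lem:boundary_bin}, \ref{lem:middle_bin}, and~\ref{lem:bin_ranking_accuracy}, followed by a deterministic chaining through the anchor skeleton via Lemma~\ref{lem:trans_tri} to bound every misordered pair by $14\epsilon''+\epsilon' < 15\epsilon'' = \epsilon$. Your write-up is in fact somewhat more careful than the paper's (which compresses the case analysis into a single chain with slightly garbled indices), but the decomposition and the key inequality are identical.
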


\begin{proof}
By combining
Lemmas~\ref{lem:anchor_ranking},~\ref{lem:boundary_bin},~\ref{lem:middle_bin},~\ref{lem:bin_ranking_accuracy}
and using union bound, w.p. $\ge 1-\frac1{n^2}$, at the end of
step~\ref{step:bin_ranking},
\begin{itemize}
\item $S'$ is $\epsilon'$-
\item Each $C_i$ has elements such that $|\tilde{p}(C_i(j), S(i))| <
   7 \epsilon''$ for all $j$.

\item Each $B_i$ has elements such that $\tilde{p}(S'(i), B_i(j)) < -5
  \epsilon''$ and $\tilde{p}(S'(i+1), B_i(j)) > 5 \epsilon''$.

 \item All $B_i$s are $\epsilon''$-ranked.
\end{itemize}
For $j \ge i$, $e \in B_{j-1} \bigcup {S'(j)}\bigcup C_j$, $f \in
{S'(k)} \bigcup C_k \bigcup B_k$, $\tilde{p}(e,f) \le \tilde{p}(e,
S'(i)) + \tilde{p}(S'(i), S'(j)) + \tilde{p}(S'(j), f) \le 7
\epsilon'' + \epsilon' + 7 \epsilon'' < 15 \epsilon'' = \epsilon.  $
Combining the above result with the fact that all $B_i$s are
$\epsilon''$-ranked proves the Lemma.
\end{proof}

Combining Theorems ~\ref{thm:bsr_complexity},~\ref{thm:bsr_accuracy}
yields the result.

\subsection*{Proof Sketch for Theorem~\ref{thm:ranking_lower}}
\begin{proof}[Proof sketch]
Consider a stochastic model where there is an inherent ranking $r$ and
for any two consecutive elements $p(i,i+1) = \frac{1}{2} - 2\epsilon$.
Suppose there is a genie that knows the true ranking $r$ up to the
sets $\Sets{r(2i-1), r(2i)}$ for all $i$ i.e., for each $i$, genie
knows $\{r(2i-1), r(2i)\}$ but it does not know the ranking between
these two elements. Since consecutive elements have $\epsilon(i,i+1) =
2\epsilon > \epsilon$, to find an $\epsilon$-ranking, the genie has to
correctly identify the ranking within all the $n/2$ pairs.  Using
Fano's inequality from information theory, it can be shown that the
genie needs at least $\Omega\Paren{\frac{n}{\epsilon^2} \log
  \frac{n}{\delta}}$ comparisons to identify the ranking of the
consecutive elements with probability $1-\delta$.
\end{proof}

\section{Additional Experiments}
\label{sec:app_exp}

As we mentioned in Section~\ref{sec:experiments}, \textbf{BTM-PAC}
allows comparison of an element with itself.  It is not beneficial
when the goal is to find $\epsilon$-maximum.  So we modify their
algorithm by not allowing such comparisons.  We refer to this
restricted version as \textbf{R-BTM-PAC}.

As seen in figure, performance of \textbf{BTM-PAC} does not increase
by much by restricting the comparisons.

We further reduce the constants in \textbf{R-BTM-PAC}. We change
Equations (7) and (8) in~\cite{YisongT11} to $c_{\delta}(t) =
\sqrt{\frac1{t}\log \frac{n^3 N}{\delta}}$ and $N =
\ceil{\frac1{\epsilon^2} \log \frac{n^3N}{\delta}}$, respectively.

We believe the same guarantees hold even with the updated constants.
We refer to this improved restricted version as \textbf{IR-BTM-PAC}.
Here too we consider the stochastic model where $p(i, j) = 0.6
\forall\ i < j$ and we find $0.05$-maximum with error probability
$\delta = 0.1$.

In Figure~\ref{fig:7} we compare the performance of
\textsc{Knockout} and all variations of \textsc{BTM-PAC}. As the
figure suggests, the performance of \textbf{IR-BTM-PAC} improves a lot
but \textsc{Knockout} still outperforms it significantly.

\begin{figure}[H]
\centering
\includegraphics[scale=0.4]{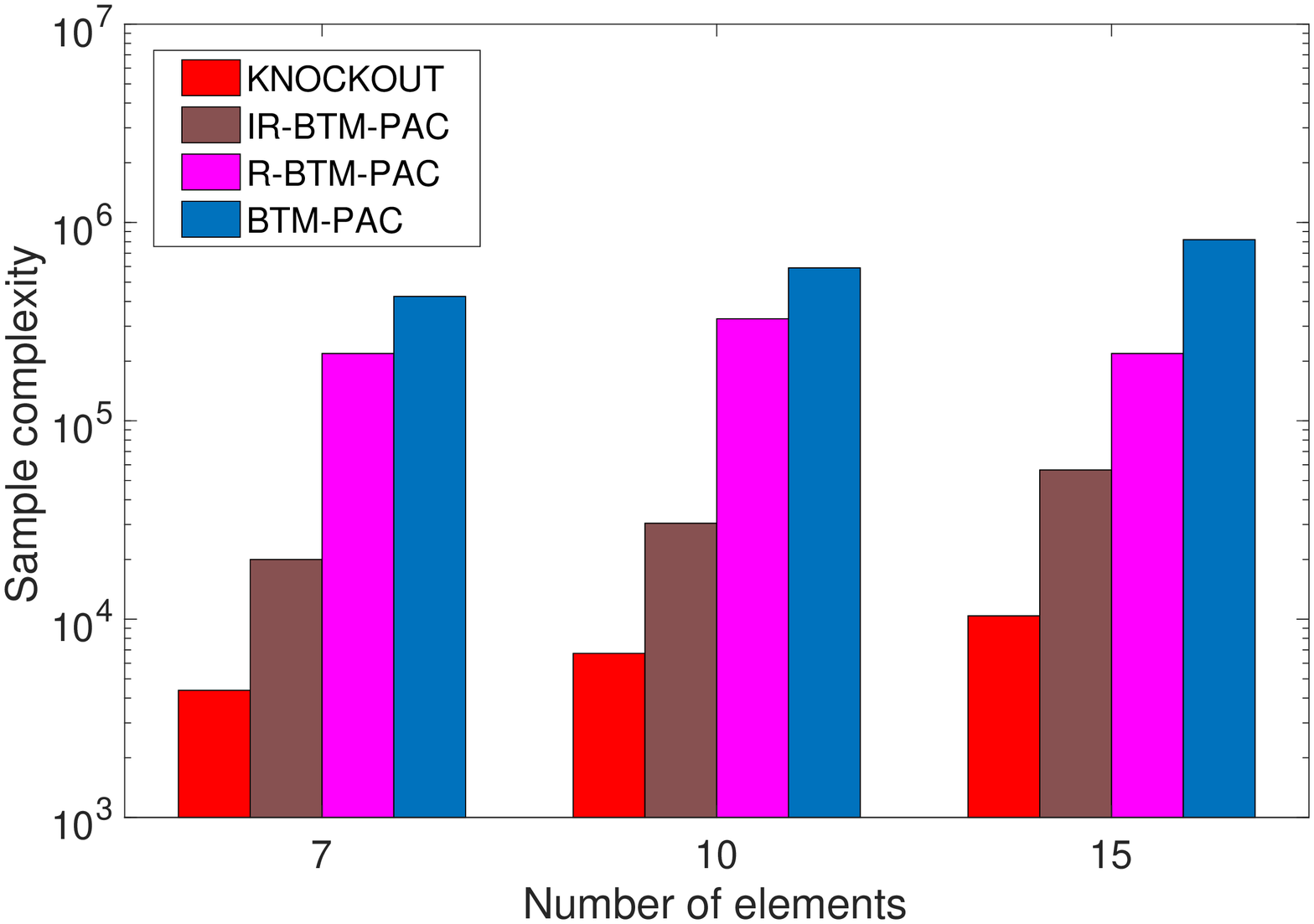}
\caption{Sample complexity comparison of \textsc{Knockout} and variations of BTM-PAC
  for different input sizes, with $\epsilon = 0.05$ and $\delta = 0.1$}
\label{fig:7}
\end{figure}

\begin{figure}[H]
\centering
\includegraphics[scale=0.4]{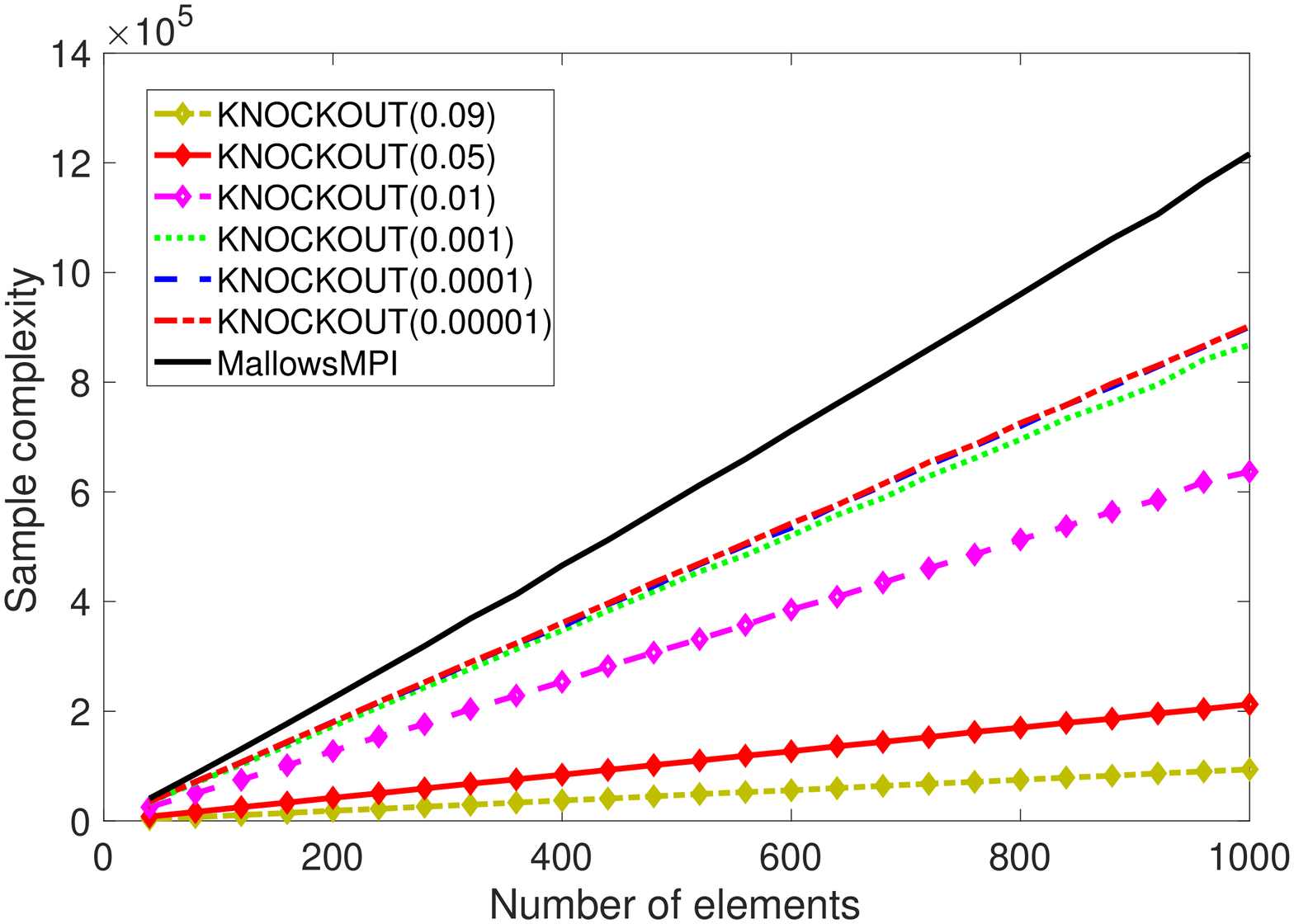}
\caption{Sample complexity of \textsc{Knockout} for different values
  of $n$ and $\epsilon$}
\label{fig:4}
\end{figure}

In Figure~\ref{fig:4}, we consider the stochastic model where
$p(i,j) = 0.6\ \forall i < j$ and find $\epsilon$-maximum for
different values of $\epsilon$. Similar to previous experiments, we
use $\delta = 0.1$. As we can see the number of comparisons increases
almost linearly with $n$. Further the number of comparisons does not
increase significantly even when $\epsilon$ decreases. Also the number
of comparisons seem to be converging as $\epsilon$ goes to
0. \textsc{Knockout} outperforms \textbf{MallowsMPI} even for the very
small $\epsilon$ values. We attribute this to the subroutine
\textsc{Compare} that finds the winner faster when the distance between 
elements are much larger than $\epsilon$.

\begin{figure}[H]
\centering
\includegraphics[scale=0.4]{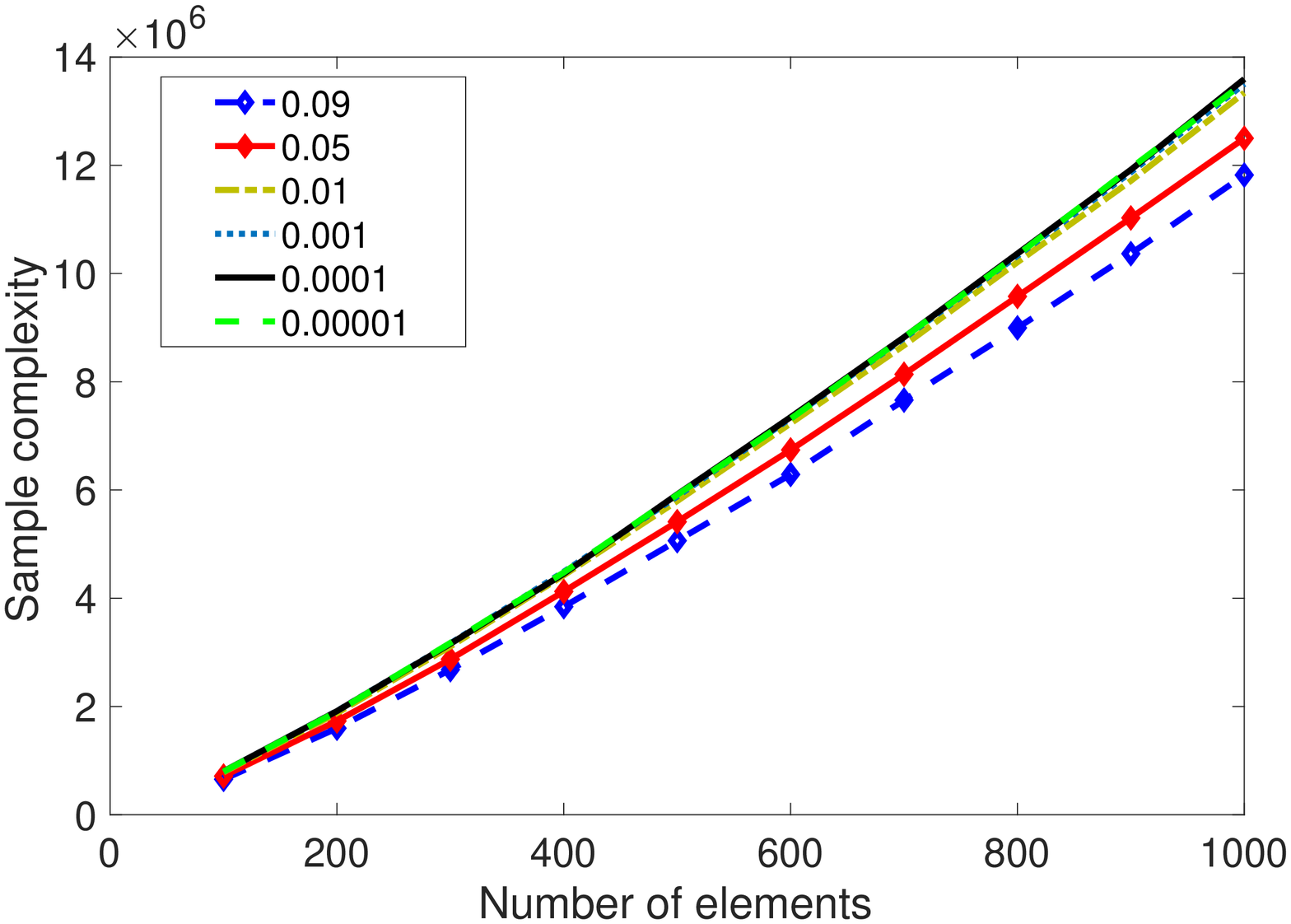}
\caption{Sample complexity of \textsc{Merge-Rank} for $\epsilon$}
\label{fig:5}
\end{figure}

For the stochastic model $p(i,j) = 0.6\ \forall i <j$, we 
run our \textsc{Merge-Rank} algorithm to find $0.05$ ranking 
with $\delta = 0.1$. Figure~\ref{fig:5} shows that sample complexity does not
increase a lot with decreasing $\epsilon$.

\end{document}